\theoremstyle{plain}
\theoremstyle{plain}
\newtheorem{theoremApp}{Theorem}
\theoremstyle{remark}
\theoremstyle{lemma}
\newtheorem{lemma}{Lemma}
\theoremstyle{corollary}
\newcommand{\E}{\mathbb{E}}
\definecolor{darkgreenClj}{rgb}{0.25,.5,0.25}
\definecolor{blueClj}{rgb}{0,0.33,0.66}
\definecolor{redClj}{rgb}{0.66,0.0,0.0}
\definecolor{purpleClj}{rgb}{0.33,0,0.66}
\definecolor{cyanClj}{rgb}{0.0,0.5,0.5}
\definecolor{orangeClj}{rgb}{0.75,0.35,0.0}
\definecolor{grayClj}{rgb}{0.4,0.4,0.4}
\em \color{gray},
\ttfamily\color{blueClj},
\ttfamily\color{redClj},
\ttfamily\color{cyanClj},
\ttfamily\color{purpleClj},
\ttfamily\color{orangeClj},
\ttfamily\color{darkgreenClj},
\titlespacing\section{0pt}{4pt plus 2pt minus 2pt}{0pt plus 2pt minus 0pt}
\titlespacing\subsection{0pt}{4pt plus 2pt minus 2pt}{0pt plus 2pt minus 0pt}
\titlespacing\subsubsection{0pt}{4pt plus 2pt minus 2pt}{0pt plus 2pt minus 0pt}
\newacronym{VAE}{vae}{variational auto-encoder}
\newacronym{AESMC}{aesmc}{auto-encoding sequential Monte Carlo}
\newacronym{IS}{is}{importance sampling}
\newacronym{IWAE}{iwae}{importance-weighted auto-encoder}
\newacronym{PIWAE}{piwae}{partially importance-weighted auto-encoder}
\newacronym{MIWAE}{miwae}{multiply importance-weighted auto-encoder}
\newacronym{CIWAE}{ciwae}{combination importance-weighted auto-encoder}
\newacronym{SMC}{smc}{sequential Monte Carlo}
\newacronym{SSM}{ssm}{state-space model}
\newacronym{SGA}{sga}{stochastic gradient ascent}
\newacronym{SGD}{sgd}{stochastic gradient descent}
\newacronym{ELBO}{elbo}{evidence lower bound}
\newacronym{KL}{kl}{Kullback-Leibler}
\newacronym{LSTM}{lstm}{long short-term memory}
\newacronym{AD}{ad}{automatic differentiation}
\newacronym{SPSA}{spsa}{simultaneous perturbation stochastic approximation}
\newacronym{CG-SPSA}{cg-spsa}{computational graph SPSA}
\newacronym{MML}{mml}{maximum marginal likelihood}
\newacronym{REINFORCE}{reinforce}{REINFORCE}
\newacronym{ADAM}{adam}{ADAM}
\newacronym{GRU}{gru}{gated recurrent unit}
\newacronym{MLP}{mlp}{multilayer perceptron}
\newacronym{MAP}{map}{maximum a-posteriori}
\newacronym{KDE}{kde}{kernel density estimation}
\newacronym{EM}{em}{expectation maximization}
\newacronym{MC}{mc}{Monte Carlo}
\newacronym{ALT}{alt}{alternating \textsc{elbo}s}
\newacronym{SNR}{snr}{signal-to-noise ratio}
\newacronym{VRNN}{vrnn}{Variational Recurrent Neural Network}
\newacronym{LGSSM}{lgssm}{linear Gaussian state space model}
\newacronym[firstplural=recurrent neural networks, plural=RNNs]{RNN}{rnn}{recurrent neural network}
\newacronym{MCM}{mcmc}{Markov Chain Monte Carlo}
\newacronym{RMSE}{rmse}{root mean squared error}
\newcommand{\given}{\lvert}
\DeclareMathOperator{\ELBO}{\acrshort{ELBO}}
\DeclareMathOperator{\SNR}{\acrshort{SNR}}
\newcommand{\KL}[2]{\acrshort{KL}\left(#1 \middle| \middle| #2\right)}
\icmltitlerunning{Tighter Variational Bounds are Not Necessarily Better}
\begin{document}
%
%	\vspace{-20pt}

\twocolumn[
\icmltitle{Tighter Variational Bounds are Not Necessarily Better}

% It is OKAY to include author information, even for blind
% submissions: the style file will automatically remove it for you
% unless you've provided the [accepted] option to the icml2018
% package.

% List of affiliations: The first argument should be a (short)
% identifier you will use later to specify author affiliations
% Academic affiliations should list Department, University, City, Region, Country
% Industry affiliations should list Company, City, Region, Country

% You can specify symbols, otherwise they are numbered in order.
% Ideally, you should not use this facility. Affiliations will be numbered
% in order of appearance and this is the preferred way.
\icmlsetsymbol{equal}{*}

\begin{icmlauthorlist}
	\icmlauthor{Tom Rainforth}{stats}
\icmlauthor{Adam R. Kosiorek}{stats,eng}
\icmlauthor{Tuan Anh Le}{eng}
\icmlauthor{Chris J. Maddison}{stats}
\icmlauthor{Maximilian Igl}{eng}
\icmlauthor{Frank Wood}{ucb}
\icmlauthor{Yee Whye Teh}{stats}
\end{icmlauthorlist}

\icmlaffiliation{stats}{Department of Statistics, University of Oxford}
\icmlaffiliation{eng}{Department of Engineering, University of Oxford}
\icmlaffiliation{ucb}{Department of Computer Science, University of British Columbia}

\icmlcorrespondingauthor{Tom Rainforth}{rainforth@stats.ox.ac.uk}

% You may provide any keywords that you
% find helpful for describing your paper; these are used to populate
% the "keywords" metadata in the PDF but will not be shown in the document
\icmlkeywords{Variational inference, variational autoencoders, importance weighted autoencoders}

\vskip 0.3in
]
\printAffiliationsAndNotice{}	

\setlength{\abovedisplayskip}{2.5pt}
\setlength{\belowdisplayskip}{2.5pt}
\setlength{\abovedisplayshortskip}{2.5pt}
\setlength{\belowdisplayshortskip}{2.5pt}
%
%\vspace{-8pt}

\begin{abstract}
%	\vspace{-8pt}
%Recent work on variational objectives for deep generative models often makes the implicit assumption that tighter evidence bounds (ELBOs) are better objectives.  
\vspace{2pt}
We provide theoretical and empirical evidence that using tighter \glspl{ELBO}
can be
detrimental to the process of learning an inference network by reducing the 
signal-to-noise ratio of the gradient estimator.  Our results call into question common 
implicit assumptions that tighter \glspl{ELBO} are better variational objectives for 
simultaneous model learning and inference amortization schemes.
Based on our insights, we introduce three new algorithms:  the partially importance
weighted auto-encoder (\textsc{piwae}), the multiply
importance weighted auto-encoder (\textsc{miwae}), and the combination importance weighted
auto-encoder (\textsc{ciwae}), each of which includes the standard importance
weighted auto-encoder (\textsc{iwae}) as a special case.  We show that each
can deliver improvements over \textsc{iwae}, even when performance is measured
by the \textsc{iwae} target itself. Furthermore, our results suggest that \textsc{piwae} 
may be able to deliver simultaneous improvements in the training of both
the inference and generative networks.
%, suggesting that 
%further investigation
%is required to assess the relative utility of different approaches.  
% Based on our
%insights, we introduce a new approach for training deep generative models, the 
%partially importance-weighted auto-encoder (\textsc{piwae}), that uses different objectives for
%training the generative and inference networks.  These objectives can be simply estimated
%using a common set of samples and, despite only requiring minimal algorithmic changes
%to previous approaches, can provide substantial improvements to the learning process.
%\vspace{-8pt}
\end{abstract}

%\vspace{-15pt}

% !TEX root =  tb_icml_2018.tex

\section{Introduction}
\label{sec:intro}
Variational bounds provide tractable and state-of-the-art objectives for training deep generative models \citep{kingma2014auto,rezende2014stochastic}.  Typically taking
the form of a lower bound on the intractable model evidence, 
they provide surrogate targets that are more amenable to optimization.
%with this optimization typically carried out using a variant of \gls{SGA}~\citep{robbins1951stochastic,kingma2014adam}.
In general, this optimization
%typically carried out using a variant of \gls{SGA}~\citep{robbins1951stochastic,kingma2014adam},
requires the generation of approximate posterior samples during the model
training and so a number of methods simultaneously learn an \emph{inference
network} alongside the target \emph{generative network}. 

As well as assisting the training process, this inference network is often also of
direct interest itself.  For example, variational bounds are often used to train 
auto-encoders~\citep{bourlard1988auto,hinton1994autoencoders,gregor2016towards,chen2016variational},
for which the inference network forms the encoder.
Variational bounds are also used in amortized and traditional Bayesian inference 
 contexts~\cite{hoffman2013stochastic,ranganath2014black,
 	paige2016inference,le2017inference}, for which the generative model
 is fixed and the inference network is the primary target for the training.

The performance of variational approaches depends upon the choice of
evidence lower bound (\gls{ELBO})
and the formulation of the inference network, with the two often intricately linked to one another;
if the inference network formulation is not sufficiently expressive, this can have 
a knock-on effect on the generative network~\citep{burda2016importance}.  
In choosing the \gls{ELBO}, it is often implicitly
assumed that using tighter \glspl{ELBO} is universally beneficial,
at least whenever this does not in turn lead to higher variance 
gradient estimates.

% and
%that larger values of the \gls{ELBO} indicate
%a better model.  
In this work we question this implicit assumption
by demonstrating that,
although using a tighter \gls{ELBO} is typically beneficial to gradient 
%\begin{wrapfigure}{r}{0.26\textwidth}
%	\centering
%		\vspace{-7pt}
%	\includegraphics[width=0.24\textwidth]{gaussian_gradient.pdf}
%	\vspace{-15pt}
%	\caption{Density of $\nabla_{\phi} \ELBO$ 
%		for different $K$.}
%	\label{fig:kde}
%	\vspace{-20pt}
%\end{wrapfigure}
updates of the 
generative network, it can be detrimental to updates of
 the inference network.
Remarkably, we find that it is possible to simultaneously tighten the bound,
reduce the variance of the gradient updates, and \emph{arbitrarily deteriorate} the
training of the inference network.

Specifically, we present theoretical and empirical evidence that increasing the
number of importance sampling particles, $K$, to tighten the bound in the \gls{IWAE}~\citep{burda2016importance}, degrades the \gls{SNR} of the
gradient estimates for the inference network, inevitably deteriorating the overall learning process.  In short, this behavior manifests because even though increasing
$K$ decreases the standard deviation of the gradient estimates, it decreases
the magnitude of the true gradient faster, such that the \emph{relative variance increases}.
%
%An intuitive demonstration of this effect is given in Figure~\ref{fig:kde}.
%This shows a kernel density estimation for the distribution of the \gls{ELBO} gradient
%estimator with respect to the proposal parameter $A$ for the model discussed in 
%Section~\ref{sec:emp} (with $D=N=1$) and different $K$.
%We see that as we increase $K$, both the amplitude of the gradient
%and the standard deviation of the estimator decrease.
%However, because the former reduces faster, the \gls{SNR} deteriorates.
%This is perhaps easiest
%to appreciate by noting that for the larger values of $K$, there is a roughly equal probability
%of the estimator being positive or negative, such that we are equally likely to increase or decrease
%the parameter value at the next iteration, inevitably leading to poor performance.
%On the other hand, when $K=1$, it is far more likely that the gradient estimator is positive
%than negative and so there is clear drift to the gradient steps.
%Note that using a larger
%$K$ should always give better performance at test time~\citep{cremer2017reinterpreting}
% -- the implication of our
%result is that it may be better to learn the inference network using a smaller $K$ during
%training.

Our results suggest that it may be best to use distinct objectives for learning
the generative and inference networks, or that when using the same target, it should
take into account the needs of both networks.
%As a consequence of these results, we assert that it is not, in general, best to use the
%same objective for learning the generative and inference networks. 
Namely, while tighter bounds are typically better for training
the generative network, looser bounds are often preferable for
training the inference network.
Based on these insights, we introduce three new algorithms: the \gls{PIWAE}, the \gls{MIWAE}, and the \gls{CIWAE}. Each of these include \gls{IWAE} as a special case and are based on the same set of importance weights, 
but use these weights in different ways to ensure a higher SNR for the
inference network.  

We demonstrate that
our new algorithms can produce inference networks more closely representing the true posterior
than~\gls{IWAE}, while matching the
training of the generative network, or potentially even improving it in the case of~\gls{PIWAE}. 
Even when treating the \gls{IWAE} 
objective itself as the measure of performance, all our algorithms are able to demonstrate clear
improvements over \gls{IWAE}.

%
%We therefore propose  the \gls{PIWAE} which uses the
%\gls{IWAE} objective for updating the generative network parameters, but the looser
%\gls{VAE} objective (estimated using the same $K$ samples) to update the inference
%network.  By using a target for the inference network with a higher \gls{SNR}, the
%\gls{PIWAE} provides more effective updates for the inference network, without degrading
%those of the generative network.  We demonstrate that, over multiple iterations,
%this leads to improved training of both networks, and ultimately improved models and 
%amortized inference schemes.

\section{Background and Notation}

Let $x$ be an $\mathcal{X}$-valued random variable defined via a process involving an unobserved $\mathcal{Z}$-valued random variable $z$ with joint density $p_{\theta}(x, z)$. Direct maximum likelihood estimation of $\theta$ is generally intractable if  $p_{\theta}(x, z)$ is a deep generative model due to the marginalization of $z$. A common strategy is to instead optimize a variational lower bound on $\log p_{\theta}(x)$, defined via an auxiliary inference model $q_{\phi}(z \given x)$:
\begin{align}
	\ELBO_{\text{VAE}} &(\theta, \phi, x) \coloneqq \int q_{\phi}(z \given x) \log \frac{p_{\theta}(x, z)} {q_{\phi}(z \given x)} \,\mathrm dz \nonumber \\
	&= \log p_{\theta}(x) - \mathrm{KL}(q_{\phi}(z \given x) || p_{\theta}(z \given x)). \label{eqn:intro/elbo_vae}
\end{align}
Typically, $q_{\phi}$ is parameterized by a neural network, for which the approach is known as the \gls{VAE}~\citep{kingma2014auto,rezende2014stochastic}. Optimization
is performed with \gls{SGA} using unbiased estimates of $\nabla_{\theta, \phi} \ELBO_{\text{VAE}}(\theta, \phi, x)$. If $q_{\phi}$ is reparameterizable, %~\citep{kingma2014auto, rezende2014stochastic},
% then given a sample  $z \sim q_{\phi}(z  \given x)$, the gradients $\nabla_{\theta, \phi} \left(\log p_{\theta}(x, z) - \log q_{\phi}(z \given x)\right)$ can be used for optimization, taking care to note that,  in general, $\nabla_{\phi} z \neq 0$ due to
% the reparameterization.
then given a reparameterized sample  $z \sim q_{\phi}(z  \given x)$, the gradients $\nabla_{\theta, \phi} (\log p_{\theta}(x, z) - \log q_{\phi}(z \given x))$ can be used for the 
optimization.

The \gls{VAE} objective places a harsh penalty on mismatch between $q_{\phi}(z \given x)$ and $p_{\theta}(z \given x)$; optimizing jointly in $\theta, \phi$ can confound improvements in $\log p_{\theta}(x)$ with reductions in the KL \citep{turner2011two}. Thus, research has looked to develop bounds that separate the tightness of the bound from the expressiveness of the class of $q_{\phi}$. For example, the \gls{IWAE} objectives~\citep{burda2016importance}, which we
denote as $\ELBO_{\text{IS}}(\theta, \phi, x)$, are a family of bounds defined by
\begin{align}
	Q_{\text{IS}}(z_{1:K} \given x) &\coloneqq \prod\nolimits_{k = 1}^K q_{\phi}(z_k \given x), \nonumber
	\\
	\hat Z_{\text{IS}}(z_{1:K}, x) &\coloneqq \frac{1}{K}\sum\nolimits_{k = 1}^K \frac{p_{\theta}(x, z_k)}{q_{\phi}(z_k \given x)}, \label{eq:background/q_is_z_is}\\
	\ELBO_{\text{IS}}(\theta, \phi, x) &\coloneqq \int Q_{\text{IS}}(z_{1:K} \given x) \log \hat Z_{\text{IS}}(z_{1:K}, x) \,\mathrm dz_{1:K}
	% \leq \log \int Q_{\text{IS}}(x^{1:K}) \hat Z_{\text{IS}}(x^{1:K}) \,\mathrm dx^{1:K}
	% = \log p_{\theta}(x), \label{eq:background/elbo-is} \\
	\nonumber
 %\label{eq:background/elbo-is}
\end{align}
$\leq \log p_{\theta}(x)$.
The \gls{IWAE} objectives generalize the \gls{VAE} objective ($K=1$ corresponds to the \gls{VAE}) and the bounds become strictly tighter as $K$ increases \cite{burda2016importance}. When the family of $q_{\phi}$ contains the true posteriors, the global optimum parameters $\{\theta^*,\phi^*\}$ are independent of $K$, see e.g.~\cite{le2017auto}. Nonetheless, except for the most trivial models, it is not usually the case that $q_{\phi}$ contains the true posteriors, and \citet{burda2016importance} provide strong empirical evidence that setting
$K>1$ leads to significant empirical gains over the \gls{VAE} in terms of learning the
generative model. 

Optimizing tighter bounds is usually empirically associated with 
better models $p_{\theta}$ in 
terms of marginal likelihood on held out data.
Other related approaches extend this to \gls{SMC} \citep{maddison2017filtering, le2017auto,naesseth2017variational} or change the lower bound that is optimized to reduce the bias \citep{li2016renyi,bamler2017perturbative}.
A second, unrelated, approach is to tighten the bound by improving the expressiveness of $q_{\phi}$ \citep{salimans_markov_2015, tran_variational_2015, rezende_variational_2015, kingma2016improving, maaloe_auxiliary_2016, ranganath2016hierarchical}.
In this work, we focus on the former, algorithmic, approaches to tightening bounds.

\section{Assessing the Signal-to-Noise Ratio of the Gradient Estimators}
\label{sec:snr}
Because it is not feasible to analytically optimize any \gls{ELBO} in complex models,  the effectiveness of any particular choice of \gls{ELBO} is linked
to our ability to numerically solve the resulting optimization problem. This motivates us to examine 
the effect $K$ has on the variance and magnitude of the gradient estimates of \gls{IWAE} for the two networks. More generally, we study \gls{IWAE} gradient estimators constructed as the average of $M$ estimates, each built from $K$ independent particles. We present a result characterizing the asymptotic signal-to-noise ratio in $M$ and $K$. For the standard case of $M=1$, our result shows that the signal-to-noise ratio of the reparameterization gradients of the inference network for the \gls{IWAE} decreases with rate $O(1/\sqrt{K})$.
%Therefore, our results
%suggest that the tighter bound achieved for larger $K$ is actually detrimental to 
%learning the inference network.  

As estimating the $\ELBO$ requires a Monte Carlo estimation of an expectation over $z$, 
we have two sample sizes to tune for the estimate:
the number of samples $M$ used for Monte Carlo estimation of the \gls{ELBO} and the number of importance samples $K$ used
in the bound construction.  Here $M$ does not change the true value of
$\nabla_{\theta, \phi} \ELBO$, only our variance in estimating
it, while changing $K$ changes the \gls{ELBO} itself, with larger $K$ leading to tighter 
bounds~\citep{burda2016importance}.  Presuming that reparameterization
is possible, we can express our gradient estimate in the general form
\begin{align}
\label{eq:grad-est}
\Delta_{M,K} :=  &\frac{1}{M} \sum\nolimits_{m=1}^{M}
\nabla_{\theta, \phi} \log \frac{1}{K} \sum\nolimits_{k=1}^{K} w_{m,k}^{}, \\
\text{where} \, \, w_{m,k}^{} &= \frac{p_{\theta}(z_{m,k},x^{})}{q_{\phi}(z_{m,k} \given x^{})} \, \,
\text{and} \, \,z_{m,k} \iid q_{\phi}(z_{m,k} \given x^{}). \nonumber
\end{align}
%\begin{align}
%\label{eq:grad-est}
%\Delta_{M,K} &:=  \frac{1}{M} \sum_{m=1}^{M}
%\nabla_{\theta, \phi} \log \hat{Z}_{m,K}, \\
%\text{where} \,\, \hat{Z}_{m,K} &=
%\frac{1}{K} \sum_{k=1}^{K} w_{m,k}^{},
% \,\, w_{m,k}^{} = \frac{p_{\theta}(z_{m,k},x^{})}{q_{\phi}(z_{m,k} \given x^{})},
%\end{align}
%and each $z_{m,k} \iid q_{\phi}(z_{m,k} \given x^{})$.  
Thus, for a fixed budget of $T = MK$ samples, we have a family of estimators with the cases $K=1$ and $M=1$
corresponding respectively to the \gls{VAE} and \gls{IWAE} objectives.
We will use ${\Delta}_{M,K} \left(\theta\right)$ to refer to gradient estimates with respect
to $\theta$ and ${\Delta}_{M,K} \left(\phi\right)$ for those with respect to
$\phi$. 

Variance is not always a good barometer for the effectiveness
of a gradient estimation scheme; estimators with small expected values need proportionally smaller variances to 
be estimated accurately. In the case of \gls{IWAE}, when changes in $K$ simultaneously affect both the variance and
expected value, the quality of the estimator for learning can actually \emph{worsen} as the variance decreases.
To see why, consider
the marginal likelihood estimates $\hat{Z}_{m,K}^{} =\sum_{k=1}^{K} w_{m,k}^{}$. Because these become exact (and thus independent of the proposal) as $K\to\infty$, it must be the case that 
$\lim_{K\rightarrow\infty} {\Delta}_{M,K} (\phi) = 0$. 
Thus as $K$ becomes large, the expected value of the gradient
must decrease along with its variance, such that the variance relative to the problem scaling
need not actually improve.

To investigate this formally, we introduce the signal-to-noise-ratio (\textsc{snr}),
defining it to be the absolute value of the expected estimate scaled by its standard deviation:
\begin{align}
\SNR_{M,K} (\theta) &= \left|\E \left[\Delta_{M,K} (\theta)\right]/
\sigma\left[\Delta_{M,K} (\theta)\right]\right|
%\SNR_{M,K} (\phi) = \left|\frac{\E \left[\Delta_{M,K} (\phi)\right]}{
%	\sigma\left[\Delta_{M,K} (\phi)\right]}\right|
\end{align}
where $\sigma [\cdot]$ denotes the standard deviation of a random variable.
The \gls{SNR} is defined separately on each dimension of the parameter vector and
similarly for $\SNR_{M,K} (\phi)$. It provides a measure of the relative accuracy of the gradient estimates. Though a high \gls{SNR}
does not always indicate a good \gls{SGA} scheme (as the target objective itself might be
poorly chosen), a low \gls{SNR} is always problematic as it indicates that
the gradient estimates are dominated by noise: if $\SNR\to0$ then the estimates
become completely random.  We are now ready to state our main
theoretical result:  
$\SNR_{M,K} (\theta) = O(\sqrt{MK})$ and
$\SNR_{M,K} (\phi) = O(\sqrt{M/K})$.

\begin{restatable}{theorem}{snrproof}
	\label{the:snr}
Assume that when $M=K=1$, the expected gradients; the variances of the gradients; and the 
first four moments of  $w_{1,1}$, $\nabla_{\theta} w_{1,1}$, and 
$\nabla_{\phi} w_{1,1}$ are all finite and the variances are
also non-zero.
Then the signal-to-noise ratios of the gradient estimates converge at the following rates
\begin{align}
&\SNR_{M,K} (\theta) = 
	\label{eq:snr_theta} \\
&\sqrt{M}\left|\frac{ \sqrt{K} \; 
	\nabla_{\theta} Z -\frac{1}{2Z\sqrt{K}}\nabla_{\theta} \left(\frac{\textnormal{Var} \left[w_{1,1}\right]}{Z^2}\right)+ O\left(\frac{1}{K^{3/2}}\right) }
{\sqrt{\E \left[w_{1,1}^2\left(\nabla_{\theta} \log w_{1,1}-\nabla_{\theta} \log Z\right)^2\right]} + O\left(\frac{1}{K}\right)}\right| \nonumber \\
\label{eq:snr_phi}
& \SNR_{M,K} (\phi) =\sqrt{M} \left|\frac{
	\nabla_{\phi} \textnormal{Var} \left[w_{1,1}\right] + O\left(\frac{1}{K}\right) }
{2 Z\sqrt{K} \; \sigma\left[\nabla_{\phi} w_{1,1}\right] +O\left(\frac{1}{\sqrt{K}}\right)}\right|
\end{align}
where $Z := p_{\theta}(x)$ is the true marginal likelihood.
\end{restatable}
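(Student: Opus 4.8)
The plan is to reduce the claim to the case $M=1$ and then perform a careful stochastic (delta-method) expansion of the ratio estimator $\overline{\nabla w}/\bar w$ in powers of $1/K$.

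Since $\Delta_{M,K}$ is an empirical mean of $M$ i.i.d.\ copies of $\Delta_{1,K}$, I would first note $\E[\Delta_{M,K}]=\E[\Delta_{1,K}]$ and $\sigma[\Delta_{M,K}]=\sigma[\Delta_{1,K}]/\sqrt{M}$, so that $\SNR_{M,K}=\sqrt{M}\,\SNR_{1,K}$; this produces the $\sqrt{M}$ prefactor and reduces everything to the $K$-dependence of $\Delta_{1,K}$. Write $\bar w:=\tfrac1K\sum_k w_{1,k}$, $\overline{\nabla w}:=\tfrac1K\sum_k \nabla w_{1,k}$, $d:=\bar w-Z$ and $n:=\overline{\nabla w}-\E[\nabla w_{1,1}]$, so that $\Delta_{1,K}=\overline{\nabla w}/\bar w$ with $n,d$ zero-mean, $\E[d^2]=\mathrm{Var}[w_{1,1}]/K$, $\E[n^2]=\mathrm{Var}[\nabla w_{1,1}]/K$ and $\E[nd]=\mathrm{Cov}[w_{1,1},\nabla w_{1,1}]/K$. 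Differentiating $\E[w_{1,1}]=Z$ and $\E[w_{1,1}^2]=\mathrm{Var}[w_{1,1}]+Z^2$ under the integral sign — valid under the stated finite-moment and reparameterisation assumptions — supplies the two identities I will use repeatedly: $\E[\nabla w_{1,1}]=\nabla Z$ and $\mathrm{Cov}[w_{1,1},\nabla w_{1,1}]=\tfrac12\nabla\mathrm{Var}[w_{1,1}]$. For the inference network the first of these reads $\E[\nabla_\phi w_{1,1}]=\nabla_\phi p_\theta(x)=0$, and this vanishing leading moment is exactly what separates the two cases.

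I would then expand $\Delta_{1,K}=\tfrac1Z\bigl(\E[\nabla w_{1,1}]+n\bigr)\bigl(1+d/Z\bigr)^{-1}$, grouping terms by order in $(n,d)$. Taking expectations kills the order-$(n,d)$ terms; the $O(1/K)$ piece of $\E[\Delta_{1,K}]$ is assembled from $\E[d^2]$ and $\E[nd]$ and, via the two identities, collapses to a term proportional to $\nabla_\theta(\mathrm{Var}[w_{1,1}]/Z^2)$ on top of the leading term $\nabla_\theta Z/Z$ (for $\phi$ the leading term is $0$ and the $O(1/K)$ piece is proportional to $\nabla_\phi\mathrm{Var}[w_{1,1}]$). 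For the variance, the linear part of $\Delta_{1,K}$ is $\tfrac1K\sum_k Y_k$ with $Y_k=\tfrac1Z\bigl(\nabla w_{1,k}-\tfrac{\nabla Z}{Z}w_{1,k}\bigr)$ a zero-mean i.i.d.\ average, so $\mathrm{Var}[\Delta_{1,K}]=\tfrac1K\E[Y_1^2]+(\text{higher order})$; using $\nabla_\theta w_{1,1}=w_{1,1}\nabla_\theta\log w_{1,1}$ and $\nabla_\theta Z/Z=\nabla_\theta\log Z$ gives $\E[Y_1^2]=\tfrac{1}{Z^2}\E[w_{1,1}^2(\nabla_\theta\log w_{1,1}-\nabla_\theta\log Z)^2]$, while for $\phi$ it gives $\E[Y_1^2]=\mathrm{Var}[\nabla_\phi w_{1,1}]/Z^2$. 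Forming $|\E[\Delta_{1,K}]|/\sigma[\Delta_{1,K}]$, multiplying numerator and denominator by $Z\sqrt K$ (by $2KZ^2$ in the $\phi$ case), expanding $\sqrt{\mathrm{Var}[\Delta_{1,K}]}$, and restoring the $\sqrt M$ factor yields the two displayed expansions.

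The step that will require real care rather than bookkeeping is controlling the remainder of the ratio expansion. The series $(1+d/Z)^{-1}=\sum_{j\ge0}(-d/Z)^j$ does not converge pointwise, since $\bar w$ (hence $|d|/Z$) can exceed $1$; instead I would truncate after the first few terms and bound the Lagrange-type remainder in $L^1$ and $L^2$ using the assumed finiteness of the first four moments of $w_{1,1}$ and $\nabla w_{1,1}$ together with the $\Theta(1/K)$ decay of $\E[d^2],\E[n^2]$ — the familiar subtlety in delta-method arguments for ratios. I would then have to justify the asymmetry in the orders: the quadratic terms $d^2$ and $nd$ contribute at $O(1/K)$ to $\E[\Delta_{1,K}]$ but only at $O(1/K^2)$ to $\mathrm{Var}[\Delta_{1,K}]$. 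This is because covariances between the linear statistic $\sum_k Y_k$ and quadratic statistics such as $\sum_{i,j}(w_i-Z)(w_j-Z)$ or $\sum_{i,j}(\nabla w_i-\nabla Z)(w_j-Z)$ vanish for zero-mean i.i.d.\ summands except for the $O(K)$ terms in which all indices coincide, so after the $1/K^3$ normalisation they are $O(1/K^2)$; the same counting disposes of the remaining cross-terms. With these bounds the stated error orders — $O(K^{-3/2})$ in the numerators and $O(1/K)$, $O(1/\sqrt K)$ in the denominators — drop out of combining the expansion of $\E[\Delta_{1,K}]$ with that of $\sigma[\Delta_{1,K}]=\sqrt{\mathrm{Var}[\Delta_{1,K}]}$.
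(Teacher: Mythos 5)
Your proposal is correct and arrives at exactly the expansions in \eqref{eq:snr_theta}--\eqref{eq:snr_phi}; the leading terms, the $O(1/K)$ corrections, and the claimed remainder orders all check out. The strategy is fundamentally the same second-order delta-method computation as the paper's, and your index-coincidence counting for why quadratic statistics enter the mean at $O(1/K)$ but the variance only at $O(1/K^2)$ is precisely the content of the paper's Lemma~\ref{the:thirdOrder}. The one genuine organizational difference is the order of operations: the paper Taylor-expands $\log\hat Z_{1,K}$ about $Z$ \emph{first} and then differentiates the expansion, which forces it to differentiate the mean-value remainder $R_1(\hat Z_{1,K})$ and to argue separately that $\tilde Z = Z + o(1)$, $\nabla_{\theta,\phi}\tilde Z = \nabla_{\theta,\phi}Z + o(1)$ and that the moments of $\nabla_{\theta,\phi}\alpha$ are bounded --- the most delicate and least rigorous passage of its proof. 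You instead differentiate first, writing $\Delta_{1,K}=\overline{\nabla w}/\bar w$, and expand the ratio directly, which trades that difficulty for the (correctly identified) problem that $(1+d/Z)^{-1}$ cannot be expanded as a pointwise-convergent series and must be truncated with an $L^1/L^2$-controlled remainder. Your route also makes explicit the two identities $\E[\nabla w_{1,1}]=\nabla Z$ and $\mathrm{Cov}[w_{1,1},\nabla w_{1,1}]=\tfrac12\nabla\mathrm{Var}[w_{1,1}]$ that the paper uses only implicitly when it exchanges $\nabla$ with $\E$ and $\mathrm{Var}$; stating them cleanly, and noting that the first one vanishing for $\phi$ is the entire source of the $\theta$/$\phi$ asymmetry, is arguably a clearer presentation of why the two rates differ.
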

\begin{proof}
    We give an intuitive demonstration
	of the result here and provide a formal proof in Appendix~\ref{sec:proof}.
	The effect of $M$ on the \gls{SNR} follows from
	using the law of large
	numbers on the random variable $\nabla_{\theta, \phi} \log \hat{Z}_{m,K}$.  Namely, the 
	overall expectation is independent of $M$ and
	the variance reduces at a rate $O(1/M)$.  
	The effect of $K$ is more complicated but
	is perhaps most easily seen by noting that~\cite{burda2016importance}
	\begin{align*}
	\nabla_{\theta, \phi} \log \hat{Z}_{m,K}
	= \sum_{k=1}^{K} \frac{w_{m,k}^{}}{\sum_{\ell=1}^{K} w_{m,k}^{}} 
	\nabla_{\theta, \phi} \log \left(w_{m,k}^{}\right),
	\end{align*}
	such that $\nabla_{\theta, \phi} \log \hat{Z}_{m,K}$ can be interpreted as a self-normalized importance sampling
	estimate.  We can, therefore, invoke the known result (see e.g.~\citet{hesterberg1988advances}) 
	that the bias of a self-normalized
	importance sampler converges at a rate $O(1/K)$ and the standard deviation at
	a rate $O(1/\sqrt{K})$.  We thus see that the \gls{SNR} converges at a rate $O((1/K)/(1/\sqrt{K}))
	=O(1/\sqrt{K})$ if the asymptotic gradient is $0$ and $O((1)/(1/\sqrt{K}))=O(\sqrt{K})$
	otherwise, giving the convergence rates in the $\phi$ and $\theta$ cases respectively.
	\vspace{-8pt}
\end{proof}

The implication of these rates is that increasing $M$ is monotonically beneficial to
the \gls{SNR} for both $\theta$ and $\phi$, but that increasing $K$ is beneficial to the
former and detrimental to the latter.  We emphasize that this means the \gls{SNR} for the \gls{IWAE} inference network
gets worse as we increase $K$: this is not just an opportunity cost
from the fact that we could have increased $M$ instead, increasing the total number of samples
used in the estimator actually worsens the \gls{SNR}!

\subsection{Asymptotic Direction}
\label{sec:dir}

An important point of note is that the dependence of the true inference
network gradients becomes independent of $K$ as $K$ becomes large.
Namely, because we have as an intermediary result from
deriving the \glspl{SNR} that
\begin{align}
\label{eq:expt}
\hspace{-3pt}
\E \left[\Delta_{M,K} (\phi)\right] = -\frac{\nabla_{\phi} \text{Var}\left[w_{1,1}\right]}{2K Z^2} 
+O\left(\frac{1}{K^2}\right),
\end{align}
we see that expected gradient points in the direction of
$-\nabla_{\phi} \text{Var}\left[w_{1,1}\right]$ as $K \to \infty$.  
%Furthermore,
%we also have the following result detailing the rate of this convergence.
%\begin{restatable}{corollary}{dirproof}
%Assuming the setup from Theorem~\ref{the:snr} then
%\begin{align}
%\label{eq:dir_conv}
%\left\lVert \frac{\E \left[\Delta_{M,K} (\phi)\right]}
%{\left\lVert\E \left[\Delta_{M,K} (\phi)\right]\right\rVert_2} 
%-\frac{-\nabla_{\phi} \textnormal{Var}\left[w_{1,1}\right]}
%{\left\lVert \nabla_{\phi} \textnormal{Var}\left[w_{1,1}\right]\right\rVert_2}
%\right\rVert_2 = O\left(\frac{1}{K}\right).
%\end{align}
%	\end{restatable}
%\begin{proof}
%The result follows straightforwardly by substituting the right hand side 
%of~\eqref{eq:expt} for $\E \left[\Delta_{M,K} (\phi)\right]$ as shown in
%Appendix~\ref{sec:dirproof}.
%\end{proof}
%The direction of the gradient converges to as $K\to\infty$ 
This direction is rather interesting: it implies that as $K\to\infty$, the optimal $\phi$
is that which minimizes the variance of the weights.  This is well known to be 
the optimal importance sampling
distribution in terms of estimating the marginal likelihood~\citep{mcbook}.
Given that the role of the inference network during training is to estimate the
marginal likelihood, this is thus arguably exactly what we want to optimize for.  
As such, this result, which complements those
of~\cite{cremer2017reinterpreting}, suggests that increasing 
$K$ provides a preferable target in terms of the direction of the
true inference network gradients.  We thus see that there is a trade-off with the fact that increasing $K$ also diminishes
the \gls{SNR}, reducing the estimates to pure noise if $K$ is
set too high.   In the absence of other factors, there may thus be a ``sweet-spot'' for 
setting $K$.

\begin{figure*}[t]
	\centering
	\begin{subfigure}[b]{0.4\textwidth}
		\centering
		\includegraphics[width=\textwidth]{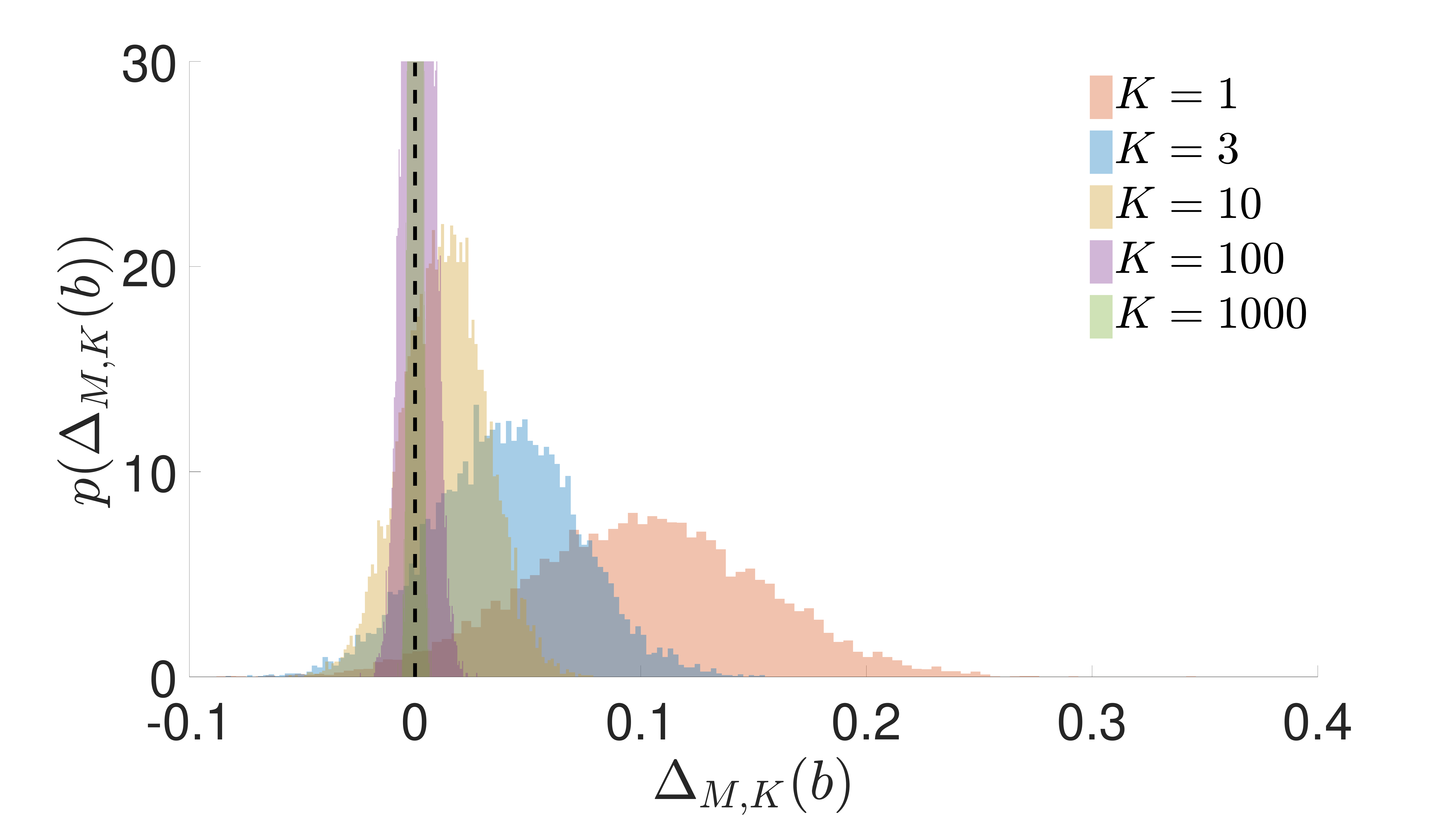} %\vspace{-2pt}
		\caption{ \gls{IWAE} inference network gradient estimates \label{fig:snr/b_hist_iwae}}
	\end{subfigure} ~~~~~~~~~~~~~
	%	\begin{subfigure}[b]{0.49\textwidth}
	%		\centering
	%		\includegraphics[width=\textwidth]{b_hist_VAE}
	%		\caption{\gls{VAE} inference network gradient estimates \label{fig:snr/b_hist_vae}}
	%	\end{subfigure}\\
	\begin{subfigure}[b]{0.4\textwidth}
		\centering
		\includegraphics[width=\textwidth]{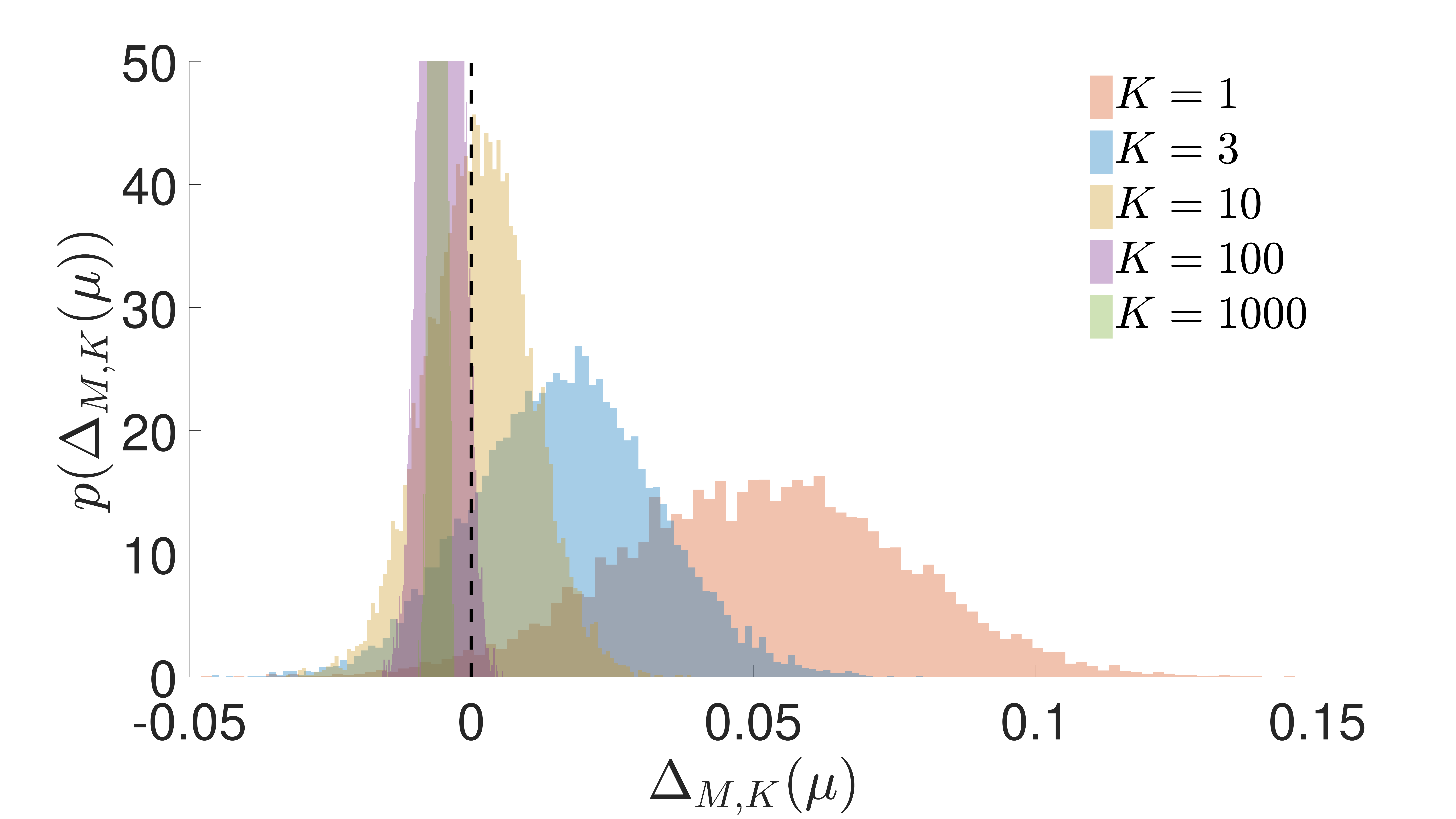} %\vspace{-2pt}
		\caption{ \gls{IWAE} generative network gradient estimates \label{fig:snr/mu_hist_iwae}}
	\end{subfigure}\vspace{-6pt}
	%	\begin{subfigure}[b]{0.49\textwidth}
	%		\centering
	%		\includegraphics[width=\textwidth]{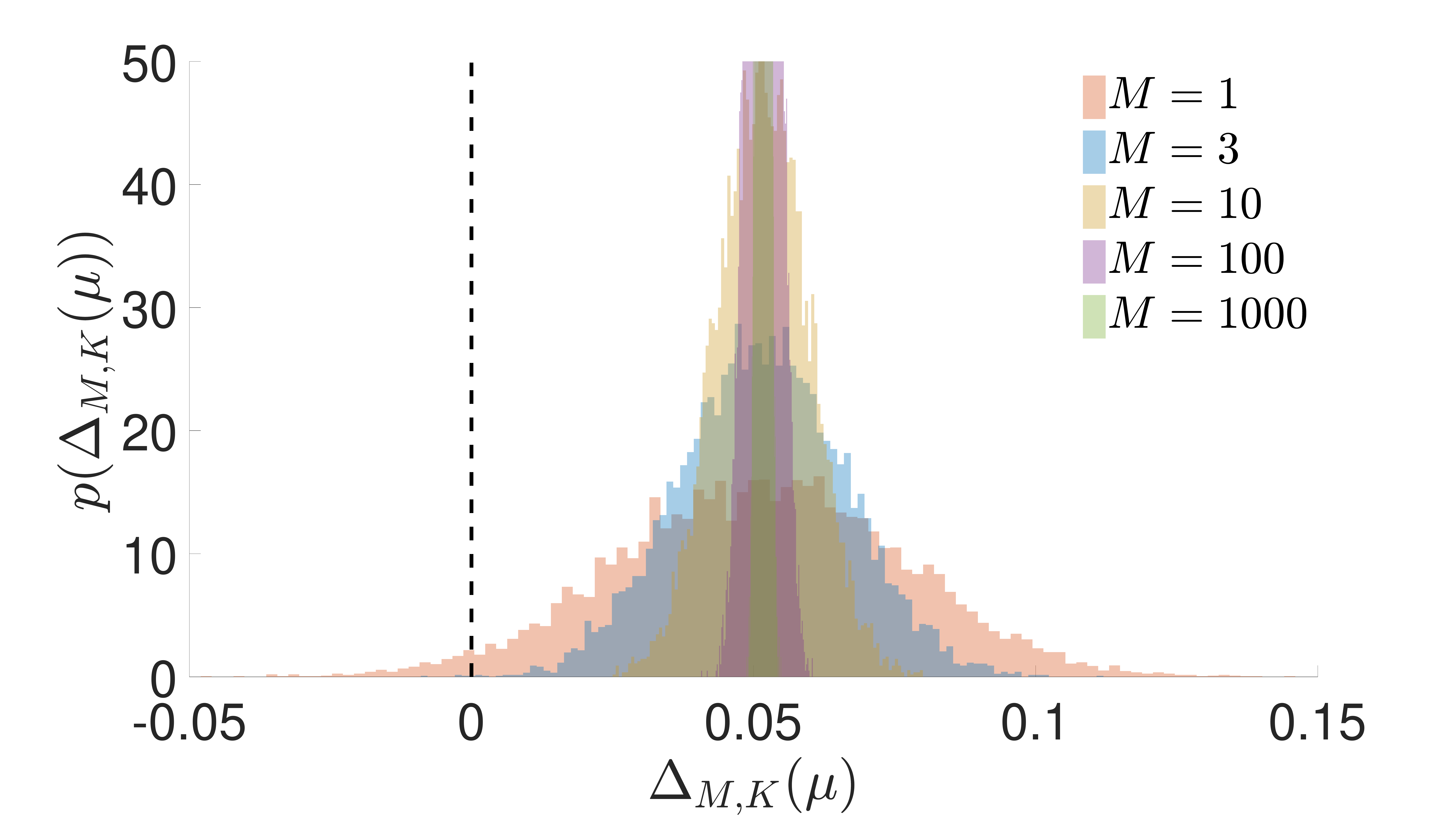}
	%		\caption{\gls{VAE} generative network gradient estimates \label{fig:snr/mu_hist_vae}}
	%	\end{subfigure}
	\caption{Histograms of gradient estimates $\Delta_{M,K}$ for the generative network and 
		the inference network using the \gls{IWAE} ($M=1$)
		objective with different values of $K$.
			\vspace{-14pt}
		\label{fig:snr/hists}}
\end{figure*}

\subsection{Multiple Data Points}
\label{sec:multi}

%Not only does this show that, for a given budget $T=MK$, then
%it is (at least asymptotically) better to increase $M$ instead of $K$
% to improve $\SNR_{M,K} (\phi)$, but also that
%increasing our overall budget for a fixed $M$ actually diminishes $\SNR_{M,K} (\phi)$.

	Typically when training deep generative models, one does not optimize a single \gls{ELBO}
	but instead its average over multiple data points, i.e.
	\begin{align}
	\label{eq:J}
	\mathcal J(\theta, \phi) &:= 
	\frac{1}{N} \sum\nolimits_{n = 1}^N \ELBO_{\text{IS}} (\theta, \phi, x^{(n)}).
	\end{align}
	Our results extend to this setting because the $z$ are drawn independently for each $x^{(n)}$, so
	\begin{align}
	\hspace{-4pt}\E\left[\frac{1}{N} \sum\nolimits_{n=1}^{N} \Delta_{M,K}^{(n)}\right]\hspace{-2pt} &=\hspace{-2pt}\frac{1}{N}\sum\nolimits_{n=1}^{N}
	\E\left[\Delta_{M,K}^{(n)}\right], \displaybreak[0] \\
	\hspace{-4pt}\text{Var}\left[\frac{1}{N} \sum\nolimits_{n=1}^{N} \Delta_{M,K}^{(n)}\right]\hspace{-2pt} &=\hspace{-2pt}\frac{1}{N^2}\sum\nolimits_{n=1}^{N}
	\text{Var}\left[\Delta_{M,K}^{(n)}\right]\hspace{-2pt}.
	\end{align}
	We thus also see that if we are using mini-batches such that $N$ is a chosen parameter and the $x^{(n)}$ are
	drawn from the empirical data distribution, then 
	the \glspl{SNR} of $\bar{\Delta}_{N,M,K} := \frac{1}{N} \sum_{n=1}^{N} \Delta_{M,K}^{(n)}$ scales as $\sqrt{N}$, i.e.
$\SNR_{N,M,K} (\theta) = O(\sqrt{NMK})$ and $\SNR_{N,M,K} (\phi) 
= O(\sqrt{NM/K})$.  Therefore increasing $N$ has the same ubiquitous benefit as increasing $M$.
In the rest of the paper, we will implicitly be considering the \glspl{SNR} for $\bar{\Delta}_{N,M,K}$, but
will omit the dependency on $N$ to simplify the notation.

\section{Empirical Confirmation}
\label{sec:emp}

Our convergence results hold exactly in relation to $M$ (and $N$) but are only 
asymptotic in $K$ due to the higher order terms.  Therefore their applicability should be viewed with a 
healthy degree of skepticism in the small $K$ regime.
 With this in mind, we now present empirical support for our theoretical results 
and test how well they hold in the small $K$ regime
 using a simple Gaussian model, for which we can analytically calculate the ground truth.
 
 Consider a family of generative models with $\mathbb R^D$--valued latent variables $z$ and observed variables $x$:
\begin{align}
    z \sim \mathcal{N}(z;\mu, I), &&
    x \given z \sim \mathcal{N}(x; z, I),
\end{align}
which is parameterized by $\theta := \mu$.
Let the inference network be parameterized by $\phi = (A, b), \; A \in \mathbb R^{D \times D}, \; b \in \mathbb R^D$ where $q_{\phi}(z \given x) = \mathcal{N}(z; Ax + b, \frac{2}{3}I )$.
Given a dataset $(x^{(n)})_{n = 1}^N$, we can analytically calculate the optimum of our target $\mathcal J(\theta, \phi)$ as explained in Appendix~\ref{sec:optGauss},
giving $\theta^* := \mu^* = \frac{1}{N} \sum_{n = 1}^N x^{(n)}$ and 
$\phi^* := (A^*, b^*)$, where $A^* = I / 2$ and $b^* = \mu^* / 2$. Though this will not be the
case in general, for
this particular problem, the optimal proposal is independent of $K$. 
%This  will
%not be the case in general unless the family of possible $q_{\phi}$ contains the the true
%posteriors 
%$p_{\theta}(z|x^{(n)})$.  
%Further, even for this problem, 
However, the expected gradients for the inference network still change with $K$.

\begin{figure*}[t]
	\centering
	\begin{subfigure}[b]{0.4\textwidth}
		\centering
		\includegraphics[width=\textwidth]{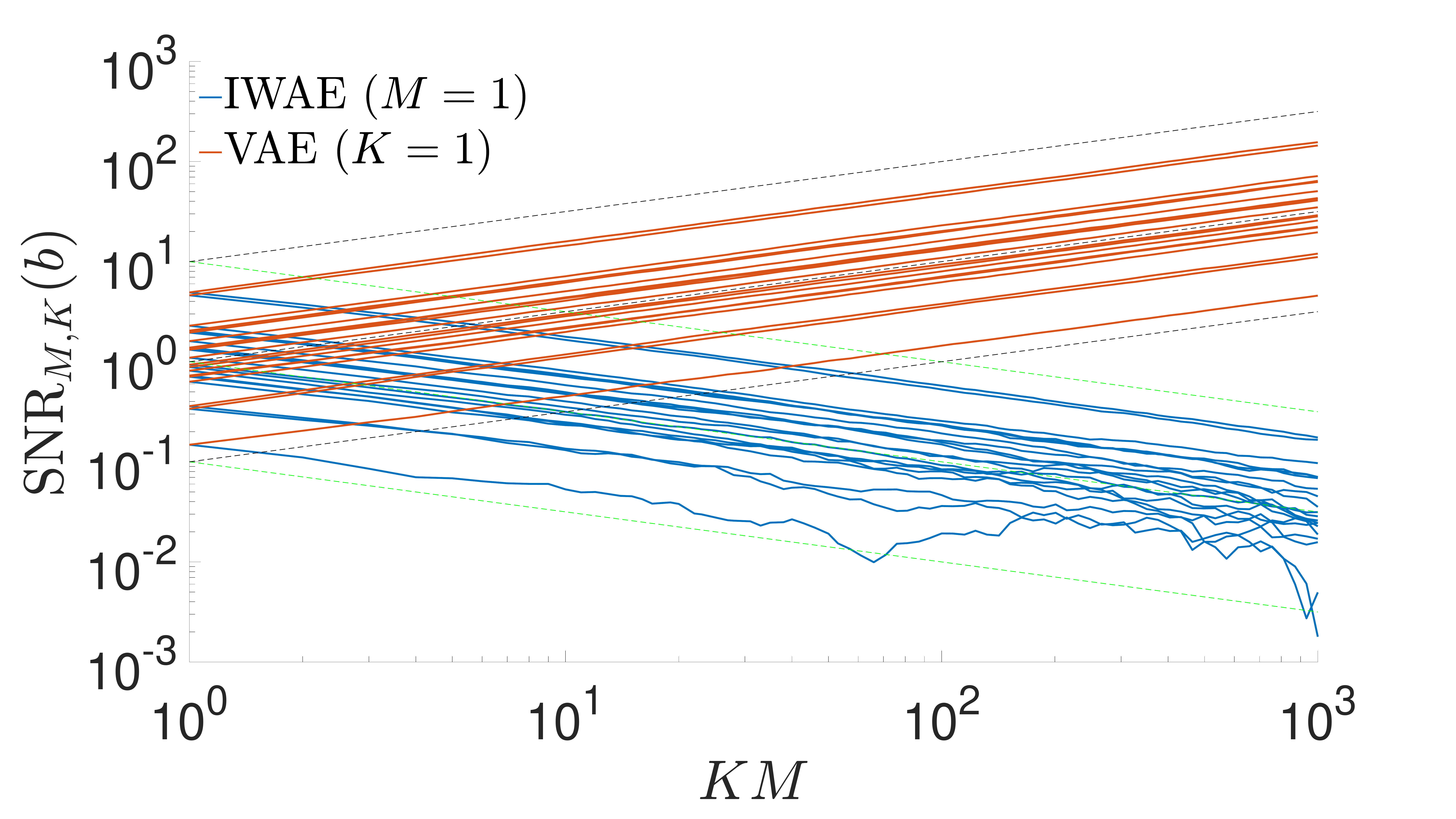}\vspace{-4pt}
		\caption{\vspace{-6pt} Convergence of \gls{SNR} for inference network \label{fig:snr/b}}
	\end{subfigure}
~~~~~~~~~~~~~~
	\begin{subfigure}[b]{0.4\textwidth}
		\centering
		\includegraphics[width=\textwidth]{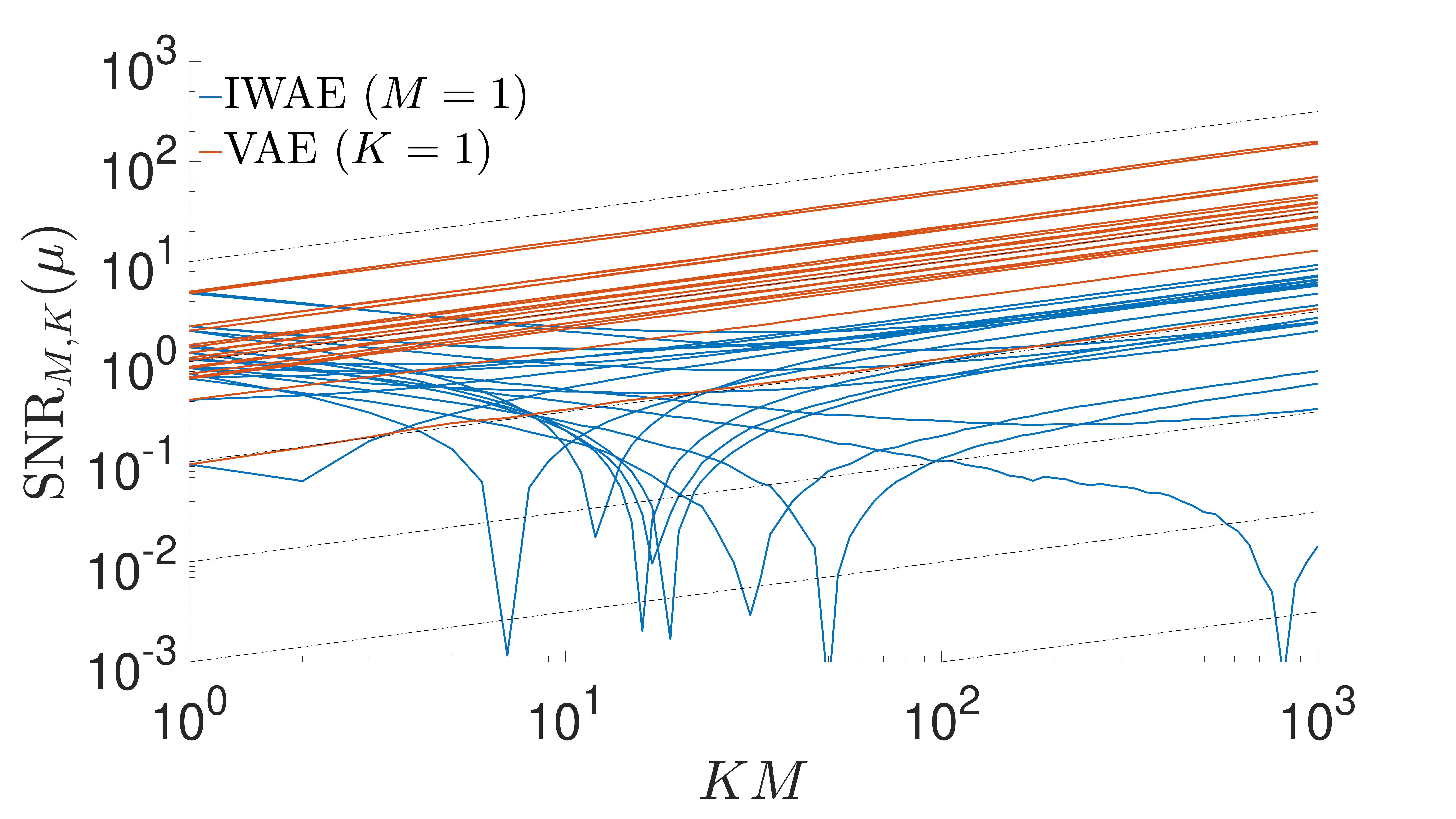}\vspace{-4pt}
		\caption{\vspace{-6pt} Convergence of \gls{SNR} for generative network\label{fig:snr/mu}}
	\end{subfigure}
	\caption{Convergence of signal-to-noise ratios of gradient estimates with increasing $M$ and $K$.
		Different lines correspond to different
		dimensions of the parameter vectors.
		Shown in blue is the \gls{IWAE} where we keep $M=1$ fixed and increase $K$.  
		Shown in red is the \gls{VAE} where $K=1$ is fixed and we increase $M$. 
		The black and green dashed lines show the expected convergence rates from our theoretical results, 
		representing gradients of $1/2$ and $-1/2$ respectively.  
		\vspace{-12pt}
		\label{fig:snr/K_conv}}
\end{figure*}

To conduct our investigation, we randomly generated a synthetic dataset from the model with $D=20$
dimensions, $N=1024$ data points, and a true model parameter value $\mu_{\text{true}}$ that was itself 
randomly generated from a unit Gaussian, i.e. $\mu_{\text{true}} \sim \mathcal{N}(\mu_{\text{true}} ;0,I)$.
We then considered the gradient at a random point in the parameter space close to optimum (we also 
	consider a point far from the optimum in 
	Appendix~\ref{sec:hv}). Namely
each dimension of each parameter was randomly offset from its optimum value using a zero-mean
Gaussian with standard deviation $0.01$.  We then calculated empirical estimates of the \gls{ELBO}
gradients for \gls{IWAE}, where $M=1$ is held fixed and we increase $K$, and
for \gls{VAE}, where $K=1$ is held fixed and we increase $M$.  In all cases we 
calculated $10^4$ such estimates and used these samples to provide empirical estimates for, amongst other things, the
mean and standard deviation of the estimator, and thereby an empirical estimate for the \gls{SNR}.
%For the inference network, we predominantly focused on investigating the gradients of $b$.

We start by examining the qualitative behavior of the different gradient estimators as $K$ increases as
shown in Figure~\ref{fig:snr/hists}.  This shows histograms of the \gls{IWAE}  gradient estimators
for a single parameter of the inference network (left) and generative network (right).
%We see that,
%as expected, the expectation of the gradients does not change with $M$:
%the only effect of increasing $M$ is to
%reduce the variance.  The effect of increasing $K$ is quite different.
We first see in Figure~\ref{fig:snr/b_hist_iwae} that as $K$ increases, both the magnitude and the standard
deviation of the estimator decrease for the inference network, 
with the former decreasing faster.  This
matches the qualitative behavior of our theoretical result, with the \gls{SNR} ratio 
diminishing as $K$ increases.
In particular, the probability that the gradient is positive or negative 
becomes roughly equal for larger
values of $K$, meaning the optimizer is equally likely to increase as decrease the 
inference network parameters at the next iteration.
By contrast, for the generative network, \gls{IWAE} converges towards a 
non-zero gradient, such that,
even though the \gls{SNR} initially decreases with $K$, it then rises again, with a very
clear gradient signal for $K=1000$.

To provide a more rigorous analysis, we next directly examine the convergence of 
the \gls{SNR}.
Figure~\ref{fig:snr/K_conv} shows the convergence of the estimators with increasing $M$ and $K$.
The observed rates for the inference network (Figure~\ref{fig:snr/b})
correspond to our theoretical results, with
the suggested rates observed all the way back to $K=M=1$.  As expected, we see 
that as $M$ increases, so does $\SNR_{M,K}(b)$, but as $K$ increases, $\SNR_{M,K}(b)$ reduces.
%We thus find that in this scenario, $K$ should be kept as low as possible and
%our computational budget should be focused exclusively on increasing $M$ (ignoring the fact
%that we can also vary the number of epochs etc instead).

In Figure~\ref{fig:snr/mu}, we see that
the theoretical convergence for $\SNR_{M,K}(\mu)$
is again observed exactly for variations in $M$, 
but a more unusual behavior is seen for variations in $K$,
where the \gls{SNR} initially decreases before starting to increase again for large enough $K$, 
eventually exhibiting behavior consistent with the theoretical result for large enough $K$.
The driving factor for this is that here
$\E [\Delta_{M,\infty}(\mu)]$  has a smaller magnitude than (and opposite sign to)
$\E [\Delta_{M,1}(\mu)]$  (see Figure~\ref{fig:snr/mu_hist_iwae}).  If we think of the estimators
for all values of $K$ as biased estimates for
$\E [\Delta_{M,\infty}(\mu)]$, we see from our theoretical results that this bias decreases faster 
than the standard deviation.  Consequently, while the magnitude of this bias remains large compared to
$\E [\Delta_{M,\infty}(\mu)]$, it is the predominant component in the true gradient and
we see similar \gls{SNR} behavior as in the inference network.

% $\SNR_{M,\infty}(\mu)$ typically has a smaller magnitude (and often opposite sign)
% to $\SNR_{M,1}(\mu)$.
Note that this does not mean that the 
estimates are getting worse for the generative network.
As we increase $K$ our bound is getting tighter and our estimates closer to
the true gradient for the target that we 
actually want to optimize $\nabla_{\mu} \log Z$.  See 
Appendix~\ref{sec:app:rmse} for more details.
As we previously discussed, it is also the case that increasing $K$ could be beneficial for the inference network
even if it reduces the \gls{SNR} by improving the direction of the expected gradient.
However, as we will now show, the \gls{SNR} is, for this problem,
%we will return to consider a metric that examines this
%direction in Figure~\ref{fig:snr/extra_end}, where we will see that the \gls{SNR} seems to be 
the dominant effect for the inference network.
%The question of what the optimal target is for updating the inference network 
%is somewhat more subjective than for the generative network,
%even if we presume we can
%evaluate the gradients exactly.  We know that
%the optimum is the same if the family of $q_{\phi}$ contains the posterior, but this may not always be
%the case, while the optimization surface may vary away from the optimum.  We leave investigation of
%this question for future work.  

\begin{figure*}[t]
	\centering
	\begin{subfigure}[b]{0.4\textwidth}
		\centering
		\includegraphics[width=\textwidth]{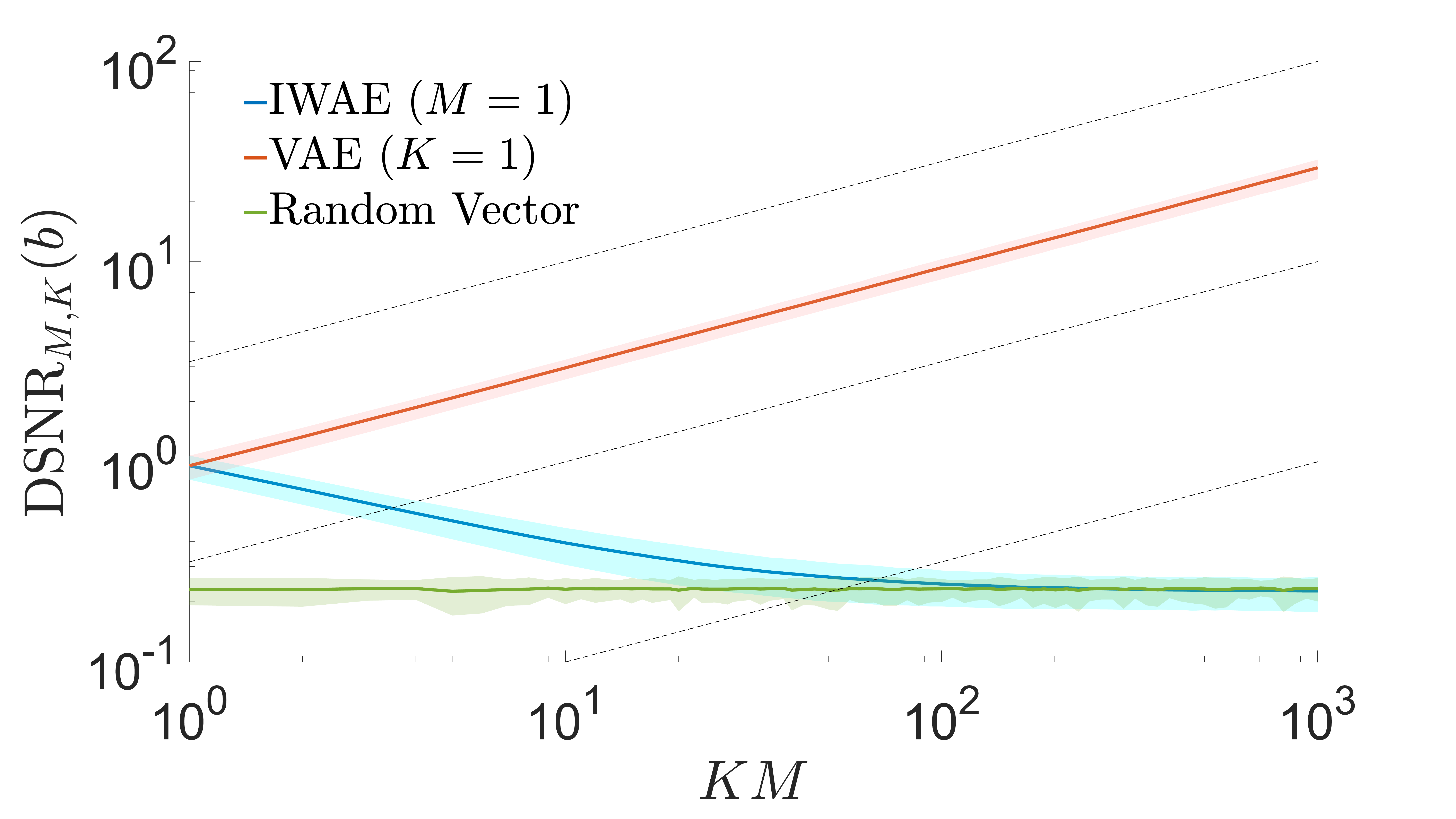}\vspace{-2pt}
		\caption{Convergence of \textsc{dsnr} for inference network\label{fig:snr/snr_dir}}
	\end{subfigure}
~~~~~~~~~~~~~~
	\begin{subfigure}[b]{0.4\textwidth}
		\centering
		\includegraphics[width=\textwidth]{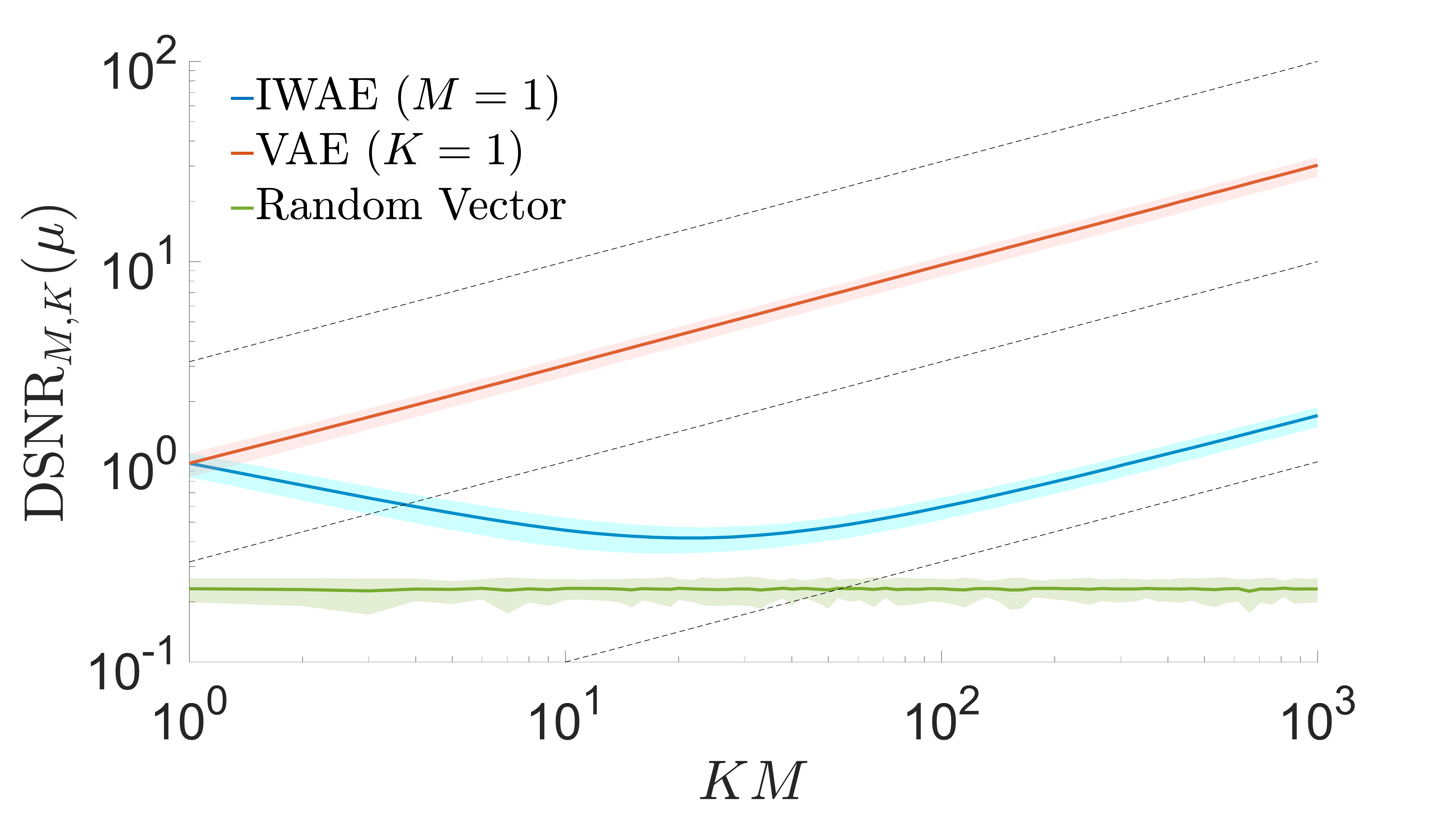}\vspace{-2pt}
		\caption{Convergence of \textsc{dsnr} for generative network\label{fig:snr/snr_dir_mu}}
	\end{subfigure}\vspace{-6pt}
	\caption{Convergence of the directional \gls{SNR} of gradients estimates with increasing $M$ and $K$.
		The solid lines show the estimated \textsc{dsnr} and the shaded regions the interquartile range of
		the individual ratios.  Also shown for reference is the \textsc{dsnr} for a randomly generated
		vector where each component is drawn from a unit Gaussian.
		\vspace{-10pt}
		\label{fig:snr/extra}}
\end{figure*}

\begin{figure*}[t]
	\centering
	\begin{subfigure}[b]{0.4\textwidth}
		\centering
		\includegraphics[width=\textwidth]{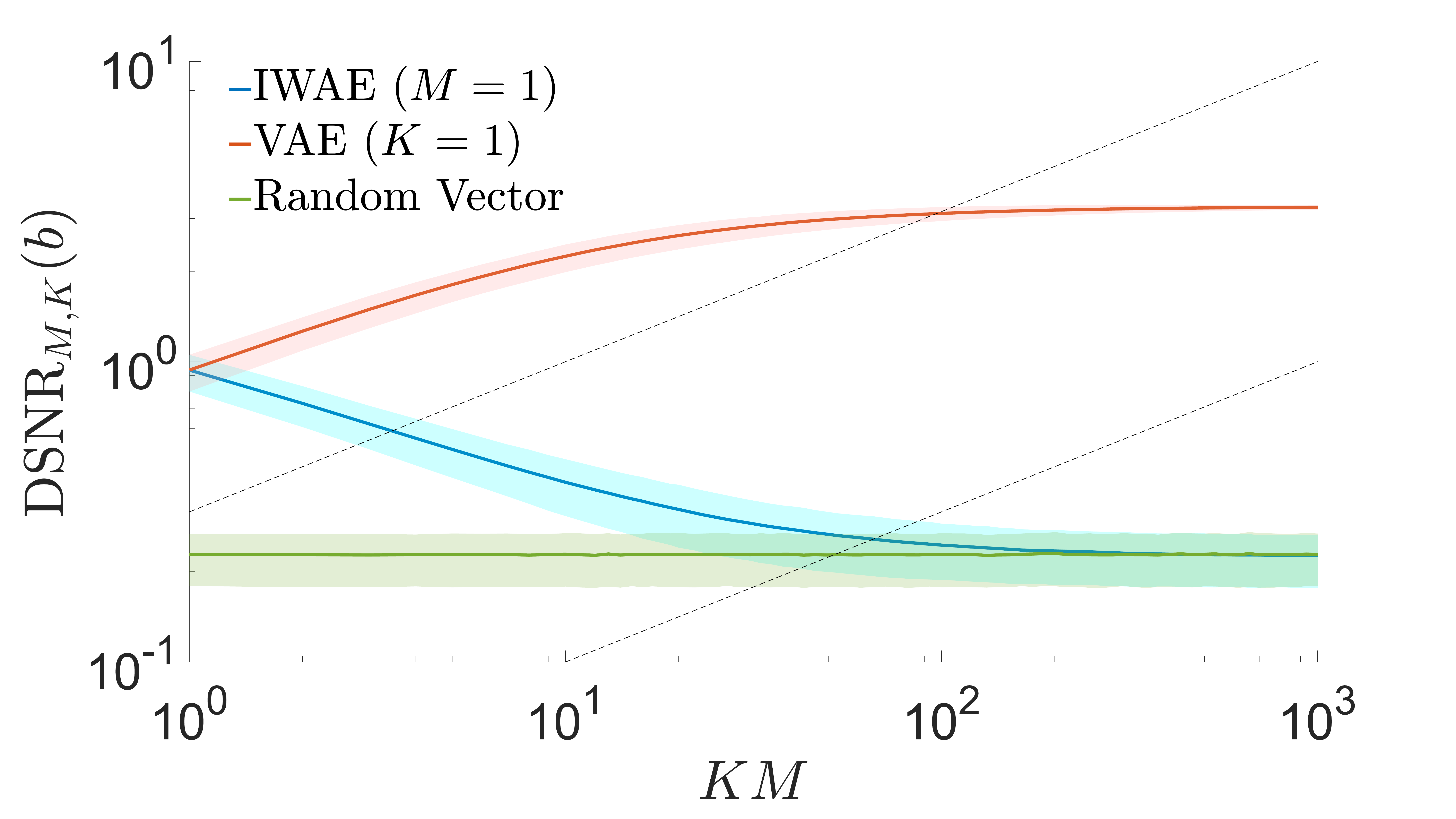}\vspace{-2pt}
		\caption{Convergence of \textsc{dsnr} for inference network\label{fig:snr/snr_dir_end}}
	\end{subfigure} ~~~~~~~~~~~~~~
	\begin{subfigure}[b]{0.4\textwidth}
		\centering
		\includegraphics[width=\textwidth]{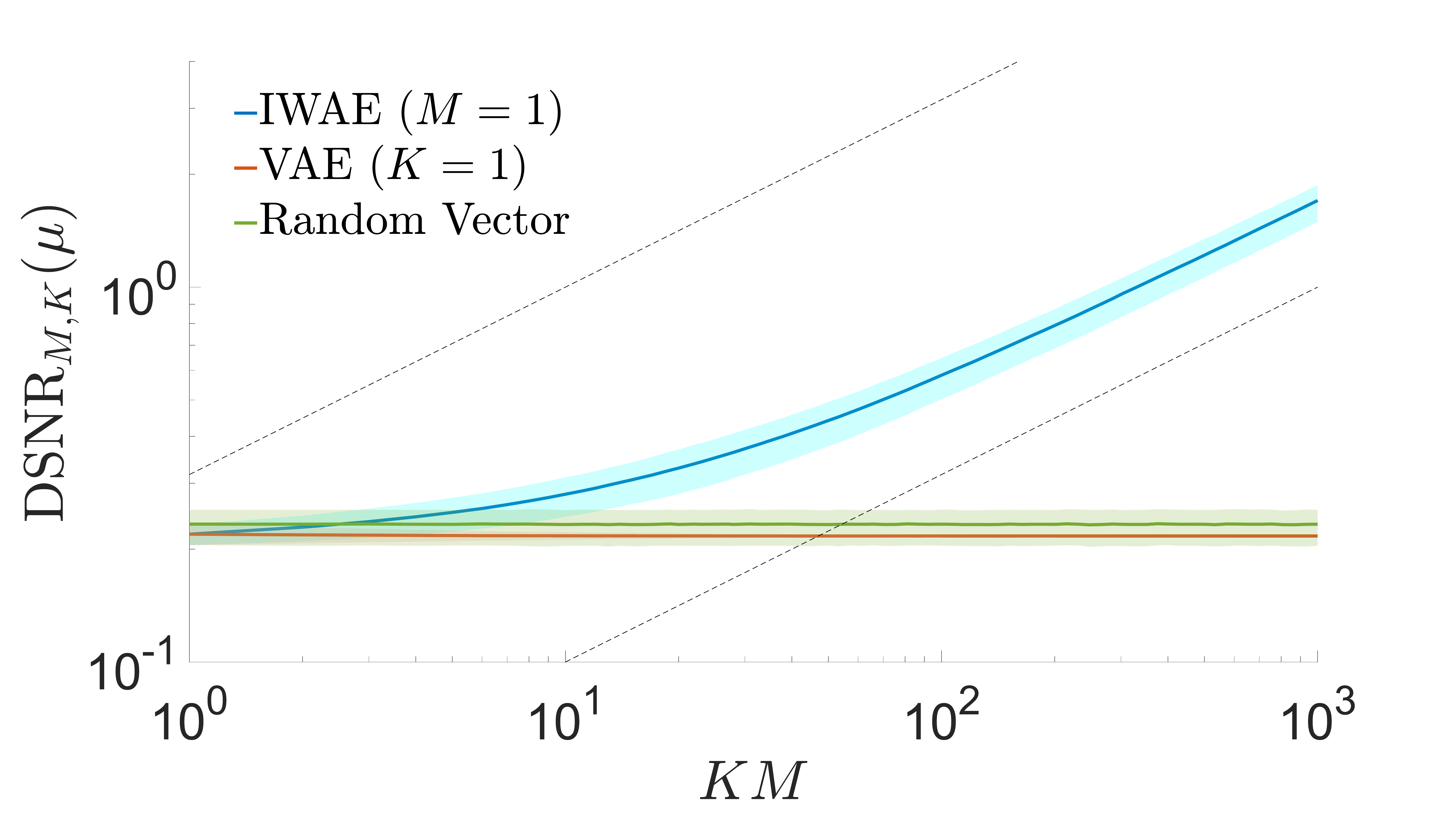}\vspace{-2pt}
		\caption{Convergence of \textsc{dsnr} for generative network\label{fig:snr/snr_dir_mu_end}}
	\end{subfigure}\vspace{-6pt}
	\caption{Convergence of the \textsc{dsnr} when the
		target gradient is taken as $u = \E \left[\Delta_{1,1000}\right]$.  Conventions as
		per Figure~\ref{fig:snr/extra}.
		\vspace{-14pt}
		\label{fig:snr/extra_end}}
\end{figure*}

\subsection{Directional Signal-to-Noise Ratio}

As a reassurance that our chosen definition of the \gls{SNR} is appropriate
for the problem at hand and to examine the effect of multiple dimensions explicitly,
 we now also consider an alternative definition of the \gls{SNR} that is similar (though distinct)
to that used in~\cite{roberts2009signal}.  We refer to this as the ``directional'' \gls{SNR} (\textsc{dsnr}).
At a high-level, we define the \textsc{dsnr} by splitting each gradient estimate into two component vectors, one parallel
to the true gradient and one perpendicular, then taking the expectation of ratio of their magnitudes.  More precisely,
we define $u =\E \left[\Delta_{M,K}\right]/\lVert\E \left[\Delta_{M,K}\right]\rVert_2$ as being
the true normalized gradient direction and then the \textsc{dsnr} as
\begin{align}
\textsc{dsnr}_{M,K} = & \,\,\E \left[\frac{\lVert\Delta_{\|}\rVert_2}{\lVert\Delta_{\bot}\rVert_2} \right]
\quad \text{where}  \displaybreak[0]\\
\Delta_{\|} = \left(\Delta_{M,K}^T u\right)u &\quad \text{and} \quad
\Delta_{\bot} = \Delta_{M,K}- \Delta_{\|}. \nonumber
\end{align}
The \textsc{dsnr} thus provides a measure of the expected proportion of the gradient that will point in the
true direction.  For perfect estimates of the gradients, then $\textsc{dsnr}\to\infty$, but unlike the
\gls{SNR}, arbitrarily bad estimates do not have $\textsc{dsnr}=0$ because even random vectors will have
a component of their gradient in the true direction.

The convergence of the \textsc{dsnr} is shown in Figure~\ref{fig:snr/extra}, for which the true normalized
gradient $u$ has been estimated empirically, noting that this varies with $K$.
We see a similar qualitative behavior
to the \gls{SNR}, with the gradients of \gls{IWAE} for the inference network degrading to having the
same directional accuracy as drawing a random vector.  Interestingly, the \textsc{dsnr} seems to be
following the same asymptotic convergence behavior as the \gls{SNR} for both
networks in $M$ (as shown by the dashed lines), even though we have no theoretical result to suggest this should occur.

As our theoretical results suggest that the direction of the true gradients correspond to
targeting an improved objective 
as $K$ increases, we now examine whether this or the changes in
the \gls{SNR} is the dominant effect.  To this end, we repeat our calculations for
the \textsc{dsnr} but take $u$ as the target direction of the gradient for $K=1000$. 
This provides a measure of how varying $M$ and $K$ affects the quality of the gradient directions
as biased estimators for $\E\left[\Delta_{1,1000}\right]
/\lVert\E\left[\Delta_{1,1000}\right]\rVert_2$.  As shown in Figure~\ref{fig:snr/extra_end}, increasing $K$ is still detrimental for the inference network
by this metric, even though
it brings the expected gradient estimate closer to the target gradient.  By contrast, increasing
$K$ is now monotonically beneficial for the generative network.  Increasing $M$
leads to initial improvements for the inference network before plateauing due to the bias 
of the estimator.  For the generative network, increasing $M$ has little impact, with the bias being
the dominant factor throughout.  Though this metric is not an absolute measure of
performance of the~\gls{SGA} scheme, e.g. because high bias may be more detrimental than high variance,
it is nonetheless a powerful result in suggesting that increasing $K$ can be detrimental
to learning the inference network.

% !Tex root=tb_icml_2018.tex

\section{New Estimators}
\label{sec:algs}

Based on our theoretical results, we now introduce three new
algorithms that address the issue of diminishing \gls{SNR} for the inference
network.  Our first, \gls{MIWAE}, is exactly equivalent to the 
general formulation given in~\eqref{eq:grad-est}, the distinction from previous
approaches coming from the fact that it takes both $M>1$ and $K>1$.  
The motivation for this is that because our inference network \gls{SNR} increases as
$O(\sqrt{M/K})$, we should be able to mitigate the issues increasing
$K$ has on the \gls{SNR} by also increasing $M$.  For fairness, we will
keep our overall budget $T=MK$ fixed, but we will show that given this
budget, the optimal value for $M$ is often not $1$.  In practice, we expect that
it will often be beneficial to increase the mini-batch size $N$ rather than $M$ for
\gls{MIWAE};
as we showed in Section~\ref{sec:multi} this has the same effect on the~\gls{SNR}.  Nonetheless, \gls{MIWAE} forms an interesting reference method
for testing our theoretical results and, as we will show, it can offer improvements
over \gls{IWAE} for a given $N$.

Our second algorithm, \gls{CIWAE} uses a convex combination of the
\gls{IWAE} and \gls{VAE} bounds, namely
\begin{align}
\ELBO_{\text{CIWAE}} = \beta \ELBO_{\text{VAE}} + (1-\beta) \ELBO_{\text{IWAE}}
\end{align}
where $\beta \in [0,1]$ is a combination parameter.  It is trivial
to see that $\ELBO_{\text{CIWAE}}$ is a lower bound on the log marginal
that is tighter than the \gls{VAE} bound but looser than the \gls{IWAE}
bound.  We then employ the following estimator
\begin{align}
\label{eq:CIWAE-est}
\Delta_{K,\beta}^{\text{C}}
=& \\\nabla_{\theta, \phi} &\left(
\beta \frac{1}{K} \sum_{k=1}^{K} 
\log w_k + (1-\beta) \log \left(\frac{1}{K} \sum_{k=1}^{K}  w_k\right)\right) \nonumber
\end{align}
where we use the same $w_k$ for both terms.  The motivation
for  \gls{CIWAE} is that, if we set $\beta$ to a relatively small value, the
objective will behave mostly like \gls{IWAE}, except when the
expected \gls{IWAE} gradient becomes very small.  When this happens,
the \gls{VAE} component should ``take-over'' and alleviate SNR issues:
the asymptotic SNR of $\Delta_{K,\beta}^{\text{C}}$ for $\phi$ is
$O(\sqrt{MK})$ because the \gls{VAE} component has non-zero expectation
in the limit $K\rightarrow\infty$.

Our results suggest that what is good for the generative
network, in terms of setting $K$, is often detrimental for the inference 
network.  It is therefore natural to question whether it is sensible
to always use the same target for both the inference and generative networks.
Motivated by this,
our third method, \gls{PIWAE}, uses the \gls{IWAE} target when
training the generative network, but the \gls{MIWAE} target for training
the inference network.  We thus have
\begin{subequations}
\begin{align}
\label{eq:PIWAE-est}
\Delta_{K,\beta}^{\text{C}} (\theta) &= \nabla_{\theta} \log \frac{1}{K}
\sum\nolimits_{k=1}^{K} w_k \\
\Delta_{M,K,\beta}^{\text{C}} (\phi) &= \frac{1}{M} \sum\nolimits_{m=1}^{M}
\nabla_{\phi} \log \frac{1}{L} \sum\nolimits_{\ell=1}^{L} w_{m,\ell}
\end{align}
\end{subequations}
where we will generally set $K=ML$ so that the same weights can be
used for both gradients.  
% !Tex root=tb_icml_2018.tex

\begin{figure*}[t!]
	\centering
	\begin{subfigure}[b]{0.33\textwidth}
		\centering
		\includegraphics[width=\textwidth]{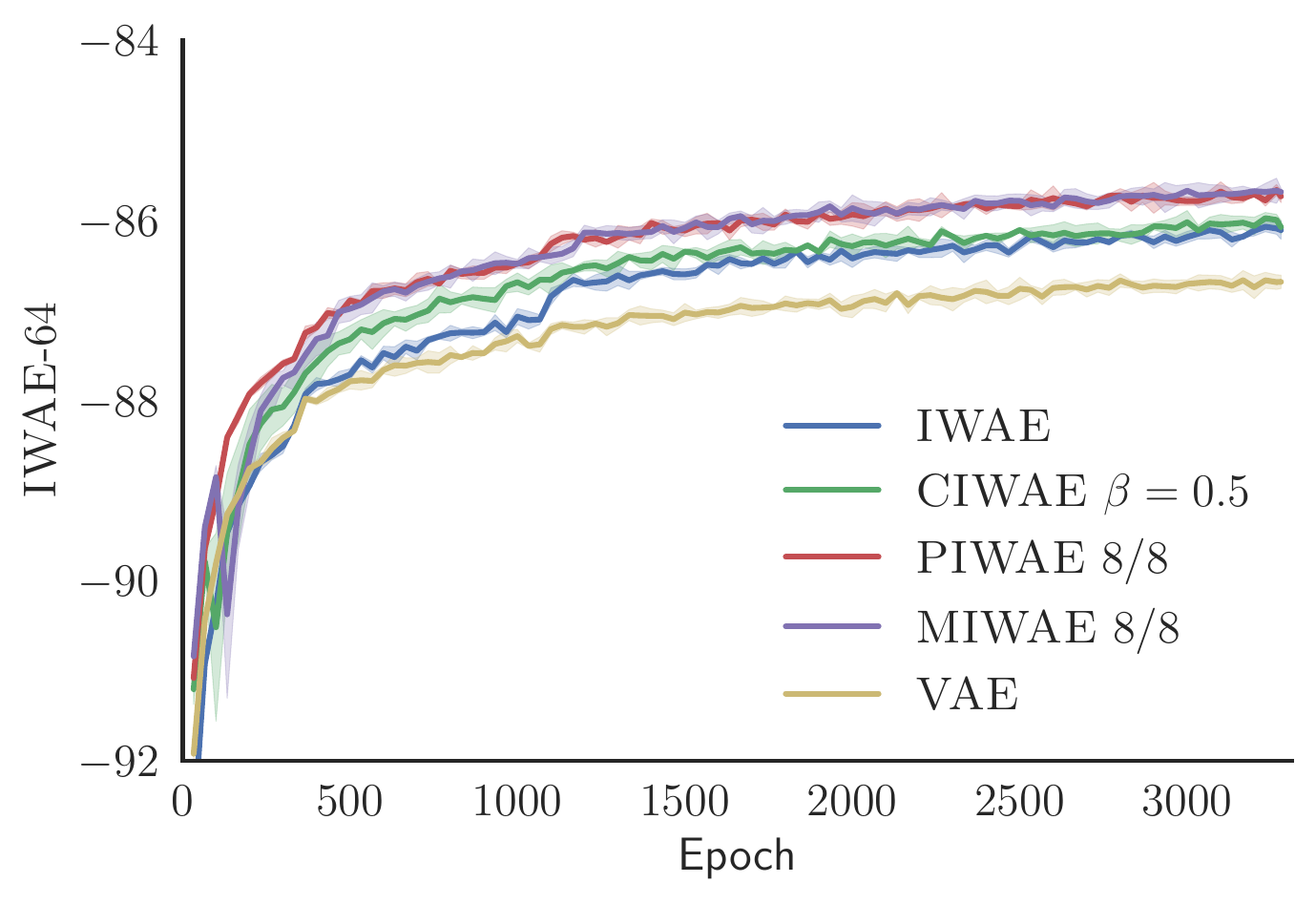}\vspace{-2pt}
		\caption{\textsc{IWAE}$_{64}$ \label{fig:mnistexpt/convergence/iwae64}}
	\end{subfigure}
	\begin{subfigure}[b]{0.33\textwidth}
		\centering
		\includegraphics[width=\textwidth]{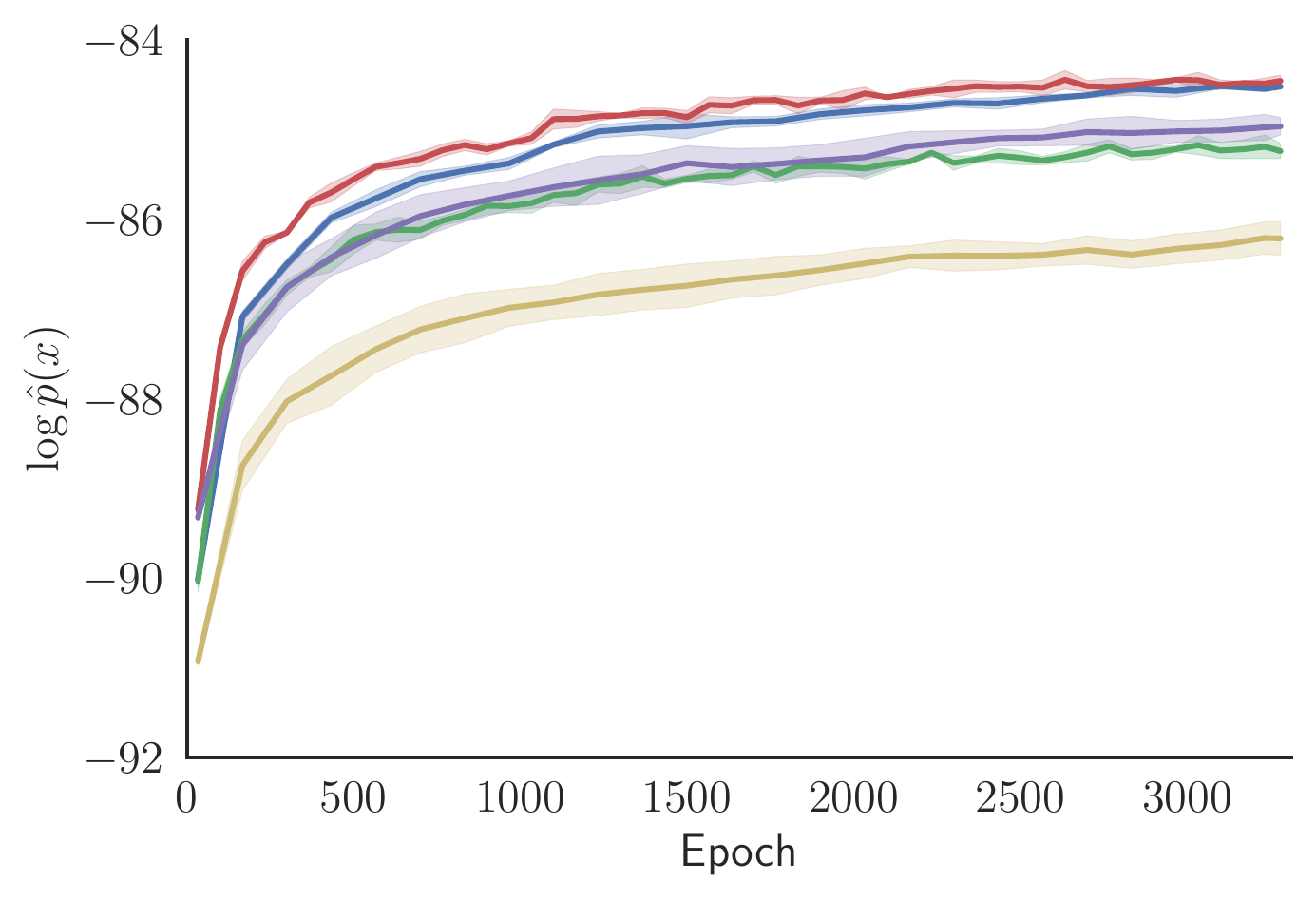}\vspace{-2pt}
		\caption{$\log \hat{p}(x)$ \label{fig:mnistexpt/convergence/logpx}}
	\end{subfigure}
	\begin{subfigure}[b]{0.33\textwidth}
		\centering
		\includegraphics[width=\textwidth]{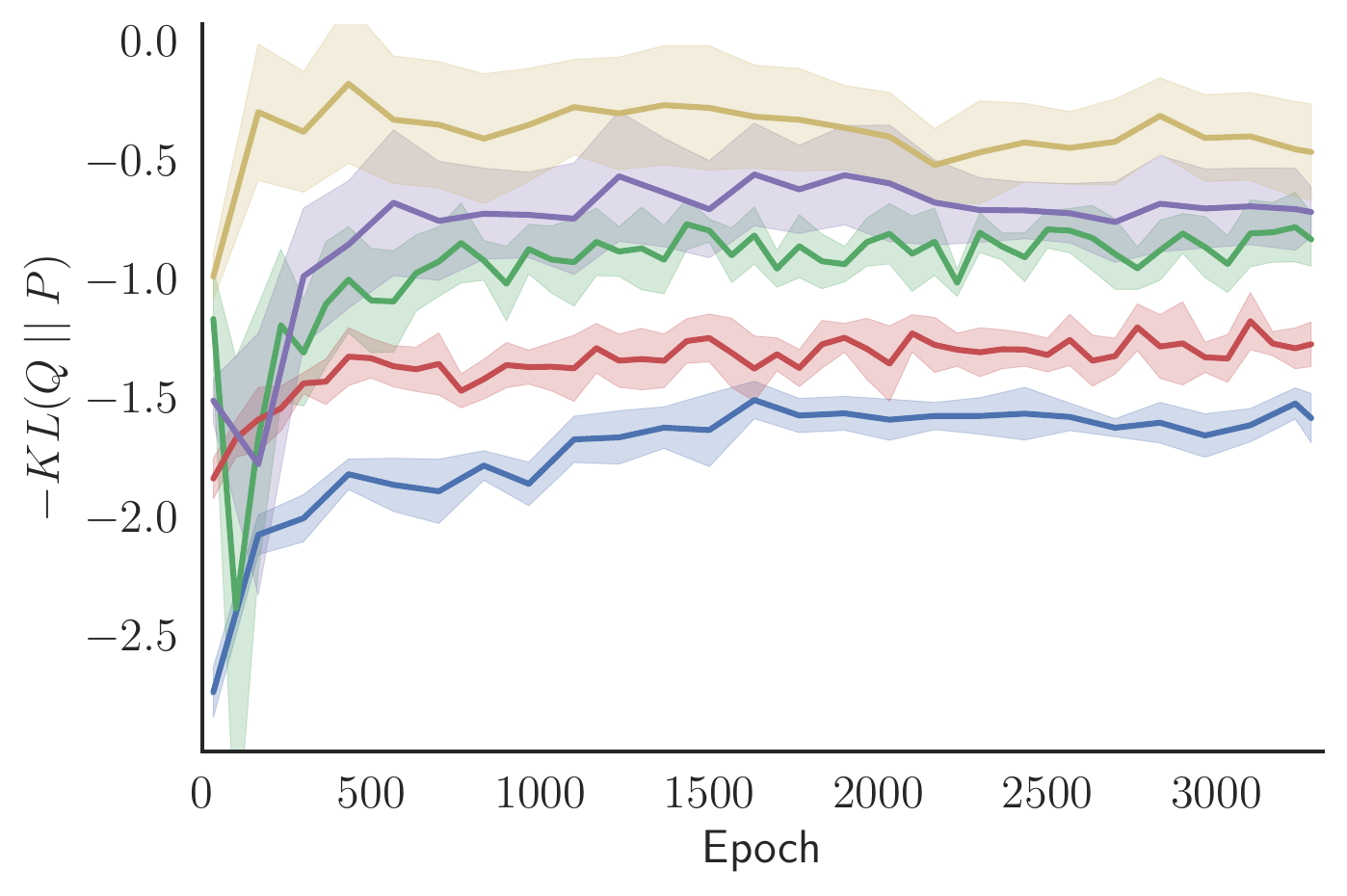}\vspace{-2pt}
		\caption{$-\mathrm{KL}(Q_{\phi}(z \given x) || P_{\theta}(z \given x))$ \label{fig:mnistexpt/convergence/kl}}
	\end{subfigure}\vspace{-6pt}
	\caption{Convergence of evaluation metrics on the test set with increased training time. 
		All lines show mean $\pm$ standard deviation 
		over 4 runs with different random
		initializations. Larger values are preferable for each plot.
		\vspace{-8pt}  \label{fig:mnistexpt/convergence}}
\end{figure*}

\begin{figure*}[t!]
	\centering
	\begin{subfigure}[b]{0.33\textwidth}
		\centering
		\includegraphics[width=\textwidth]{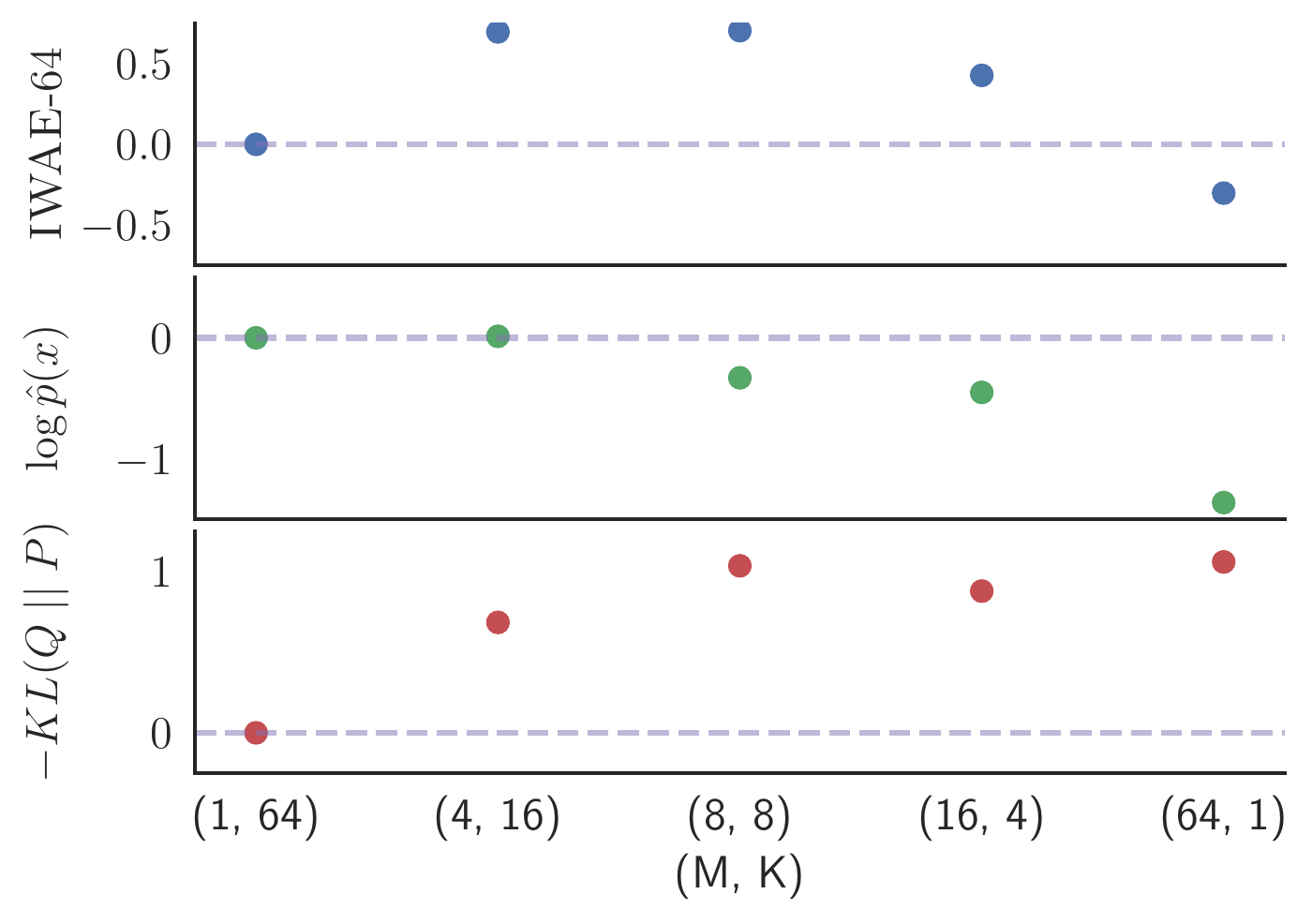}\vspace{-2pt}
		\caption{Comparing \gls{MIWAE} and \gls{IWAE} \label{fig:mnistexpt/dotplot/miwae}}
	\end{subfigure}
	\begin{subfigure}[b]{0.33\textwidth}
		\centering
		\includegraphics[width=\textwidth]{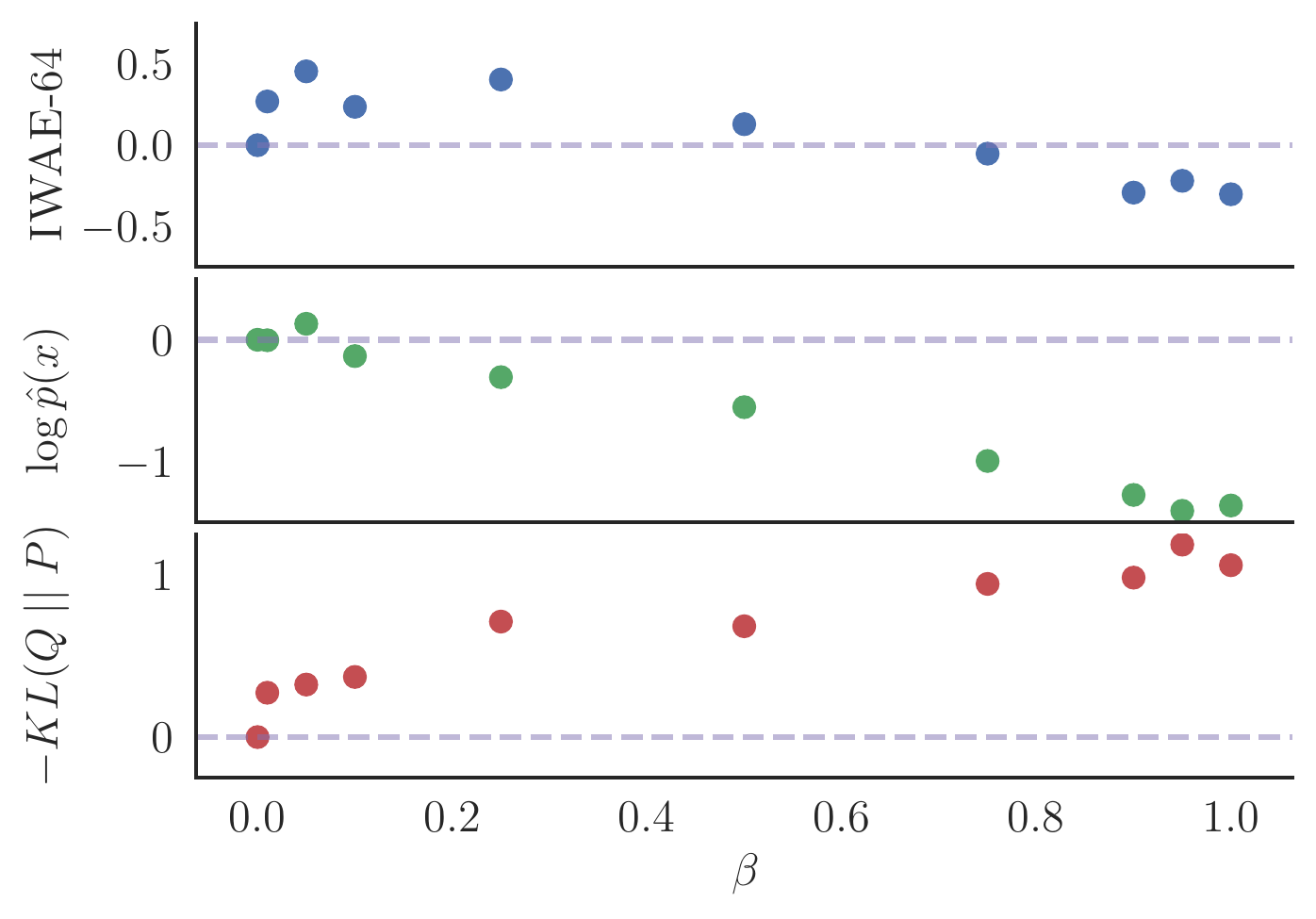}\vspace{-2pt}
		\caption{Comparing \gls{CIWAE} and \gls{IWAE} \label{fig:mnistexpt/dotplot/ciwae}}
	\end{subfigure}
	\begin{subfigure}[b]{0.33\textwidth}
		\centering
		\includegraphics[width=\textwidth]{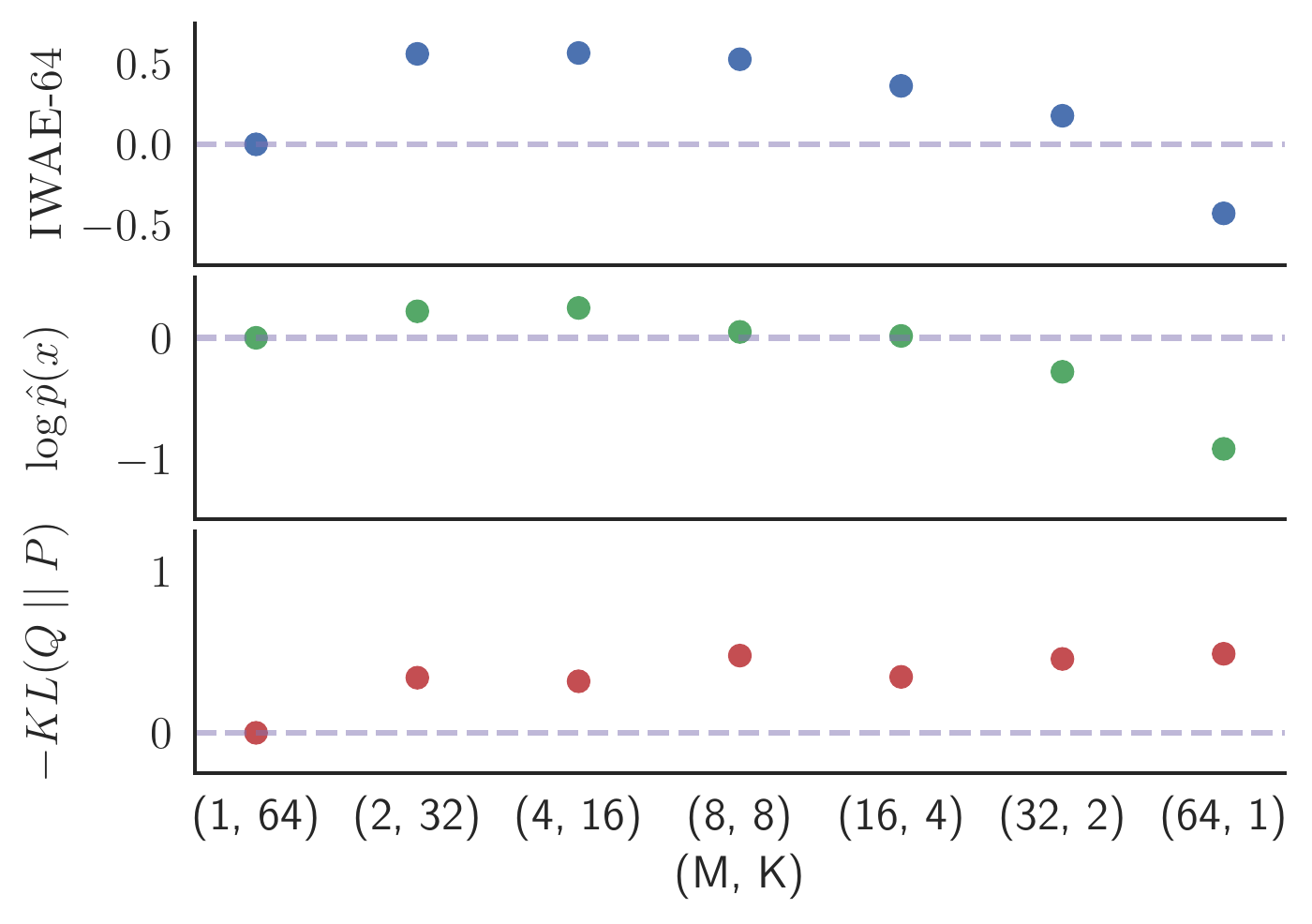}\vspace{-2pt}
		\caption{Comparing \gls{PIWAE} and \gls{IWAE} \label{fig:mnistexpt/dotplot/piwae}}
	\end{subfigure} \vspace{-12pt}
	\caption{Test set performance of \gls{MIWAE}, \gls{CIWAE}, and \gls{PIWAE} relative to \gls{IWAE} in terms of the \gls{IWAE}-64 (top), $\log \hat p(x)$ (middle), and $-\mathrm{KL}(Q_{\phi}(z \given x) || P_{\theta}(z \given x))$ (bottom) metrics.  All dots are the difference in the metric to that of~\gls{IWAE}. Dotted line is the \gls{IWAE} baseline.
		Note that in all cases, the far left of the plot correspond to
		settings equivalent to the \gls{IWAE}.\vspace{-8pt} 	 \label{fig:mnistexpt/dotplot}}
\end{figure*}

\begin{table*}[t!]                        
	\centering    
	\scriptsize             
	\setlength\tabcolsep{2.5pt}	  
	\renewcommand{\arraystretch}{1.2}          
	\caption{Mean final test set performance $\pm$ standard deviation over $4$ runs. Numbers in brackets in indicate $(M,K)$.
		The best result is shown in red, while bold results are not
		statistically significant to best result at the 5\% level of a Welch's t-test. \vspace{-8pt} \label{table:final-pef}}                    
	\begin{tabular}{|c|cccccccc|}                    
		\hline                                                       
		Metric & \gls{IWAE} & \gls{PIWAE} $(4,16)$ & \gls{PIWAE} $(8,8)$ & 
		\gls{MIWAE} $(4,16)$ & \gls{MIWAE} $(8,8)$ & \gls{CIWAE} $\beta=0.05$ & \gls{CIWAE} $\beta=0.5$ & \gls{VAE}     \\
		\hline    
		\gls{IWAE}-$64$ 
		& --$86.11\pm 0.10$ 
		& {\bf--}$\mathbf{85.68\pm 0.06}$ & {\bf--}$\mathbf{85.74\pm 0.07}$ 
		& {\color{red} \bf --$\mathbf{85.60\pm 0.07}$} & {\bf--}$\mathbf{85.69\pm 0.04}$ 
		& --$85.91\pm 0.11$ & --$86.08\pm 0.08$  
		& --$86.69\pm 0.08$ \\ 
		$\log \hat{p}(x)$ 
		& {\bf--}$\mathbf{84.52\pm 0.02}$  
		& {\color{red} \bf --$\mathbf{84.40\pm 0.17}$} 	& {\bf--}$\mathbf{84.46\pm 0.06}$ 
		& {\bf--}$\mathbf{84.56\pm 0.05}$ & --$84.97\pm 0.10$	 
		& 	{\bf--}$\mathbf{84.57\pm 0.09}$ & --$85.24\pm 0.08$  
		& --$86.21\pm 0.19$ \\ 
		$-\mathrm{KL}(Q|| P)$ 
		& --$1.59\pm 0.10$ 
		& --$1.27\pm 0.18$ & --$1.28\pm 0.09$ 
		& {\bf--}$\mathbf{1.04\pm 0.08}$ & {\bf--}$\mathbf{0.72\pm 0.11}$  
		& --$1.34\pm 0.14$ & {\bf--}$\mathbf{0.84\pm 0.11}$ 
		& {\color{red} \bf --$\mathbf{0.47\pm 0.20}$} \\ 
		\hline                                
	\end{tabular}                 
	\vspace{-12pt}                                           
\end{table*} 

\subsection{Experiments}
\label{sec:exp-algs}

We now use our new estimators to train deep generative models for the MNIST digits 
dataset~\citep{lecun1998gradient}.  For this, we duplicated the 
architecture and
training schedule outlined in~\citet{burda2016importance}. In particular, all networks were trained and evaluated using their
 stochastic binarization.
For all methods we set a budget of $T=64$ weights in the target estimate
for each datapoint in the minibatch.  

To assess different aspects of the training performance, we consider three
different metrics: $\ELBO_{\text{IWAE}}$ with $K=64$, 
$\ELBO_{\text{IWAE}}$ with $K=5000$, and the latter of these minus
 the former.  All reported metrics are evaluated on the test data.

The motivation for the  $\ELBO_{\text{IWAE}}$ with $K=64$ metric,
denoted as \gls{IWAE}-64, is that this is the target
used for training the \gls{IWAE} and so if another method does better on
this metric than the \gls{IWAE}, this is a clear indicator that \gls{SNR} issues
of the \gls{IWAE} estimator have degraded its performance.   In fact, 
this would demonstrate that, from a practical perspective, 
using the \gls{IWAE} estimator is sub-optimal, even if our explicit aim is to
optimize the \gls{IWAE} bound.  
The second metric, $\ELBO_{\text{IWAE}}$ with $K=5000$, denoted $\log \hat{p}(x)$, is used as a surrogate
for estimating the log marginal likelihood and thus provides an indicator
for fidelity of the learned generative model.  
The third metric is an
estimator for the divergence implicitly targeted by the \gls{IWAE}.  Namely, as shown
by~\citet{le2017auto}, the $\ELBO_{\text{IWAE}}$ can be interpreted as
\begin{align}
\ELBO_{\text{IWAE}} = 
\log p_{\theta}(x) - \mathrm{KL}(Q_{\phi}(z \given x) || P_{\theta}(z \given x))
\end{align}
\begin{align}
&\text{where} \quad Q_{\phi}(z \given x) := \prod\nolimits_{k = 1}^K q_{\phi}(z_k \given x),
\quad \text{and} \\
&P_{\theta}(z \given x) := \frac{1}{K} \sum\nolimits_{k = 1}^K \frac{\prod_{\ell = 1}^K q_{\phi}(z_{\ell} \given x)}{q_{\phi}(z_k \given x)} p_{\theta}(z_k\given x).
\end{align}
Thus we can estimate $\mathrm{KL}(Q_{\phi}(z \given x) || P_{\theta}(z \given x))$
using $\log \hat{p}(x)-{\text{IWAE-64}}$, to provide a
metric for divergence between the inference network and the
 proposal network.  
We use this instead of
$\mathrm{KL}(q_{\phi}(z \given x) || p_{\theta}(z \given x))$
because the latter can be deceptive metric for the inference network
fidelity.  For example, it tends to 
prefer $q_{\phi}(z \given x)$ that cover only one of the posterior modes, rather than
encompassing all of them.  As we showed in Section~\ref{sec:dir}, the implied target 
of the true gradients for the inference network improves as $K$ increases and so 
$\mathrm{KL}(Q_{\phi}(z \given x) || P_{\theta}(z \given x))$ should be a more reliable
metric of inference network performance.

Figure~\ref{fig:mnistexpt/convergence} shows the convergence of these metrics
for each algorithm.  Here we have considered
the middle value for each of the parameters, namely $K=M=8$
for \gls{PIWAE} and \gls{MIWAE}, and $\beta=0.5$ for \gls{CIWAE}.  
We see that \gls{PIWAE} and \gls{MIWAE} both
comfortably outperformed, and \gls{CIWAE} slightly outperformed,
\gls{IWAE} in terms of \gls{IWAE}-64 metric, despite
\gls{IWAE} being directly trained on this target.  In terms of 
$\log \hat{p}(x)$, \gls{PIWAE} gave the best performance, 
followed by \gls{IWAE}.  For the \textsc{KL}, we see that
the \gls{VAE} performed best followed by \gls{MIWAE}, 
with \gls{IWAE} performing the worst.  
We note here that the \textsc{KL} is not an exact
measure of the inference network performance as it also depends on the generative model.
As such, the apparent superior performance of the \gls{VAE} may be because it produces a
simpler model, as per the observations of~\citet{burda2016importance}, which in turn is easier
to learn an inference network for.  Critically though,
\gls{PIWAE} improves this metric whilst also improving generative network performance, such
that this reasoning no longer applies.  Similar behavior is observed for~\gls{MIWAE} and~\gls{CIWAE}
for different parameter settings (see Appendix~\ref{sec:app:exp-algs}).
%These
%results imply that, while \gls{IWAE} is able to learn a generative network
%with performance similar to that of \gls{PIWAE}, the inference 
%network it learns is
%substantially worse than all the competing methods.

%Figure~\ref{fig:mnistexpt/dotplot} shows the convergence of these metrics
%for each algorithm.  Here we have set $M=4, K=16$ for 
%\gls{PIWAE} and \gls{MIWAE}, and $\beta=0.05$ for \gls{CIWAE}. 
%We see that all the suggested methods outperformed
%\gls{IWAE} in terms of \gls{IWAE}-64 metric, despite
%\gls{IWAE} being directly trained on this target.  
%In terms of $\log \hat{p}(x)$, all the suggested methods gave similar performance
%to \gls{IWAE}, with~\gls{PIWAE} slightly outperforming it.
%In terms of the \textsc{KL}, we see that
%the \gls{VAE} performed best followed by \gls{MIWAE}, 
%with \gls{IWAE} performing the worst.  

\begin{figure}[t!]
	\centering
	\includegraphics[width=0.39\textwidth]{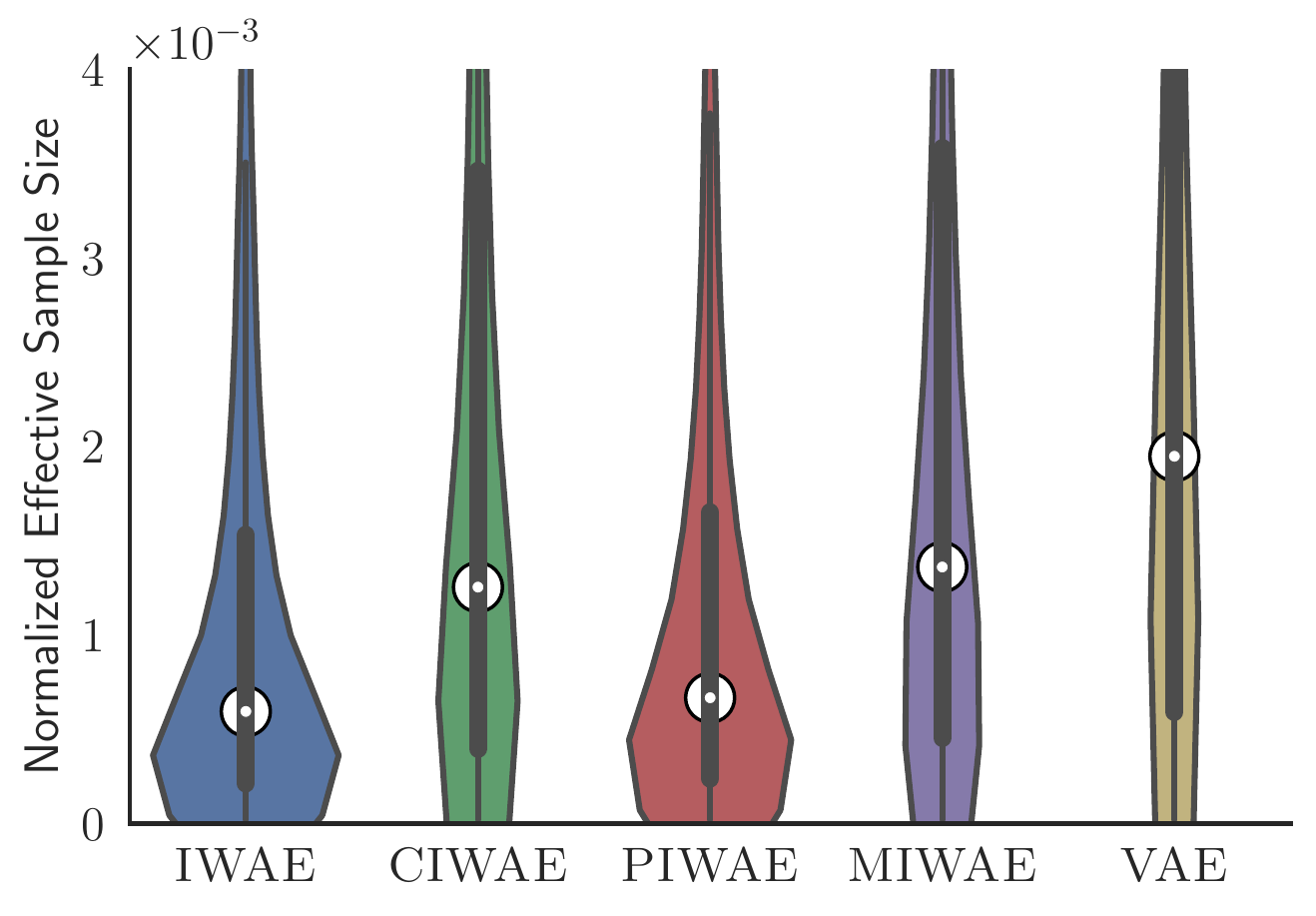}\vspace{-10pt}
	\caption{Violin plots of ESS estimates for each image of MNIST,
		normalized by the number of samples drawn. A violin plot uses a kernel density plot on each side -- thicker means more MNIST images whose $q_{\phi}$ achieves that ESS. 
		\vspace{-17pt}  \label{fig:violiness}}
\end{figure}

We next considered tuning the parameters for each of
our algorithms as shown in Figure~\ref{fig:mnistexpt/dotplot}, for
which we look at the final metric values after training.
Table~\ref{table:final-pef} further summarizes the performance
for certain selected parameter settings. For \gls{MIWAE} we see that as we increase $M$, 
the $\log \hat{p}(x)$ metric gets worse, while the \textsc{KL}
gets better.  The \gls{IWAE}-64  metric initially increases with $M$, before reducing
again from $M=16$ to $M=64$, suggesting that intermediate
values for $M$ (i.e. $M\neq1$, $K\neq1$) give a better trade-off.  For \gls{PIWAE}, similar
behavior to \gls{MIWAE} is seen for the \gls{IWAE}-64 
and \textsc{KL} metrics.  However, unlike for \gls{MIWAE}, we see that 
$\log \hat{p}(x)$ initially increases with $M$, such that 
\gls{PIWAE} provides uniform improvement over \gls{IWAE}
for the $M=2,4,8,$ and $16$ cases.  
\gls{CIWAE} exhibits similar behavior in increasing $\beta$
as increasing $M$ for \gls{MIWAE}, but there appears to
be a larger degree of noise in the evaluations, while the optimal
value of $\beta$, though non-zero, seems to be closer
to \gls{IWAE} than for the other algorithms.

As an additional measure of the performance of the inference
network that is distinct to any of the training targets, 
we also considered the effective sample size (ESS)~\cite{mcbook}
for the fully trained networks, defined as
\begin{align}
\text{ESS} = (\textstyle \sum_{k=1}^K w_k)^2 / \textstyle \sum_{k=1}^K w_k^2.
\end{align}
The ESS is a measure of how many unweighted samples would be equivalent
to the weighted sample set.  
%It is thus bounded between $1$ and $K$.
A low $\text{ESS}$ indicates that the inference network is struggling to
perform effective inference for the generative network.
The results, given in Figure~\ref{fig:violiness}, show that the
ESSs for \gls{CIWAE}, \gls{MIWAE}, and the \gls{VAE} were
all significantly larger than for \gls{IWAE} and \gls{PIWAE},
with \gls{IWAE} giving a particularly poor ESS.

Our final experiment looks at the \gls{SNR} values for the 
inference networks during training.  Here we took a number of different neural network gradient weights
at different layers of the network and calculated empirical estimates
for their \gls{SNR}s at various points during the training.  We then
averaged these estimates over the different network weights, the 
results of which are given in Figure~\ref{fig:inferencesnr}.  This
clearly shows the low \gls{SNR} exhibited by the \gls{IWAE}
inference network, suggesting that our results from the simple
Gaussian experiments carry over to the more complex neural network
domain.

\begin{figure}[t!]
	\centering
	\includegraphics[width=0.395\textwidth]{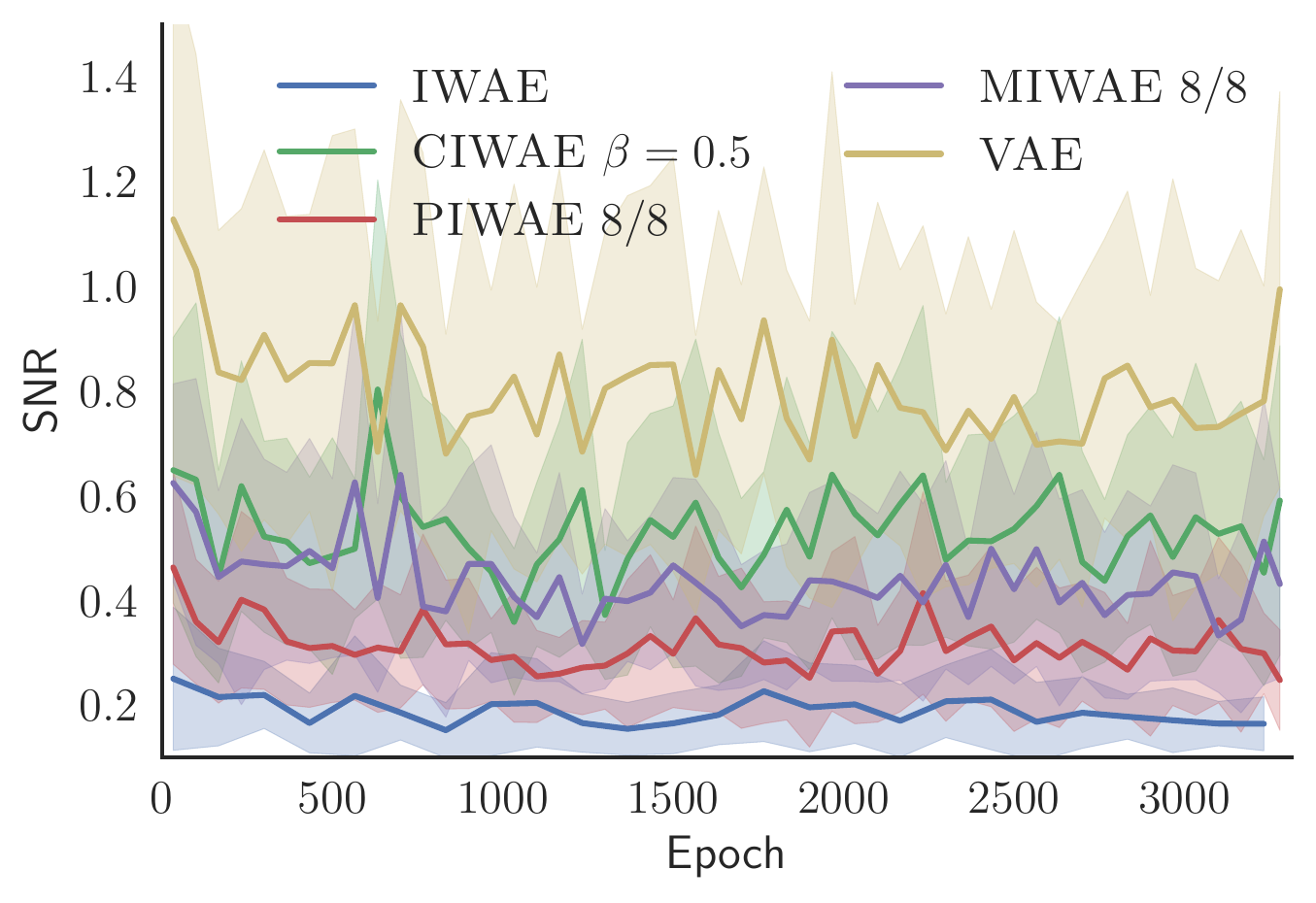}\vspace{-9pt}
	\caption{\gls{SNR} of inference network weights during training. All lines are mean $\pm$ standard deviation over 20 randomly chosen weights per layer.
		\vspace{-12pt} \label{fig:inferencesnr}}
\end{figure}

%\input{piwae}
% !Tex root=tb_icml_2018.tex

\section{Conclusions}
We have provided theoretical and empirical evidence that algorithmic approaches to increasing the tightness of the \gls{ELBO} independently to the expressiveness of the inference network can be detrimental to learning by reducing the
signal-to-noise ratio of the inference network gradients.
% We show for the case of \gls{IWAE} that increasing the particle number reduces the amplitude of the expected gradient $\mathbb{E}\left[\nabla_\phi \mathrm{ELBO}\right]$ faster than its standard deviation, thereby reducing the signal-to-noise ratio.
%Namely, we have shown for the case of \gls{IWAE} that the signal-to-noise ratio, namely
%the magnitude of expected value divided by the standard deviation, of the inference network
%gradients estimates decreases as we increase the number of importance samples $K$.
%However, we also showed that increasing $K$ can provide a better target for the inference
%network if gradients can be calculated exactly, suggesting that there is trade-off involved in
%setting $K$.
Experiments on a simple latent variable model confirmed our theoretical findings.
We then exploited these insights to introduce three estimators,
\gls{PIWAE},~\gls{MIWAE}, and~\gls{CIWAE} and showed that each
can deliver improvements over~\gls{IWAE}, even when the metric
used for this assessment is the~\gls{IWAE} target itself.  In particular, 
each was able to deliver improvement in the training of the inference network,
without any reduction in the quality of the learned generative network.

Whereas \gls{MIWAE} and \gls{CIWAE} mostly allow for balancing the requirements of
the inference and generative networks, \gls{PIWAE} appears to be able to offer
simultaneous improvements to both, with the improved training of the inference network
having a knock-on effect on the generative network.  Key to achieving this is, is its use
of separate targets for the two networks, opening up interesting avenues for
future work.
%
%Of these, \gls{PIWAE} is likely to be of particular interest for future work 
%as it shows improvements are possible from using different targets for the generative
%and inference networks; whereas 
%
%In particular, \textsc{piwae} delivered simultaneous improvements in both
%the inference network and the generative network
%compared to \textsc{iwae}.

%Our results qualify recent developments with regards to learning inference networks
%and instigate further investigations regarding desired properties of variational objective functions.
%
%A natural conclusion from our results is that it may be beneficial to use different
%objectives for learning the generative and inference networks.  For example, one
%might look to use a tighter bound for the generative network than the inference network.
%Naturally, doing this might introduce its own complications, but it forms a tantalizing possible
%line of inquiry for future work nonetheless.

%\section{Conclusions and Future Work}

\clearpage

\appendix
	\onecolumn
\setlength{\abovedisplayskip}{5pt}
\setlength{\belowdisplayskip}{5pt}
\setlength{\abovedisplayshortskip}{5pt}
\setlength{\belowdisplayshortskip}{5pt}

\titlespacing\section{0pt}{8pt plus 2pt minus 2pt}{2pt plus 2pt minus 0pt}
\titlespacing\subsection{0pt}{8pt plus 2pt minus 2pt}{2pt plus 2pt minus 0pt}
\titlespacing\subsubsection{0pt}{8pt plus 2pt minus 2pt}{2pt plus 2pt minus 0pt}

\thispagestyle{empty} 
\rule{\textwidth}{1pt}
\vspace{-6pt}
\begin{center}
	\textbf{ \Large  Appendices for Tighter Variational Bounds are Not Necessarily
		Better}
\end{center}%\vspace{-6pt}
\rule{\textwidth}{1pt}

	\begin{minipage}{\textwidth}
		\centering
		\vspace{17pt}
		\textbf{Tom Rainforth \quad Adam R. Kosiorek \quad Tuan Anh Le \quad Chris J. Maddison \\
		 Maximilian Igl \quad Frank Wood \quad Yee Whye Teh}
		\vspace{6pt}
	\end{minipage}

\setlength{\parskip}{0.3em}
% !Tex root=tb_icml_2018.tex

\section{Proof of SNR Convergence Rates}
\label{sec:proof}

Before proving Theorem~\ref{the:app:snr}, we first introduce the following lemma that will be helpful for demonstrating the result.  We note that the lemma can be interpreted as a generalization on the well-known results for the third and fourth moments of Monte Carlo estimators.
\begin{lemma}
\label{the:thirdOrder}
Let $a_1,\dots,a_K$, $b_1,\dots,b_K$, and $c_1,\dots,c_K$ be sets of random variables such that
\begin{itemize}
\item The $a_k$ are independent and identically distributed (i.i.d.).  Similarly, the $b_k$ are i.i.d. and the $c_k$ are i.i.d.
\item $\E [a_k] = \E [b_k] = \E [c_k] = 0, \quad \forall k\in\{1,\dots,K\}$
\item $a_i$, $b_j$, $c_k$ are mutually independent if $i\neq j \neq k \neq i$, but potentially dependent otherwise
\end{itemize}
Then the following results hold
\begin{align}
&\E\left[\left(\frac{1}{K}\sum_{k=1}^K a_k\right)
\left(\frac{1}{K}\sum_{k=1}^K b_k\right)
\left(\frac{1}{K}\sum_{k=1}^K c_k\right)\right] =\frac{1}{K^2}\E[a_1 b_1 c_1] \\
&\textnormal{Var}\left[\left(\frac{1}{K}\sum_{k=1}^K a_k\right)
\left(\frac{1}{K}\sum_{k=1}^K b_k\right)\right]
=\frac{1}{K^3}\textnormal{Var}[a_1 b_1] +\frac{K-1}{K^3}\E [a_1^2] \E[b_1^2]
+ \frac{2K-2}{K^3}\left(\E [a_1b_1]\right)^2.
\end{align} 
\end{lemma}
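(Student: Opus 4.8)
The plan is to prove both identities by brute-force expansion of the products of sums, followed by a classification of the resulting multi-index terms according to which of their indices coincide, using the zero-mean and independence hypotheses to discard all but a few term types.

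For the first identity, I would write $\E[(\frac1K\sum_i a_i)(\frac1K\sum_j b_j)(\frac1K\sum_k c_k)] = \frac1{K^3}\sum_{i,j,k}\E[a_i b_j c_k]$ and split the sum by the coincidence pattern of $(i,j,k)$. When $i,j,k$ are all distinct, mutual independence together with $\E[a_i]=\E[b_j]=\E[c_k]=0$ forces $\E[a_i b_j c_k]=0$. When exactly two indices coincide, say $i=j\neq k$, the variable carrying the lone index is independent of the pair built from the other two (its index differs from both), so the expectation factors as $\E[a_i b_j]\,\E[c_k]=0$; the other two two-way coincidences are handled identically. Only the fully diagonal terms $i=j=k$ survive --- there are $K$ of them, each equal to $\E[a_1 b_1 c_1]$ by identical distribution --- giving $\frac{K}{K^3}\E[a_1 b_1 c_1]=\frac1{K^2}\E[a_1 b_1 c_1]$. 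For the second identity, I would use $\mathrm{Var}[XY]=\E[X^2Y^2]-(\E[XY])^2$ with $X=\frac1K\sum_i a_i$ and $Y=\frac1K\sum_k b_k$. The cross term is easy: off-diagonal pairs vanish by cross-index independence and zero mean, so $\E[XY]=\frac1K\E[a_1 b_1]$. For $\E[X^2Y^2]=\frac1{K^4}\sum_{i,j,k,l}\E[a_i a_j b_k b_l]$, I would classify the four-index terms by the partition of the four positions (two $a$-positions, two $b$-positions) induced by index equality: any partition with a singleton block contributes $0$, because that variable is independent of the rest and mean zero, so the only surviving patterns are (i) all four indices equal, $K$ of them, contributing $\E[a_1^2 b_1^2]$; (ii) the two $a$-indices equal to one value and the two $b$-indices equal to a different value, $K(K-1)$ of them, contributing $\E[a_1^2]\E[b_1^2]$; and (iii) each $a$-index matched to a distinct $b$-index (the two sub-patterns $\{i=k,\,j=l\}$ and $\{i=l,\,j=k\}$), $2K(K-1)$ of them in total, contributing $(\E[a_1 b_1])^2$. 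Substituting these counts, subtracting $(\E[XY])^2=\frac1{K^2}(\E[a_1 b_1])^2$, and rewriting $\E[a_1^2 b_1^2]=\mathrm{Var}[a_1 b_1]+(\E[a_1 b_1])^2$, I would then collect the $1/K^3$, $\E[a_1^2]\E[b_1^2]$, and $(\E[a_1 b_1])^2$ contributions into the claimed form.

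The main obstacle is the bookkeeping in the four-index classification: correctly enumerating the position-partitions, tracking which positions carry $a$'s versus $b$'s (so that a block like $\{i,k\}$ yields a factor $\E[a_1 b_1]$ whereas a block like $\{i,j\}$ yields $\E[a_1^2]$), getting the multiplicities right, and combining them with the $(\E[XY])^2$ cancellation --- this is exactly the place where an off-by-a-constant in a coefficient is easiest to make, so I would cross-check the final expression against the trivial case $K=1$ and a small explicit example such as $a_k=b_k\sim\mathcal N(0,1)$. The independence hypotheses enter only in two standard ways --- factoring an expectation across distinct blocks, and killing singleton blocks via zero mean --- and the one mildly delicate point is deriving the cross-index pairwise independence actually used (e.g. $a_i$ independent of $b_j$ for $i\neq j$) from the stated assumption about triples $a_i,b_j,c_k$: for $K\geq 3$ this follows by picking a third distinct index and invoking mutual independence, and the boundary cases $K\in\{1,2\}$ can be verified directly.
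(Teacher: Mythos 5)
Your approach is exactly the paper's: expand the products of sums, classify the multi-indices by their coincidence pattern, kill every term containing a singleton index using zero mean plus independence, and count what survives. If anything your bookkeeping is the more careful of the two --- your three-way case split for the first identity is exhaustive where the paper's decomposition of the off-diagonal terms is not, and your observation that pairwise independence of $a_i$ and $b_j$ for $i\neq j$ only follows from the stated triple assumption when a third distinct index exists (so $K\le 2$ needs separate comment) is a genuine gap in the hypotheses as written that the paper glosses over; the intended reading is simply that the tuples $(a_k,b_k,c_k)$ are i.i.d.\ across $k$.

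There is, however, one step that will not go through as you describe: collecting your (correct) intermediate quantities does \emph{not} produce ``the claimed form''. With the counts $K$, $K(K-1)$ and $2K(K-1)$ and the subtraction of $(\E[XY])^2=\frac{1}{K^2}(\E[a_1b_1])^2$, the coefficient of $(\E[a_1b_1])^2$ comes out as $\frac{1}{K^3}+\frac{2(K-1)}{K^3}-\frac{K}{K^3}=\frac{K-1}{K^3}$, not the stated $\frac{2K-2}{K^3}$. The discrepancy lies in the lemma, not in your plan: the paper's own derivation subtracts $(\E[XY])^2$ with coefficient $\frac{1}{K^3}$ where it should be $\frac{1}{K^2}$. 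The sanity check you yourself propose ($K=2$, $a_k=b_k\sim\mathcal N(0,1)$, for which $\textnormal{Var}[XY]=\tfrac12$) confirms $\frac{K-1}{K^3}$ and rules out $\frac{2K-2}{K^3}$. The slip is harmless downstream, since the lemma is only invoked to conclude that $\textnormal{Var}\left[\nabla_{\theta,\phi}R_1(\hat Z_{1,K})\right]=O(1/K^2)$ and both coefficients are $O(1/K^2)$, but you should prove and state the corrected coefficient rather than force your algebra to match the displayed one.
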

\begin{proof}
For the first result we have 
\begin{align*}
\E\left[\left(\frac{1}{K}\sum_{k=1}^K a_k\right)
\left(\frac{1}{K}\sum_{k=1}^K b_k\right)
\left(\frac{1}{K}\sum_{k=1}^K c_k\right)\right] &=\frac{1}{K^3}\sum_{k=1}^K\E[a_k b_k c_k]
+\frac{1}{K^3}\sum_{i=1}^K\sum_{j=1,j\neq i}^K\sum_{k=1,k\neq i}^K
\E[a_i b_j c_k] \\
&=\frac{1}{K^2}\E[a_1 b_1 c_1]
+\frac{1}{K^3}\sum_{i=1}^K\sum_{j=1,j\neq i}^K\sum_{k=1,k\neq i}^K
\E[a_i] \E[b_j c_k] \\
&=\frac{1}{K^2}\E[a_1 b_1 c_1]
\end{align*}
as required by using the fact that $\E[a_i]=0$.\vspace{10pt}

For the second result, we again use the fact that any ``off-diagonal'' terms for $a_i a_j b_k b_{\ell}$, that is terms which have the value of one index distinct to any of the others, have expectation zero to give
\begin{align*}
\textnormal{Var}&\left[\left(\frac{1}{K}\sum_{k=1}^K a_k\right)
\left(\frac{1}{K}\sum_{k=1}^K b_k\right)\right]
= 
\E\left[\left(\frac{1}{K}\sum_{k=1}^K a_k\right)^2
\left(\frac{1}{K}\sum_{k=1}^K b_k\right)^2\right]
-\left(\E\left[\left(\frac{1}{K}\sum_{k=1}^K a_k\right)
\left(\frac{1}{K}\sum_{k=1}^K b_k\right)\right]\right)^2 \\
%&=\frac{1}{K^4}\sum_{k=1}^K\E[a_k^2 b_k^2]
%+\cancelto{0}{\frac{1}{K^4}\sum_{i=1}^K\sum_{j=1,j\neq i}^K\sum_{k=1,k\neq i}^K\sum_{\ell=1,\ell\neq i}^K
%	\E[a_i]\E[a_k b_j b_{\ell}]}
%+\frac{1}{K^4}\sum_{i=1}^K\sum_{k=1,k\neq i}^K\sum_{\ell=1,\ell\neq i}^K \E [a_i^2] \E[b_k b_\ell]
%-\frac{1}{K^3}\left(\E[a_1 b_1]\right)^2 \\
&=\frac{1}{K^4}\sum_{k=1}^K\E[a_k^2 b_k^2]
+\frac{1}{K^4}\sum_{i=1}^K\sum_{k=1,k\neq i}^K \E [a_i^2] \E[b_k^2]
+\frac{2}{K^4}\sum_{i=1}^K\sum_{k=1,k\neq i}^K \E [a_ib_i]\E [a_kb_k]
-\frac{1}{K^3}\left(\E[a_1 b_1]\right)^2 \\
&=\frac{1}{K^3}\textnormal{Var}[a_1 b_1] +\frac{K-1}{K^3}\E [a_1^2] \E[b_1^2]
+ \frac{2K-2}{K^3}\left(\E [a_1b_1]\right)^2
\end{align*}
as required.
\end{proof}

\begin{theoremApp}
	\label{the:app:snr}
	Assume that when $M=K=1$, the expected gradients; the variances of the gradients; and the 
	first four moments of  $w_{1,1}$, $\nabla_{\theta} w_{1,1}$, and 
	$\nabla_{\phi} w_{1,1}$ are all finite and the variances are
	also non-zero.
	Then the signal-to-noise ratios of the gradient estimates converge at the following rates
	\begin{align}
	\SNR_{M,K} (\theta) &= 
	\sqrt{M}\left|\frac{ \sqrt{K} \; 
		\nabla_{\theta} Z -\frac{1}{2Z\sqrt{K}}\nabla_{\theta} \left(\frac{\textnormal{Var} \left[w_{1,1}\right]}{Z^2}\right)+ O\left(\frac{1}{K^{3/2}}\right) }
	{\sqrt{\E \left[w_{1,1}^2\left(\nabla_{\theta} \log w_{1,1}-\nabla_{\theta} \log Z\right)^2\right]} + O\left(\frac{1}{K}\right)}\right| \\
	\SNR_{M,K} (\phi) & =\sqrt{M} \left|\frac{
		\nabla_{\phi} \textnormal{Var} \left[w_{1,1}\right] + O\left(\frac{1}{K}\right) }
	{2 Z\sqrt{K} \; \sigma\left[\nabla_{\phi} w_{1,1}\right] +O\left(\frac{1}{\sqrt{K}}\right)}\right|
	\end{align}
	where $Z := p_{\theta}(x)$ is the true marginal likelihood.
\end{theoremApp}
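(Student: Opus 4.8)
The plan is to reduce the statement to the $M=1$ case and then expand $\nabla_{\theta,\phi}\log\hat Z_{1,K}$ in the fluctuations of the importance weights about their mean. Since $\Delta_{M,K}=\frac1M\sum_{m=1}^{M}\nabla_{\theta,\phi}\log\hat Z_{m,K}$ is an average of i.i.d.\ terms, $\E[\Delta_{M,K}]=\E[\nabla_{\theta,\phi}\log\hat Z_{1,K}]$ and $\mathrm{Var}[\Delta_{M,K}]=\frac1M\mathrm{Var}[\nabla_{\theta,\phi}\log\hat Z_{1,K}]$, so $\SNR_{M,K}=\sqrt M\,\SNR_{1,K}$ and the whole problem becomes the $K\to\infty$ behaviour of the mean and variance of a single term $\nabla_{\theta,\phi}\log\hat Z_{1,K}=\overline{\nabla w}/\hat Z_{1,K}$, where $\hat Z_{1,K}=\frac1K\sum_{k}w_k$, $\overline{\nabla w}=\frac1K\sum_k\nabla_{\theta,\phi}w_k$, and $w_k$ denotes the $k$-th weight of the $m=1$ replicate (identically distributed to $w_{1,1}$).

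For the numerators I would write $\hat Z_{1,K}=Z+\delta$ with $\delta=\frac1K\sum_k(w_k-Z)$ mean-zero and $\E[\delta^2]=\mathrm{Var}[w_{1,1}]/K$, and use the geometric expansion $\hat Z_{1,K}^{-1}=Z^{-1}(1-\delta/Z+\delta^2/Z^2-\cdots)$. Multiplying by $\overline{\nabla w}$ and taking expectations, every ``off-diagonal'' product of independent mean-zero sums vanishes, leaving at order $1/K$ only the terms built from $\E[\delta^2]$ and $\E[\delta\,\overline{\nabla w}]$, while Lemma~\ref{the:thirdOrder} certifies that the remaining terms are $O(1/K^2)$. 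The two cases then diverge through the identities $\E[\nabla_\theta w_{1,1}]=\nabla_\theta Z$ (differentiate $Z=\E[w_{1,1}]$, valid under reparameterisation since $\theta$ does not affect the sampling law) versus $\E[\nabla_\phi w_{1,1}]=0$ (because $\E[w_{1,1}]=Z$ does not depend on $\phi$), combined with $\nabla_\theta\mathrm{Var}[w_{1,1}]=2\,\mathrm{Cov}(\nabla_\theta w_{1,1},w_{1,1})$ and $\nabla_\phi\mathrm{Var}[w_{1,1}]=2\,\E[w_{1,1}\nabla_\phi w_{1,1}]$. These collapse the expansions to $\E[\Delta_{1,K}(\theta)]=\nabla_\theta\log Z-\frac1{2K}\nabla_\theta\big(\mathrm{Var}[w_{1,1}]/Z^2\big)+O(1/K^2)$ and $\E[\Delta_{1,K}(\phi)]=-\frac1{2KZ^2}\nabla_\phi\mathrm{Var}[w_{1,1}]+O(1/K^2)$.

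For the denominators I would isolate the leading fluctuation of $\nabla_{\theta,\phi}\log\hat Z_{1,K}$: expanding to first order gives $\nabla_\theta\log\hat Z_{1,K}=\nabla_\theta\log Z+\frac1{KZ}\sum_k w_k\big(\nabla_\theta\log w_k-\nabla_\theta\log Z\big)+(\text{higher order})$, whereas in the $\phi$ case the constant term is absent and the leading fluctuation is $\frac1{KZ}\sum_k\nabla_\phi w_k$. Since $\E\big[w_{1,1}(\nabla_\theta\log w_{1,1}-\nabla_\theta\log Z)\big]=\nabla_\theta Z-\nabla_\theta Z=0$, the variance of the $\theta$-leading term is exactly $\frac1{KZ^2}\E\big[w_{1,1}^2(\nabla_\theta\log w_{1,1}-\nabla_\theta\log Z)^2\big]$, and that of the $\phi$-leading term is $\frac1{KZ^2}\mathrm{Var}[\nabla_\phi w_{1,1}]$. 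Lemma~\ref{the:thirdOrder}, together with the standard $O(1/K^j)$ decay of the central moments of a sample mean, bounds the variances of all higher-order corrections and their covariances with the leading term by $O(1/K^2)$, so $\mathrm{Var}[\nabla_{\theta,\phi}\log\hat Z_{1,K}]$ equals its leading value up to $O(1/K^2)$; a square root via $\sqrt{a+b/K}=\sqrt a+O(1/K)$ gives $\sigma[\Delta_{M,K}]=\frac1{\sqrt M\,Z\sqrt K}\big(\sqrt{\cdot}+O(1/K)\big)$. Dividing the mean by this, taking absolute values, and clearing a common factor of $Z\sqrt K$ (respectively $2Z\sqrt K$ in the $\phi$ case) from numerator and denominator produces the stated expressions.

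The main obstacle is making the Taylor remainders rigorous: $\delta$ is unbounded, so the geometric series and its products with $\overline{\nabla w}$ cannot simply be truncated formally. I would write an exact Lagrange-type remainder after each truncation and bound its first two moments using the hypothesised finiteness of the first four moments of $w_{1,1}$, $\nabla_\theta w_{1,1}$ and $\nabla_\phi w_{1,1}$, together with Cauchy--Schwarz and the $O(1/K^j)$ bounds on the central moments of $\delta$; splitting on the event $\{|\delta|\le Z/2\}$, whose complement has probability $O(1/K^2)$ by Chebyshev, makes the contribution of the ``bad'' event straightforward. A secondary technical point is justifying the interchange of $\nabla_{\theta,\phi}$ with the integrals defining $Z$ and $\E[w_{1,1}^2]$, which follows from the same integrability assumptions and, for $\phi$, from reparameterisability. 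Everything else is bookkeeping with Lemma~\ref{the:thirdOrder}.
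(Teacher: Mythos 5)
Your proposal is correct and follows essentially the same route as the paper: reduce to $M=1$ by i.i.d.\ averaging, expand in the fluctuation $\hat Z_{1,K}-Z$ about $Z$, control the higher-order terms with Lemma~\ref{the:thirdOrder}, and separate the $\theta$ and $\phi$ cases via $\E[\nabla_\theta w_{1,1}]=\nabla_\theta Z$ versus $\nabla_\phi Z=0$. The only difference is organizational --- you expand $\overline{\nabla w}/\hat Z_{1,K}$ by a geometric series where the paper Taylor-expands $\log\hat Z_{1,K}$ with mean-value remainders and then differentiates --- and your explicit handling of the unbounded remainder (truncation on $\{|\delta|\le Z/2\}$ with a fourth-moment tail bound) is, if anything, more careful than the paper's.
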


\begin{proof}
We start by considering the variance of the estimators.   We will first exploit the 
fact that each $\hat{Z}_{m,K}$ is independent and identically distributed and then apply
Taylor's theorem\footnote{This approach follows similar lines
	to the derivation of nested Monte Carlo convergence bounds in~\cite{rainforth2017thesis,rainforth2017opportunities,fort2017mcmc}
and the derivation of the mean squared error for self-normalized
importance sampling, see e.g.~\cite{hesterberg1988advances}.}
%\footnote{Note the significance of using Taylor's theorem
%rather than a Taylor expansion as this allows us to provide a precise form for
%the remainder, rather than having to worry about an infinite series.}  
to 
$\log \hat{Z}_{m,K}$ about $Z$, using $R_1(\cdot)$ to indicate the remainder term, as
follows.
\begin{align}
M& \cdot \text{Var} \left[\Delta_{M,K}\right] = \text{Var} \left[\Delta_{1,K}\right]
=\text{Var} \left[\nabla_{\theta,\phi} \left( 
\log Z + \frac{\hat{Z}_{1,K}-Z}{Z} + R_1\left(\hat{Z}_{1,K}\right)
\right)\right] \displaybreak[0] \nonumber \\
=& \text{Var} \left[\nabla_{\theta,\phi} \left( 
\frac{\hat{Z}_{1,K}-Z}{Z} + R_1\left(\hat{Z}_{1,K}\right)
\right)\right] \displaybreak[0] \nonumber\\
=& \E \left[\left(\nabla_{\theta,\phi} \left( 
\frac{\hat{Z}_{1,K}-Z}{Z} + R_1\left(\hat{Z}_{1,K}\right)
\right)\right)^2\right] -  \left(\E \left[\nabla_{\theta,\phi} \left( 
\frac{\hat{Z}_{1,K}-Z}{Z} + R_1\left(\hat{Z}_{1,K}\right)
\right)\right]\right)^2 \displaybreak[0] \nonumber\\
=&\E \left[\left(\frac{1}{K} \sum_{k=1}^{K} \frac{Z \nabla_{\theta,\phi} w_{1,k}-w_{1,k}\nabla_{\theta,\phi} Z}{Z^2}
+ \nabla_{\theta, \phi} R_1\left(\hat{Z}_{1,K}\right)\right)^2\right]
-  \left(\nabla_{\theta,\phi} \cancelto{0}{\E \left[
	\frac{\hat{Z}_{1,K}-Z}{Z}\right]} + \E \left[ \nabla_{\theta, \phi} R_1\left(\hat{Z}_{1,K}\right)\right]\right)^2 \displaybreak[0] \nonumber\\
\begin{split}
\label{eq:full-var}
=&\frac{1}{KZ^4} \E \left[\left(Z \nabla_{\theta,\phi} w_{1,1}-w_{1,1}\nabla_{\theta,\phi} Z\right)^2\right] + \text{Var} \left[ \nabla_{\theta, \phi} R_1\left(\hat{Z}_{1,K}\right)\right] \\
&+ 2\E \left[\left(\nabla_{\theta, \phi} R_1\left(\hat{Z}_{1,K}\right)\right)
\left(\frac{1}{K} \sum_{k=1}^{K} \frac{Z \nabla_{\theta,\phi} w_{1,k}-w_{1,k}\nabla_{\theta,\phi} Z}{Z^2}\right)\right]
\end{split}
\end{align}
Now by the mean-value form of the remainder in Taylor's Theorem, we have that for some $\tilde{Z}$  between $Z$ and $\hat{Z}_{1,K}$
\begin{align*}
R_1\left(\hat{Z}_{1,K}\right) = -\frac{\left(\hat{Z}_{1,K}-Z\right)^2}{2 \tilde{Z}^2}
\end{align*}
and therefore
\begin{align}
\label{eq:nabla_R1}
\nabla_{\theta,\phi} R_1\left(\hat{Z}_{1,K}\right)
=-\frac{\tilde{Z}\left(\hat{Z}_{1,K}-Z\right)\nabla_{\theta,\phi} \left(\hat{Z}_{1,K}-Z\right)
	-\left(\hat{Z}_{1,K}-Z\right)^2 \nabla_{\theta,\phi}\tilde{Z}}{\tilde{Z}^3}.
\end{align}
By definition, $\tilde{Z} = Z+\alpha (\hat{Z}_{1,K}-Z)$ for some
$0<\alpha<1$.  Furthermore, the moments of $\nabla_{\theta,\phi} \alpha$ must be bounded as otherwise we would end up with non-finite moments of the overall gradients, which
would, in turn, violate our assumptions.  As moments of $(\hat{Z}_{1,K}-Z)$ and $\nabla_{\theta,\phi}(\hat{Z}_{1,K}-Z)$
decrease with $K$, we thus further have that $\tilde{Z}=Z+o(1)$ and $\nabla_{\theta,\phi} \tilde{Z}=\nabla_{\theta,\phi}  Z+o(1)$
where $o(1)$ indicates terms that tend to zero as $K\to\infty$.  By substituting these into~\eqref{eq:nabla_R1} and
doing a further Taylor expansion on $1/\tilde{Z}$ about $1/Z$ we now have
\begin{align*}
\nabla_{\theta,\phi} R_1\left(\hat{Z}_{1,K}\right)
&=-\left(\frac{1}{Z^3}+o(1)\right)\left(\left(Z+o(1)\right)\left(\hat{Z}_{1,K}-Z\right)\nabla_{\theta,\phi} \left(\hat{Z}_{1,K}-Z\right)
	-\left(\hat{Z}_{1,K}-Z\right)^2 \left(\nabla_{\theta,\phi}Z+o(1)\right)\right) \displaybreak[0]\\
&=-\frac{1}{Z^3}\left(\hat{Z}_{1,K}-Z\right) \left(Z\,\nabla_{\theta,\phi} \left(\hat{Z}_{1,K}-Z\right)
-\left(\hat{Z}_{1,K}-Z\right) \nabla_{\theta,\phi}Z\right) +o(\epsilon) \displaybreak[0]\\
&=\frac{1}{Z^3}\left(Z-\hat{Z}_{1,K}\right) \left(Z\,\nabla_{\theta,\phi}\hat{Z}_{1,K}
-\hat{Z}_{1,K}\nabla_{\theta,\phi}Z\right)+o(\epsilon)
\end{align*}
where $o(\epsilon)$ indicates (asymptotically dominated) higher order terms originating from the $o(1)$ terms.
If we now define
\begin{align}
a_k &= \frac{Z-w_{1,k}}{Z^3} \quad \text{and}\\
b_k &= Z\,\nabla_{\theta,\phi} w_{1,K}
-w_{1,K}\nabla_{\theta,\phi}Z
\end{align}
then we have $\E [a_k] = \E [b_k] = 0, \, \forall k\in\{1,\dots,K\}$ and
\begin{align*}
\nabla_{\theta,\phi} R_1\left(\hat{Z}_{1,K}\right) = \left(\frac{1}{K}\sum_{k=1}^{K} a_k\right)
\left(\frac{1}{K}\sum_{k=1}^{K} b_k\right)+o(\epsilon).
\end{align*}
This is in the form required by Lemma~\ref{the:thirdOrder} and satisfies the required assumptions and so we immediately have
\begin{align*}
\text{Var} \left[ \nabla_{\theta, \phi} R_1\left(\hat{Z}_{1,K}\right)\right] = 
\frac{1}{K^3}\textnormal{Var}[a_1 b_1] +\frac{K-1}{K^3}\E [a_1^2] \E[b_1^2]
+ \frac{2K-2}{K^3}\left(\E [a_1b_1]\right)^2+o(\epsilon) = O\left(\frac{1}{K^2}\right)
\end{align*}
by the second result in Lemma~\ref{the:thirdOrder}.  If we further define
\begin{align}
c_k = b_k \quad \forall k\in\{1,\dots,K\}
\end{align}
then we can also use the first result in Lemma~\ref{the:thirdOrder} to give
\begin{align*}
2\E \left[\left(\nabla_{\theta, \phi} R_1\left(\hat{Z}_{1,K}\right)\right)
\left(\frac{1}{K} \sum_{k=1}^{K} \frac{Z \nabla_{\theta,\phi} w_{1,k}-w_{1,k}\nabla_{\theta,\phi} Z}{Z^2}\right)\right]
= \frac{2}{Z^2 K^2}\E \left[a_1 b_1^2\right]+o(\epsilon)
= O\left(\frac{1}{K^2}\right).
\end{align*}

%It follows that the $\nabla_{\theta,\phi} R_1\left(\hat{Z}_{1,K}\right)$ terms are
%dominated as each of $\left(\hat{Z}_{1,K}-Z\right)\nabla_{\theta,\phi} \left(\hat{Z}_{1,K}-Z\right)$
%and $\left(\hat{Z}_{1,K}-Z\right)^2$ vary with the square of the estimator error, whereas
%other comparable terms vary only with the unsquared difference.  The assumptions on
%moments of the weights and their derivatives further guarantee that these terms are finite.
%More precisely, we have $\tilde{Z} = Z+\alpha (\hat{Z}_{1,K}-Z)$ for some
%$0<\alpha<1$ where $\nabla_{\theta,\phi} \alpha$ must be bounded with 
%probability $1$ as $K\to\infty$ to
%maintain our assumptions.  It follows that $\nabla_{\theta,\phi} R_1(\hat{Z}_{1,K}) = O((\hat{Z}_{1,K}-Z)^2)$ and thus that
Substituting these results back into~\eqref{eq:full-var} now gives
\begin{align}
\label{eq:var}
\text{Var} \left[\Delta_{M,K}\right] = 
\frac{1}{MKZ^4} \E \left[\left(Z \nabla_{\theta,\phi} w_{1,1}-w_{1,1}\nabla_{\theta,\phi} Z\right)^2\right]
+\frac{1}{M}O\left(\frac{1}{K^2}\right).
\end{align}
\vspace{10pt}

Considering now the expected gradient estimate and again using Taylor's theorem, this
time to a higher number of terms,
\begin{align}
\E \left[\Delta_{M,K}\right] &= \E \left[\Delta_{1,K}\right]
= \E \left[\Delta_{1,K}-\nabla_{\theta,\phi} \log Z\right] +
\nabla_{\theta,\phi} \log Z\displaybreak[0] \nonumber \\ &= 
\nabla_{\theta,\phi} \E \left[\log Z + \frac{\hat{Z}_{1,K}-Z}{Z} 
- \frac{\left(\hat{Z}_{1,K}-Z\right)^2}{2Z^2} + R_2\left(\hat{Z}_{1,K}\right)-\log Z\right]
+
\nabla_{\theta,\phi} \log Z \displaybreak[0] \nonumber \\
&= 
-\frac{1}{2}\nabla_{\theta,\phi} \E \left[\left(\frac{\hat{Z}_{1,K}-Z}{Z}\right)^2\right]+
\nabla_{\theta,\phi} \E \left[R_2\left(\hat{Z}_{1,K}\right)\right] +\nabla_{\theta,\phi} \log Z \displaybreak[0] \nonumber \\
&= -\frac{1}{2}\nabla_{\theta,\phi} \left(\frac{\text{Var} \left[\hat{Z}_{1,K}\right]}{Z^2}\right)
+\nabla_{\theta,\phi} \E \left[R_2\left(\hat{Z}_{1,K}\right)\right] +\nabla_{\theta,\phi} \log Z \displaybreak[0] \nonumber \\
&= -\frac{1}{2K}\nabla_{\theta,\phi} \left(\frac{\text{Var} \left[w_{1,1}\right]}{Z^2}\right)
+\nabla_{\theta,\phi} \E \left[R_2\left(\hat{Z}_{1,K}\right)\right]+\nabla_{\theta,\phi} \log Z . \label{eq:mean}
\end{align}
Here we have
\begin{align*}
R_2\left(\hat{Z}_{1,K}\right) = \frac{\left(\hat{Z}_{1,K}-Z\right)^3}{3 \check{Z}^3}
\end{align*}
with $\check{Z}$ defined analogously to $\tilde{Z}$ but not necessarily taking on the same value.
Using $\check{Z} = Z+o(1)$ and $\nabla_{\theta,\phi}\check{Z} = \nabla_{\theta,\phi} Z+o(1)$ as before and again applying a Taylor expansion about $1/Z$ now yields
\begin{align*}
\nabla_{\theta,\phi} \E \left[R_2\left(\hat{Z}_{1,K}\right)\right]
&= \nabla_{\theta,\phi} \left(\frac{1}{3Z^3}\E \left[\left(\frac{1}{K}\sum_{k=1}^{K} w_{1,k}-Z\right)^3\right]\right)+o(\epsilon)
\end{align*}
and thus by applying Lemma~\ref{the:thirdOrder} with $a_k=b_k=c_k=w_{1,k}-Z$ we have
\begin{align*}
\nabla_{\theta,\phi} \E \left[R_2\left(\hat{Z}_{1,K}\right)\right]
&= \frac{1}{K^2}\nabla_{\theta,\phi} \left(\frac{1}{3 Z^3}\E[\left(w_{1,k}-Z\right)^3]\right)+o(\epsilon)
=O\left(\frac{1}{K^2}\right).
\end{align*}
%
%Using an analogous process as in variance case (we omit the laborious algebra), it is now straightforward to show that
%$\nabla_{\theta,\phi} \E \left[R_2\left(\hat{Z}_{1,K}\right)\right]=O(1/K^2)$, which is
%thus similarly dominated. Thi us~\eqref{eq:expt}).
Substituting this back into~\eqref{eq:mean} now yields
\begin{align}
\label{eq:bias-final}
\E \left[\Delta_{M,K}\right] = \nabla_{\theta,\phi} \log Z
-\frac{1}{2K}\nabla_{\theta,\phi} \left(\frac{\text{Var} \left[w_{1,1}\right]}{Z^2}\right)
+O\left(\frac{1}{K^2}\right)
\end{align}

Finally, by combing~\eqref{eq:var} and~\eqref{eq:bias-final}, and noting that
$\sqrt{\frac{A}{K}+\frac{B}{K^2}} = \frac{A}{\sqrt{K}}+\frac{B}{2AK^{3/2}}+O\left(\frac{1}{K^{(5/2)}}\right)$, we have
for $\theta$
\begin{align}
\SNR_{M,K} (\theta)
&= \left|\frac{\nabla_{\theta} \log Z-\frac{1}{2K}\nabla_{\theta} \left(\frac{\text{Var} \left[w_{1,1}\right]}{Z^2}\right)
	+O\left(\frac{1}{K^2}\right)}
{\sqrt{\frac{1}{MKZ^4} \E \left[\left(Z \nabla_{\theta} w_{1,1}-w_{1,1}\nabla_{\theta} Z\right)^2\right]
	+\frac{1}{M}O\left(\frac{1}{K^2}\right)}}\right| 
\\ &= \sqrt{M}\left|\frac{Z^2\sqrt{K}
	\left(\nabla_{\theta} \log Z -\frac{1}{2K}\nabla_{\theta} \left(\frac{\text{Var} \left[w_{1,1}\right]}{Z^2}\right)\right) 
	+ O\left(\frac{1}{K^{3/2}}\right) }
{\sqrt{\E \left[\left(Z \nabla_{\theta} w_{1,1}-w_{1,1}\nabla_{\theta} Z\right)^2\right]} +O\left(\frac{1}{K}\right)}\right| 
\displaybreak[0] \\
&= \sqrt{M}\left|\frac{ \sqrt{K} \; 
	\nabla_{\theta} Z -\frac{1}{2Z\sqrt{K}}\nabla_{\theta} \left(\frac{\text{Var} \left[w_{1,1}\right]}{Z^2}\right)+ O\left(\frac{1}{K^{3/2}}\right) }
{\sqrt{\E \left[w_{1,1}^2\left(\nabla_{\theta} \log w_{1,1}-\nabla_{\theta} \log Z\right)^2\right]} + O\left(\frac{1}{K}\right)}\right| =O\left(\sqrt{MK}\right).
\end{align}
For $\phi$, then because $\nabla_{\phi} Z = 0$, we instead have
\begin{align}
\SNR_{M,K} (\phi) = \sqrt{M} \left|\frac{
	\nabla_{\phi} \text{Var} \left[w_{1,1}\right] + O\left(\frac{1}{K}\right) }
{2 Z\sqrt{K} \; \sigma\left[\nabla_{\phi} w_{1,1}\right] +O\left(\frac{1}{\sqrt{K}}\right)}\right| = O\left(\sqrt{\frac{M}{K}}\right)
\end{align}
and we are done.
\end{proof}
% !Tex root=./main.tex

\vspace{-4pt}

\section{Derivation of Optimal Parameters for Gaussian Experiment}
\label{sec:optGauss}

\vspace{-4pt}

To derive the optimal parameters for the Gaussian experiment we first note that
\begin{align*}
\mathcal J(\theta, \phi) %&:= \frac{1}{N} \sum_{n = 1}^N \ELBO_{\text{IS}}(\theta, \phi, x^{(n)}) \\
&= \frac{1}{N}\log \prod_{n=1}^{N} p_{\theta}(x^{(n)}) - \frac{1}{N}\sum_{n=1}^{N} \KL{Q_{\phi}(z_{1:K}|x^{(n)})}{P_{\theta}(z_{1:K}|x^{(n)})} \quad \text{where}\\
P_{\theta}(z_{1:K}|x^{(n)}) &= \frac{1}{K} \sum_{k=1}^{K}
q_{\phi} (z_1 | x^{(n)}) \dots q_{\phi} (z_{k-1} | x^{(n)}) p_{\theta} (z_k | x^{(n)}) 
q_{\phi} (z_{k+1} | x^{(n)}) \dots q_{\phi} (z_{K} | x^{(n)}),
\end{align*}
$Q_{\phi}(z_{1:K}|x^{(n)})$ is as per~\eqref{eq:background/q_is_z_is} 
 and the form of the \gls{KL} is taken from~\cite{le2017auto}.
Next, we note that $\phi$ only controls the mean of the proposal so, while it is not possible to drive the
$\textsc{KL}$ to zero, it will be minimized for any particular $\theta$ when the means of $q_{\phi}(z|x^{(n)})$
and $p_{\theta}(z|x^{(n)})$ are the same.  
Furthermore, the corresponding minimum possible value of the \textsc{KL} is independent of
$\theta$ and so we can
calculate the optimum pair $(\theta^*,\phi^*)$ by first optimizing for $\theta$ and then choosing the matching $\phi$.
The optimal $\theta$ maximizes $\log \prod_{n=1}^{N} p_{\theta}(x^{(n)})$, giving $\theta^* := \mu^* = \frac{1}{N} \sum_{n = 1}^N x^{(n)}$.
As we straightforwardly have $p_{\theta} (z | x^{(n)}) = 
\mathcal{N}(z; \left(x^{(n)}+\mu\right)/2, I/2)$, the \text{KL} is then minimized
when $A=I/2$ and $b=\mu/2$, giving $\phi^* := (A^*, b^*)$, where $A^* = I / 2$ and $b^* = \mu^* / 2$.

\vspace{-4pt}
% !Tex root=tb_icml_2018.tex

\section{Additional Empirical Analysis of SNR}
\label{sec:app:emp}

\subsection{Histograms for VAE}
\label{sec:hist-vae}

To complete the picture for the effect of $M$ and $K$ on the distribution
of the gradients, we generated histograms for the $K=1$ (i.e. \gls{VAE})
gradients as $M$ is varied.  As shown in Figure~\ref{fig:snr/b_hist_vae},
we see the expected effect from the law of large numbers that the 
variance of the estimates decreases with $M$, but not the expected value.

\begin{figure*}[h]
	\centering
%	\begin{subfigure}[b]{0.49\textwidth}
%		\centering
%		\includegraphics[width=\textwidth]{b_hist_IWAE}
%		\caption{\gls{IWAE} inference network gradient estimates \label{fig:snr/b_hist_iwae}}
%	\end{subfigure}
		\begin{subfigure}[b]{0.45\textwidth}
			\centering
			\includegraphics[width=\textwidth]{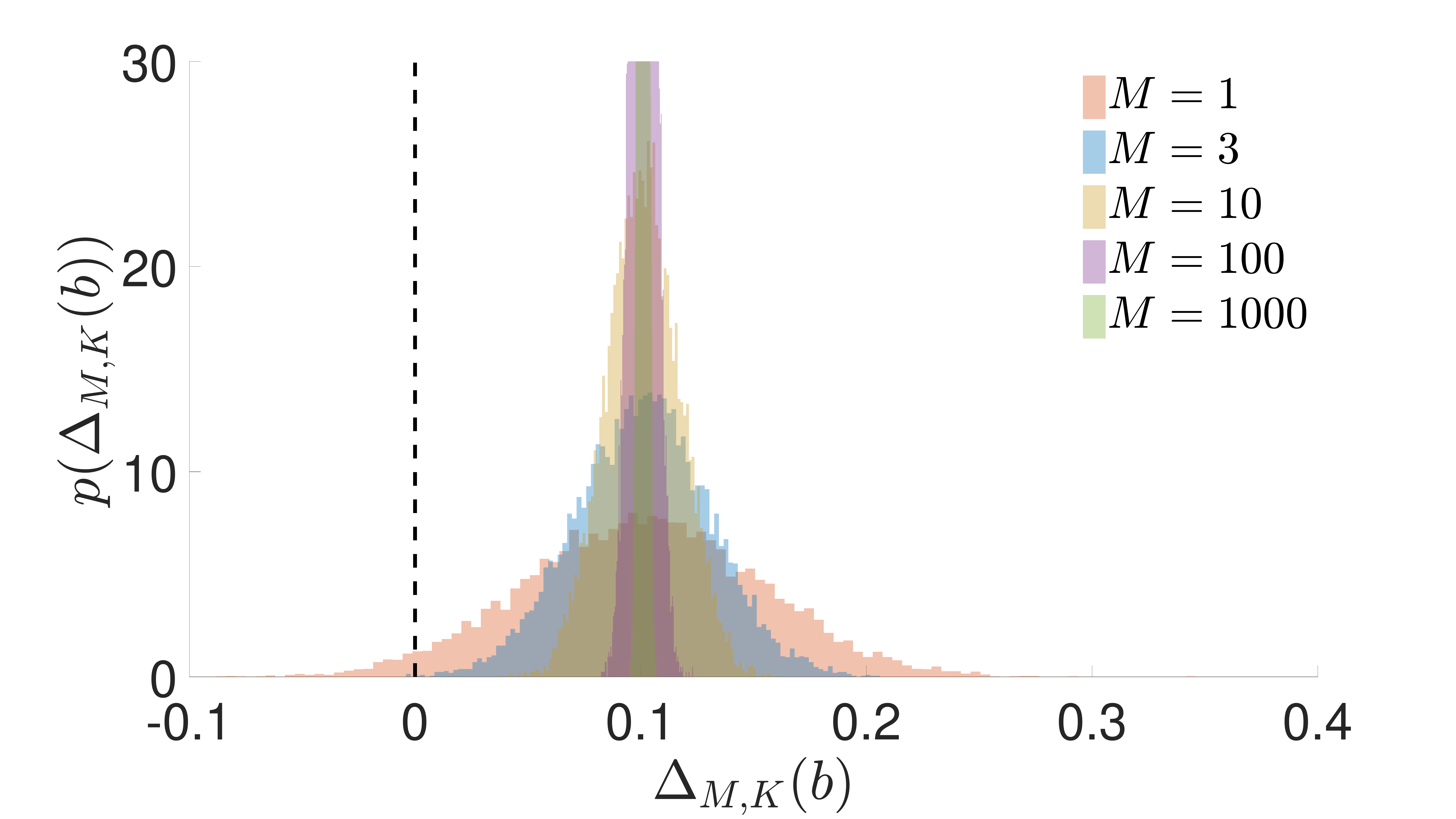}
			\caption{\gls{VAE} inference network gradient estimates \label{fig:snr/b_hist_vae}}
		\end{subfigure} ~~~~~~~~~~
%	\begin{subfigure}[b]{0.49\textwidth}
%		\centering
%		\includegraphics[width=\textwidth]{mu_hist_IWAE}
%		\caption{\gls{IWAE} generative network gradient estimates \label{fig:snr/mu_hist_iwae}}
%	\end{subfigure}
		\begin{subfigure}[b]{0.45\textwidth}
			\centering
			\includegraphics[width=\textwidth]{mu_hist_VAE}
			\caption{\gls{VAE} generative network gradient estimates \label{fig:snr/mu_hist_vae}}
		\end{subfigure}
	\caption{Histograms of gradient estimates $\Delta_{M,K}$ for the generative network and 
		the	inference network using the \gls{VAE} ($K=1$) objectives with different values of $M$. \vspace{-12pt}
		\label{fig:snr/hists-vae}}
\end{figure*}

\subsection{Convergence of RMSE for Generative Network}
\label{sec:app:rmse}

As explained in the main paper, the SNR is not an entirely appropriate metric for
the generative network -- a low SNR is still highly problematic, but a high SNR
does not indicate good performance.
It is thus perhaps better to measure
the quality of the gradient estimates for the generative network by looking at the \gls{RMSE}
to $\nabla_{\mu} \log Z$, i.e. $\sqrt{\E \left[\lVert\Delta_{M,K}-\nabla_{\mu} \log Z\rVert_2^2\right]}$.
The convergence of this \gls{RMSE}  is shown in
Figure~\ref{fig:snr/rmse} where the solid lines are the \gls{RMSE} estimates using $10^4$ runs 
and the shaded regions
show the interquartile range of the individual estimates. We see that increasing 
$M$ in the \gls{VAE} reduces the variance
of the estimates but has negligible effect on the \gls{RMSE} due to the fixed bias.  On the
other hand,
we see that increasing $K$ leads to a monotonic improvement, initially improving
at a rate $O(1/K)$ (because the bias is the dominating term in this region),
before settling to the standard Monte Carlo convergence rate of $O(1/\sqrt{K})$
(shown by the dashed lines).

\begin{figure}[h!]
	\centering
	\includegraphics[width=0.47\textwidth]{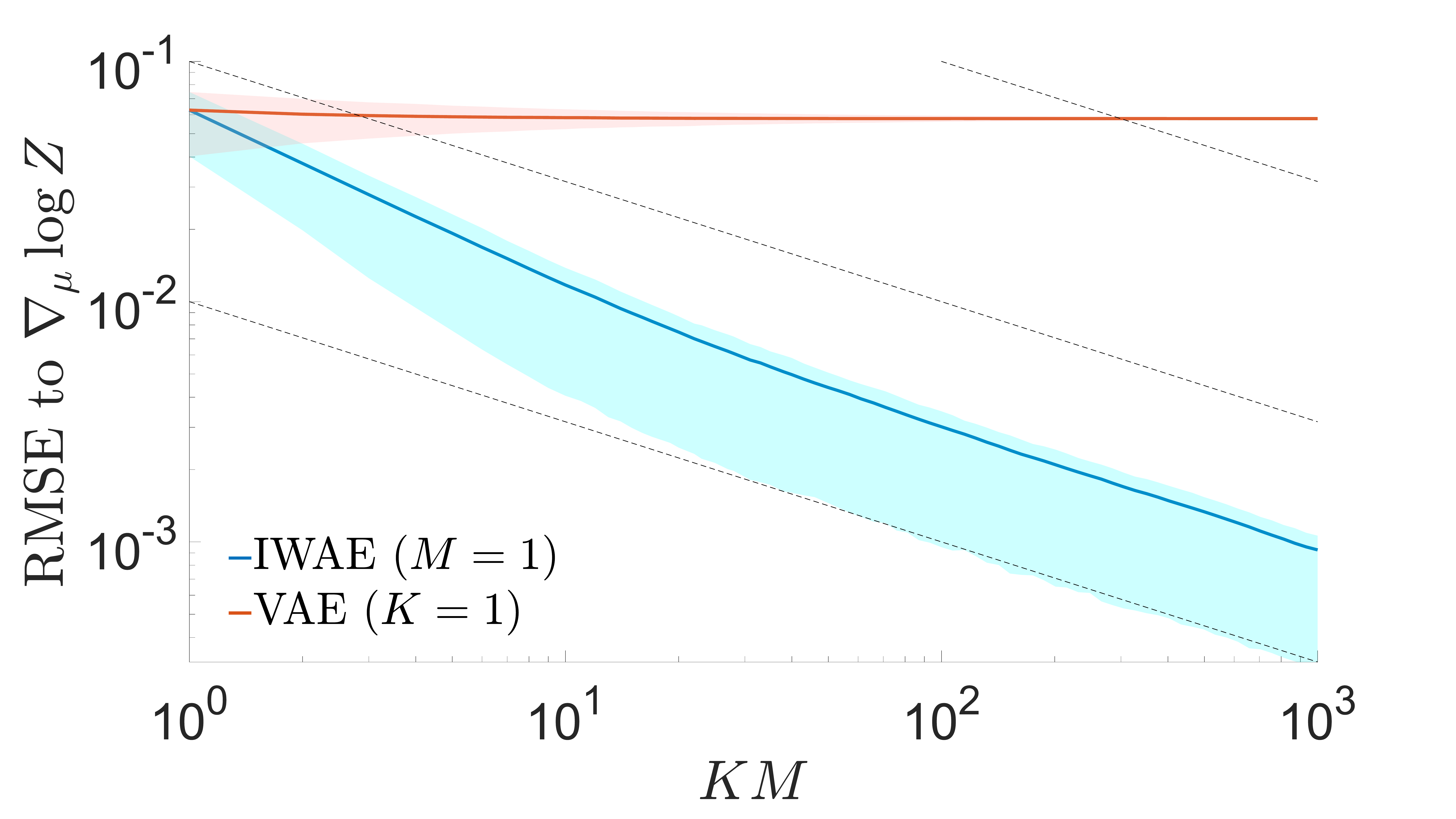}
	\caption{RMSE in $\mu$ gradient estimate to $\nabla_{\mu} \log Z$ 
		\label{fig:snr/rmse}}
\end{figure} 

\newpage

\subsection{Experimental Results for High Variance Regime}
\label{sec:hv}

\vspace{-4pt}

We now present empirical results for a case where our weights are higher variance. Instead
of choosing a point close to the optimum by offsetting parameters with a standard deviation of $0.01$, 
we instead offset using a standard deviation of $0.5$.  We further increased the proposal covariance to $I$
to make it more diffuse.  This is now a scenario where the model is far from its optimum and the proposal
is a very poor match for the model, giving very high variance weights.

We see that the behavior is the
same for variation in $M$, but somewhat distinct for variation in $K$.  
In particular, the \gls{SNR} and \textsc{dsnr} only decrease slowly with $K$ for the inference network, while increasing $K$ no longer has much benefit for the \gls{SNR} of the
inference network.
%Increasing $K$ now also has a remarkably similar impact on the \gls{SNR} and \textsc{dsnr} of
%the gradients of the generative network as on the
%inference network.  
It is clear that, for this
setup, the problem is very far from the asymptotic regime in $K$ such that our theoretical results no
longer directly apply.  Nonetheless, the high-level effect observed is still that the \gls{SNR} of 
the inference
network deteriorates, albeit slowly, as $K$ increases.

\begin{figure}[h]
	\centering
	\begin{subfigure}[b]{0.45\textwidth}
		\centering
		\includegraphics[width=\textwidth]{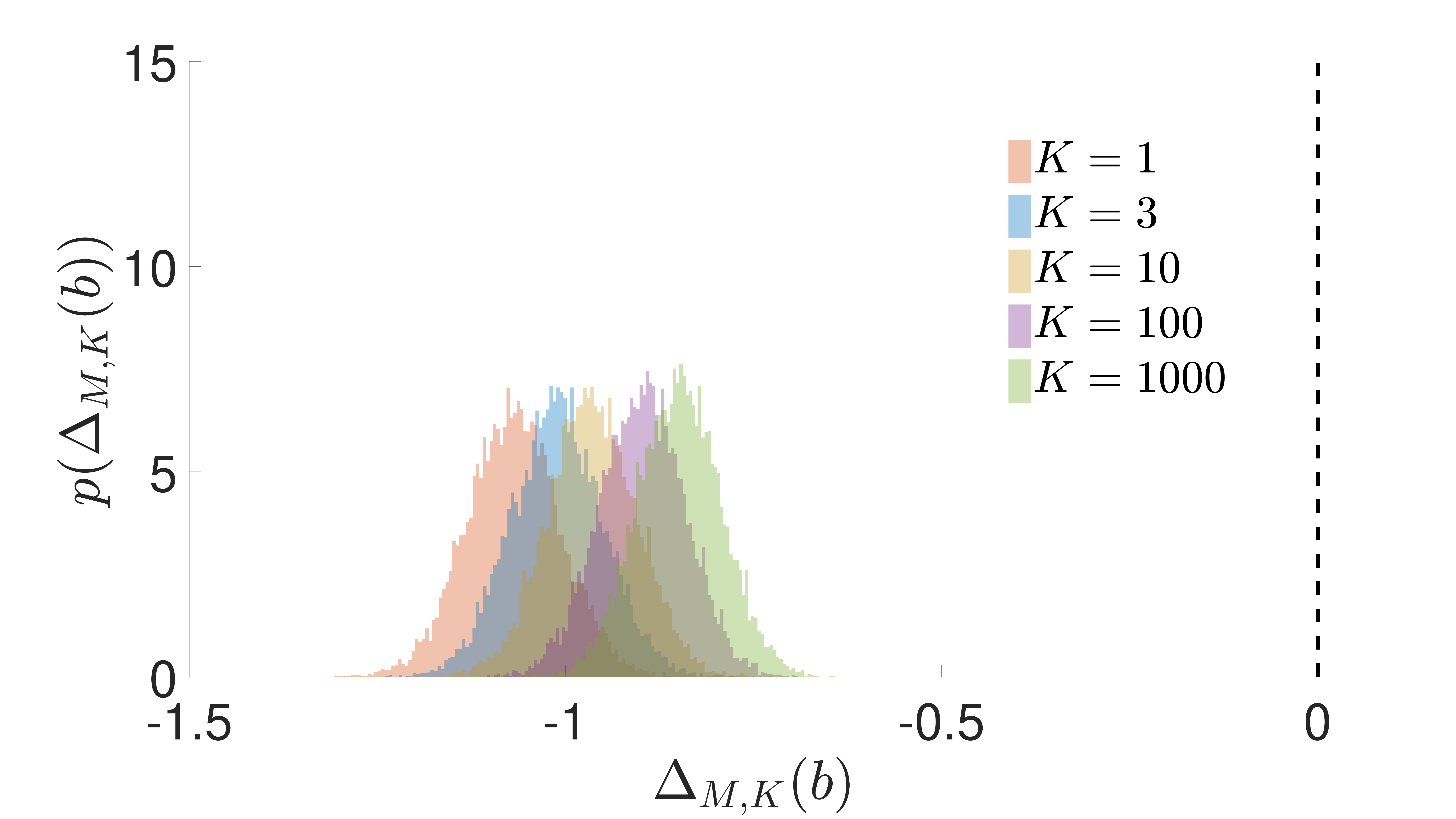}
		\caption{\gls{IWAE} inference network gradient estimates \label{fig:hv/b_hist_iwae}}
	\end{subfigure} ~~~~~~~~~~
	\begin{subfigure}[b]{0.45\textwidth}
		\centering
		\includegraphics[width=\textwidth]{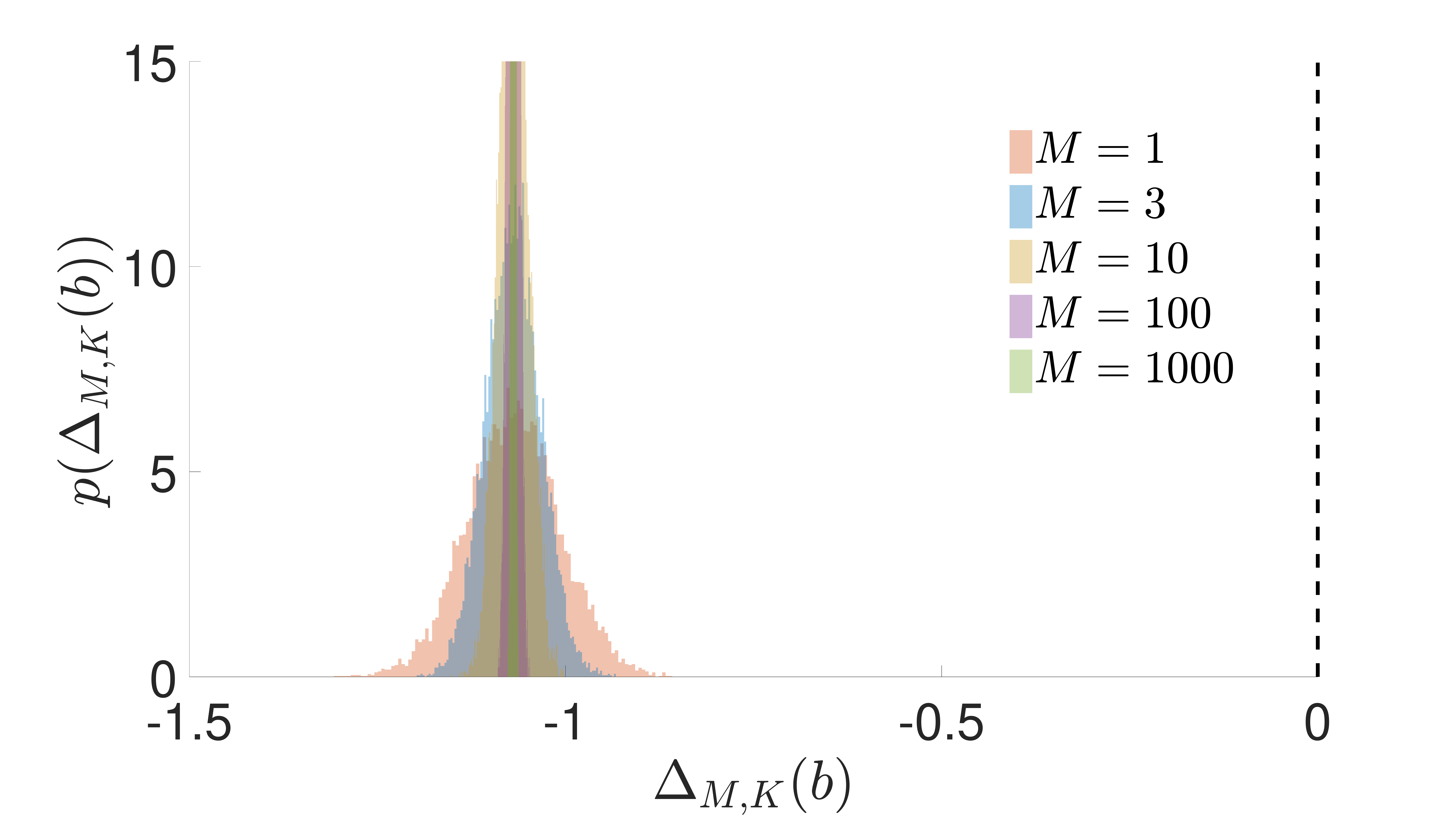}
		\caption{\gls{VAE} inference network gradient estimates \label{fig:hv/b_hist_vae}}
	\end{subfigure}\\
	\begin{subfigure}[b]{0.45\textwidth}
		\centering
		\includegraphics[width=\textwidth]{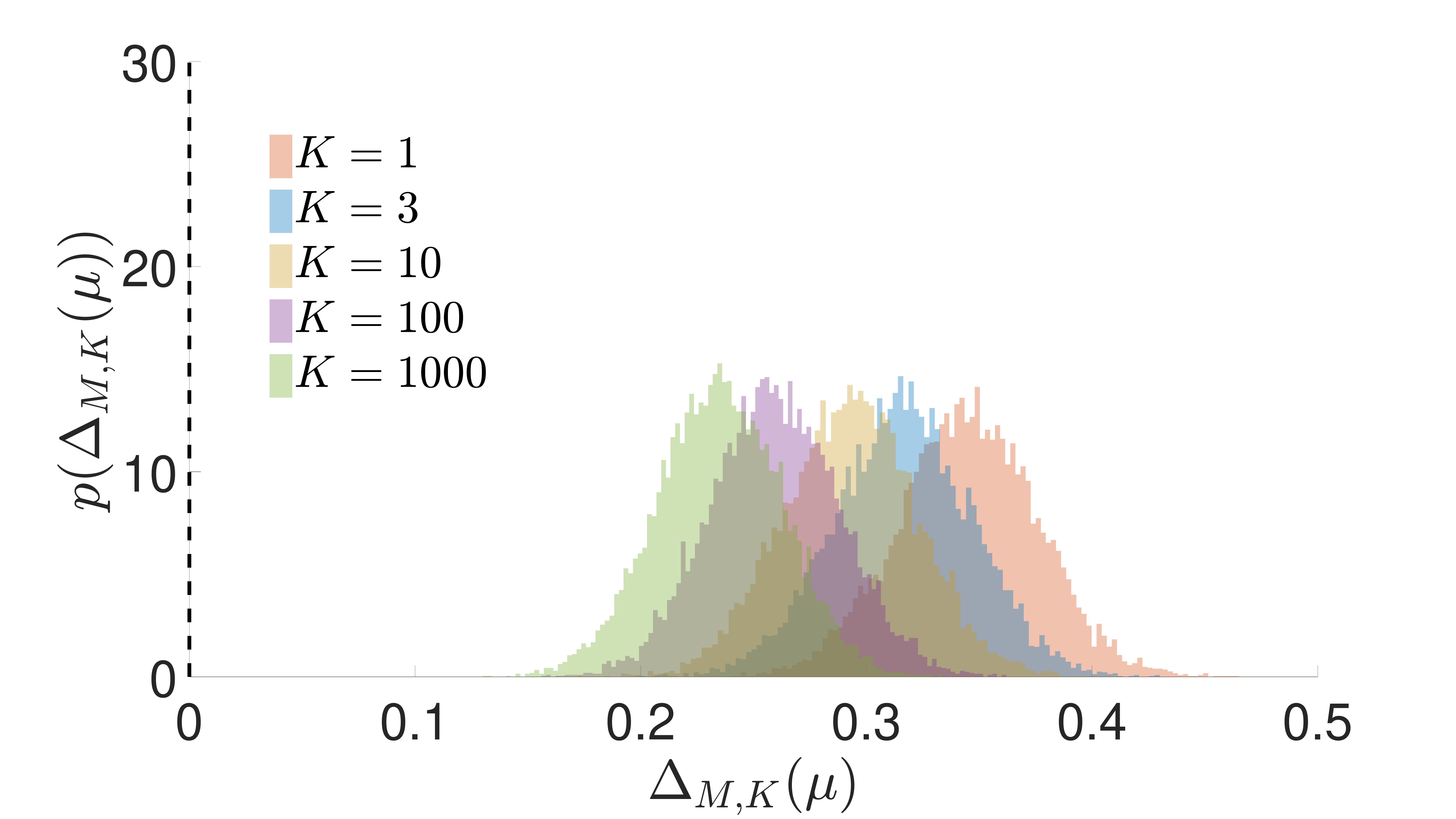}
		\caption{\gls{IWAE} generative network gradient estimates \label{fig:hv/mu_hist_iwae}}
	\end{subfigure} ~~~~~~~~~~
	\begin{subfigure}[b]{0.45\textwidth}
		\centering
		\includegraphics[width=\textwidth]{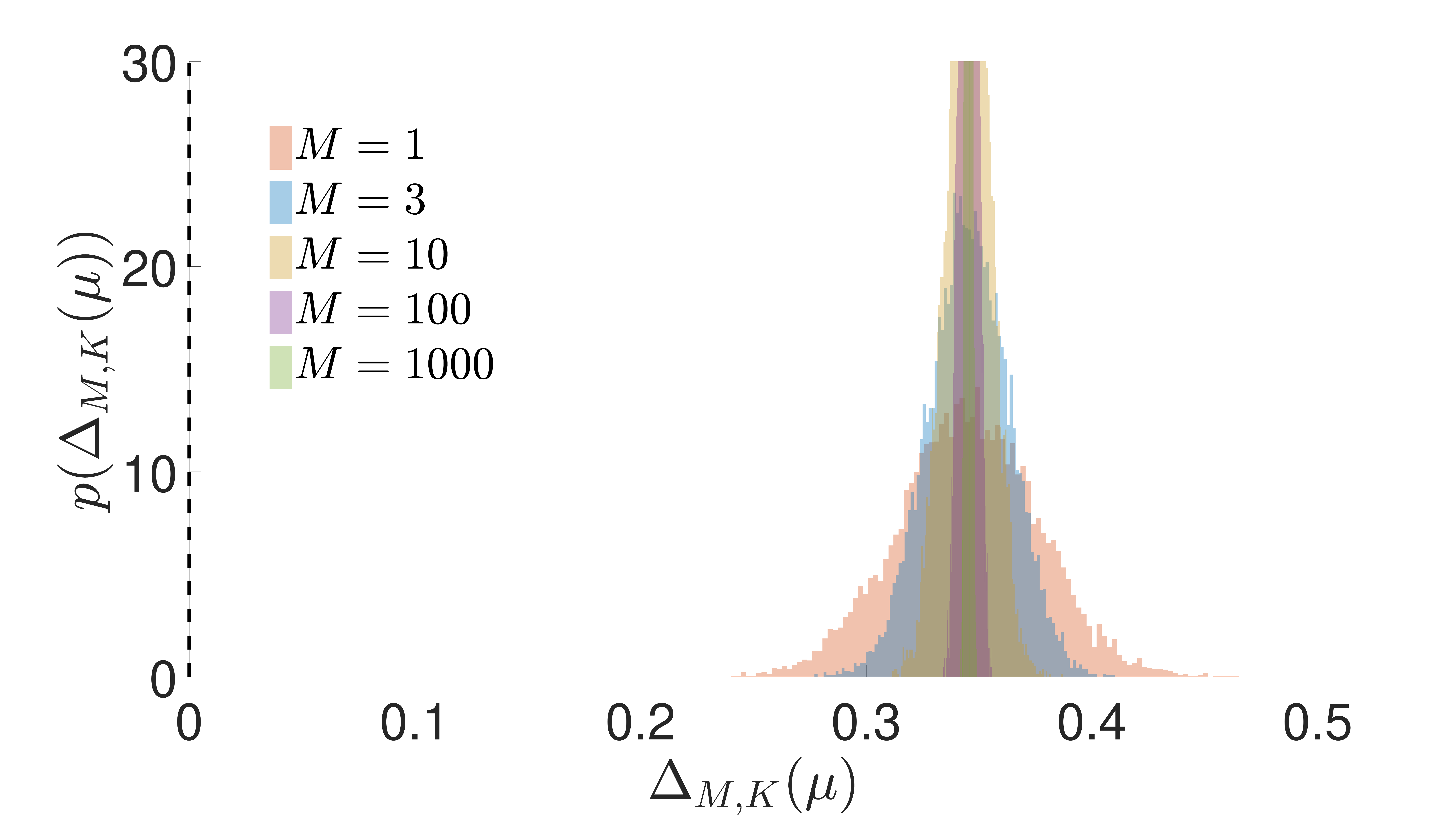}
		\caption{\gls{VAE} generative network gradient estimates \label{fig:hv/mu_hist_vae}}
	\end{subfigure}
	\caption{Histograms of gradient estimates as per Figure~\ref{fig:snr/hists}.
		\label{fig:hv/hists}
	\vspace{-4pt}}
\end{figure}

\begin{figure}[h]
	\centering
	\begin{subfigure}[b]{0.45\textwidth}
		\centering
		\includegraphics[width=\textwidth]{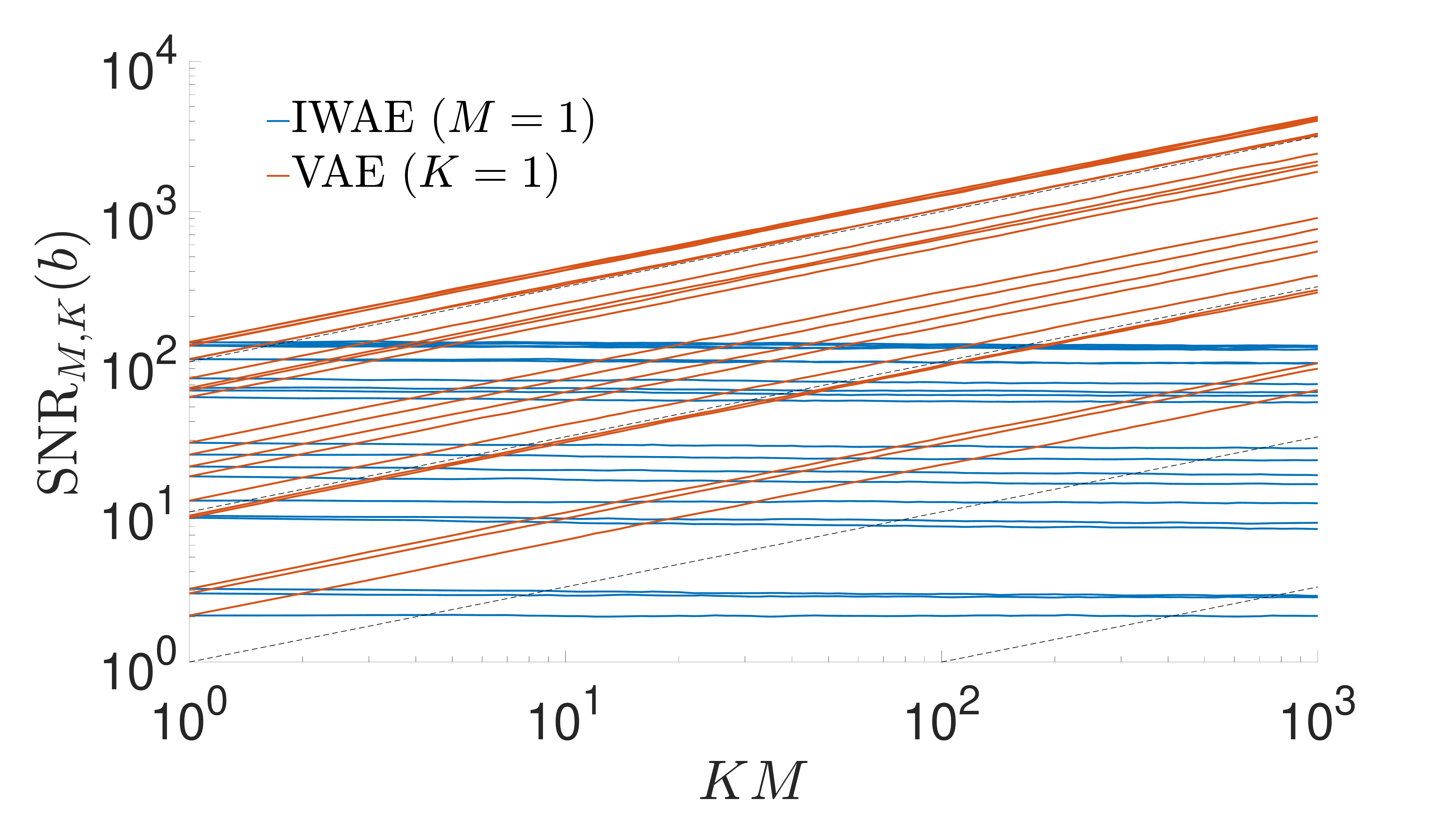}
		\caption{Convergence of \gls{SNR} for inference network \label{fig:hv/b}}
	\end{subfigure} ~~~~~~~~~~
	\begin{subfigure}[b]{0.45\textwidth}
		\centering
		\includegraphics[width=\textwidth]{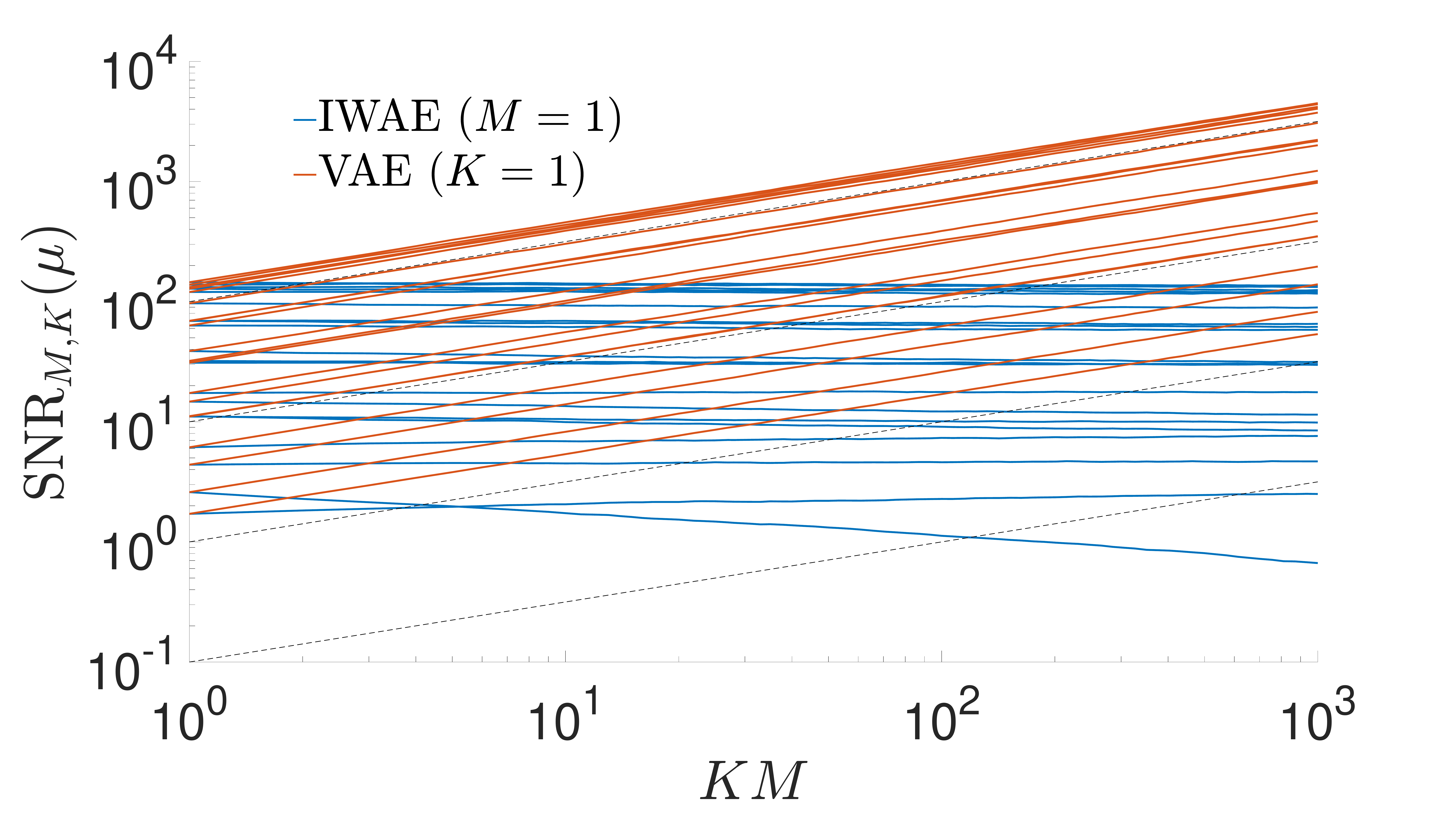}
		\caption{Convergence of \gls{SNR} for generative network\label{fig:hv/mu}}
	\end{subfigure}
	\caption{Convergence of signal-to-noise ratios of gradient estimates
		as per Figure~\ref{fig:snr/K_conv}.
		\label{fig:hv/K_conv}}
\end{figure}

\begin{figure}[h]
	\centering
	\begin{subfigure}[b]{0.45\textwidth}
		\centering
		\includegraphics[width=\textwidth]{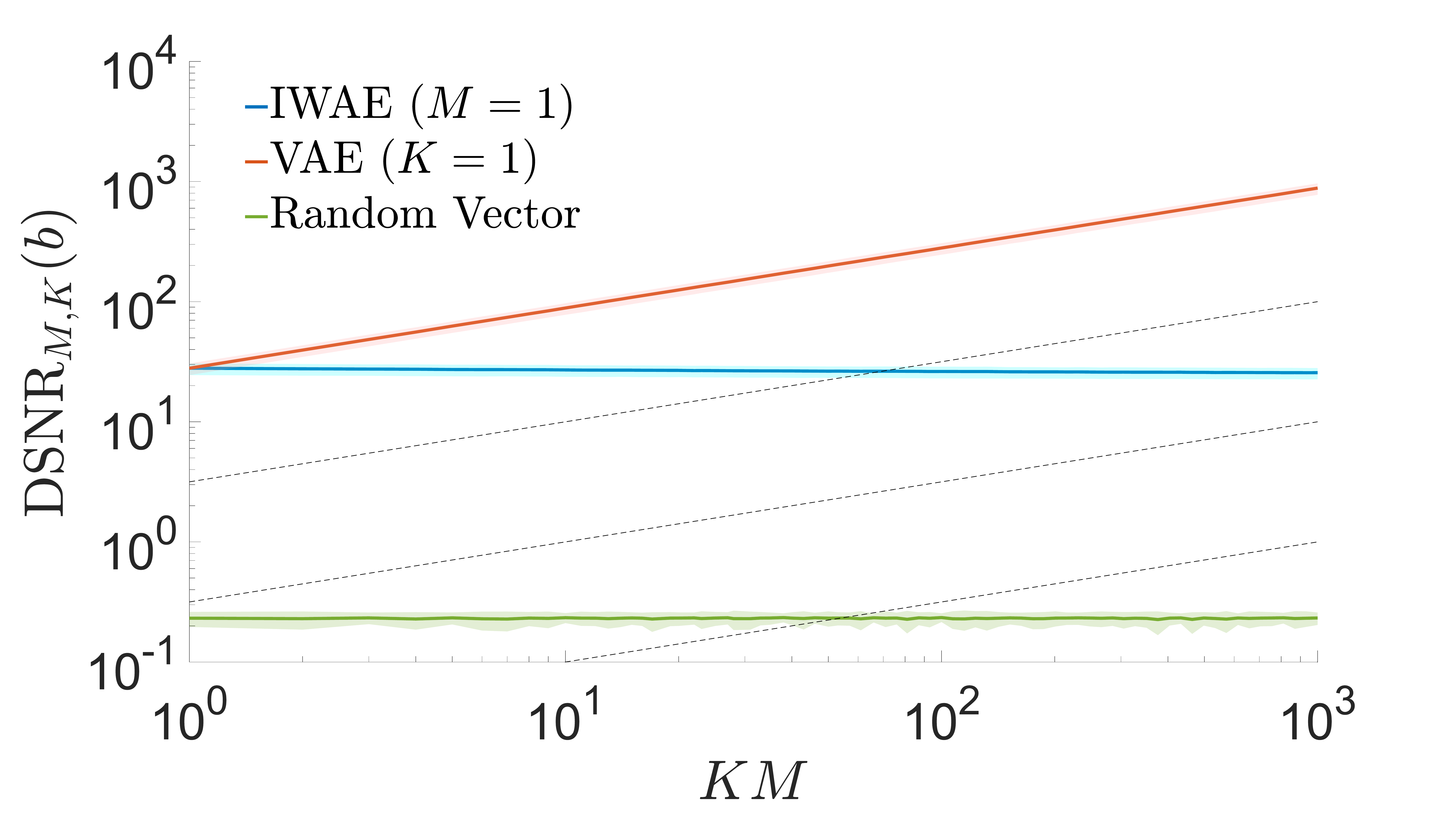}
		\caption{Convergence of \textsc{dsnr} for inference network\label{fig:hv/snr_dir}}
	\end{subfigure}~~~~~~~~~~
	\begin{subfigure}[b]{0.45\textwidth}
		\centering
		\includegraphics[width=\textwidth]{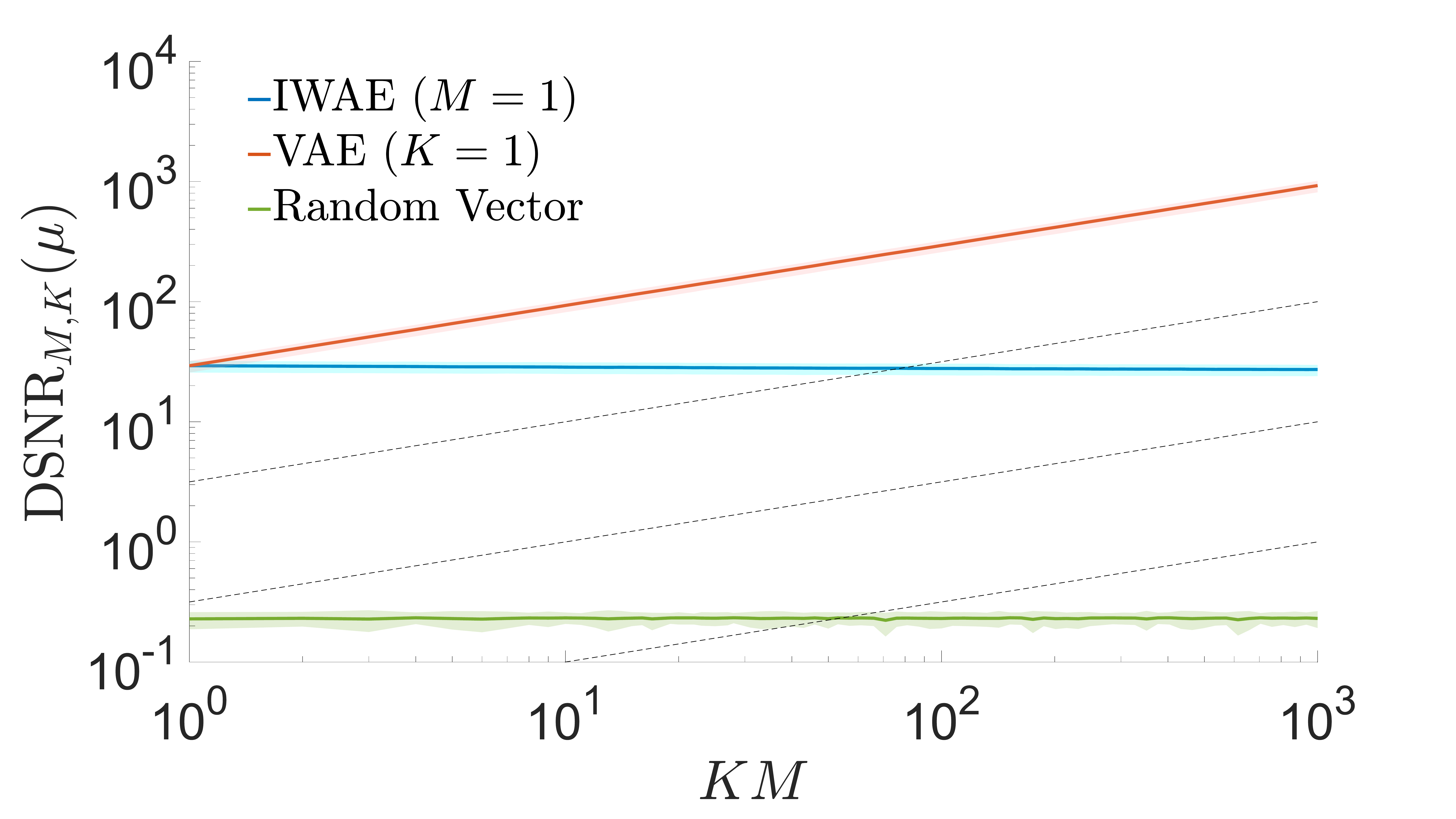}
		\caption{Convergence of \textsc{dsnr} for generative network\label{fig:hv/snr_dir_mu}}
	\end{subfigure}
	\caption{Convergence of directional signal-to-noise ratio of gradients estimates 
		as per Figure~\ref{fig:snr/extra}.
		\label{fig:hv/extra_end}}
\end{figure}

\begin{figure}[h]
	\centering
	\begin{subfigure}[b]{0.45\textwidth}
		\centering
		\includegraphics[width=\textwidth]{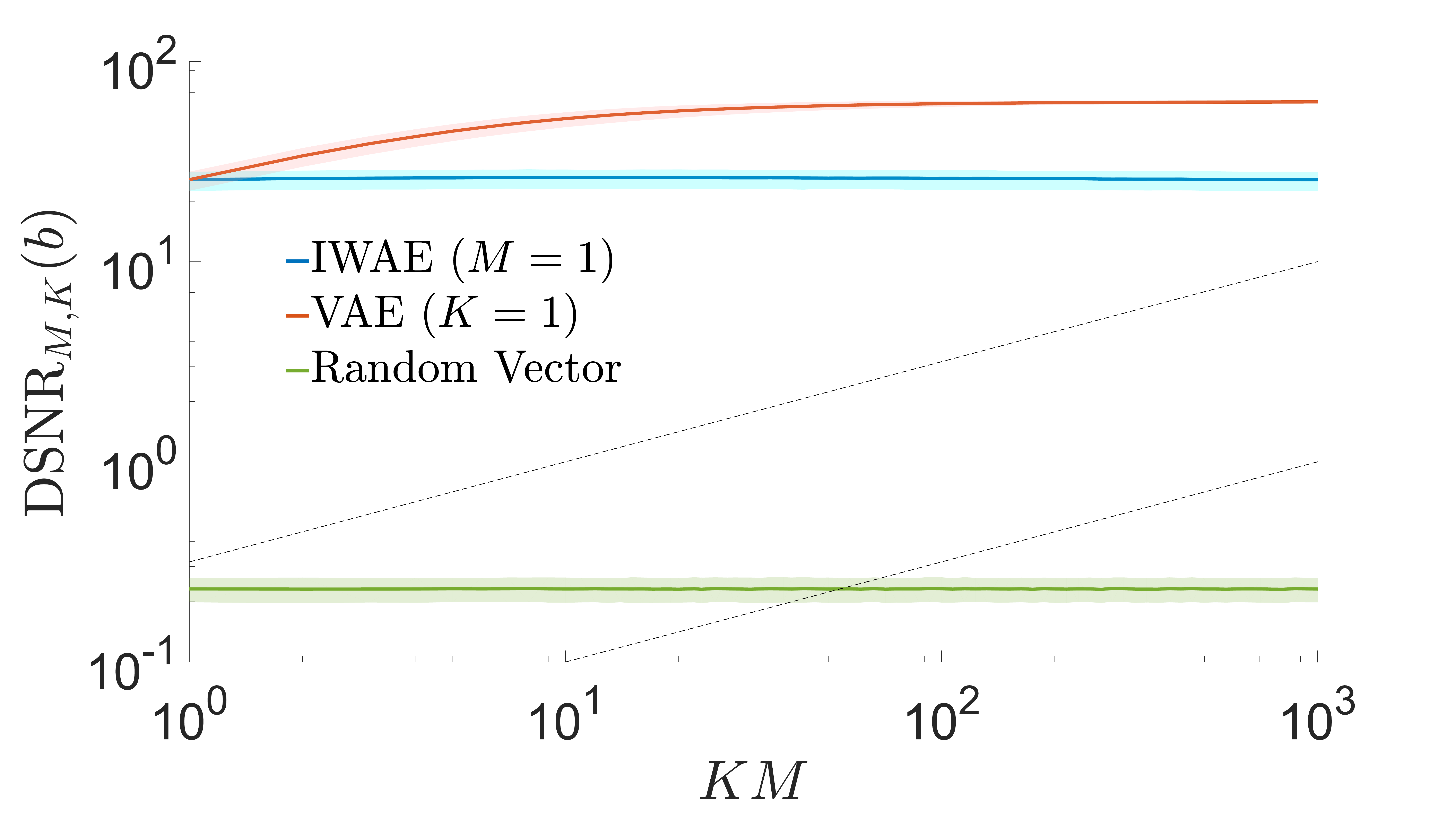}
		\caption{Convergence of \textsc{dsnr} for inference network\label{fig:snr/hv_snr_dir_end}}
	\end{subfigure} ~~~~~~~~~~
	\begin{subfigure}[b]{0.45\textwidth}
		\centering
		\includegraphics[width=\textwidth]{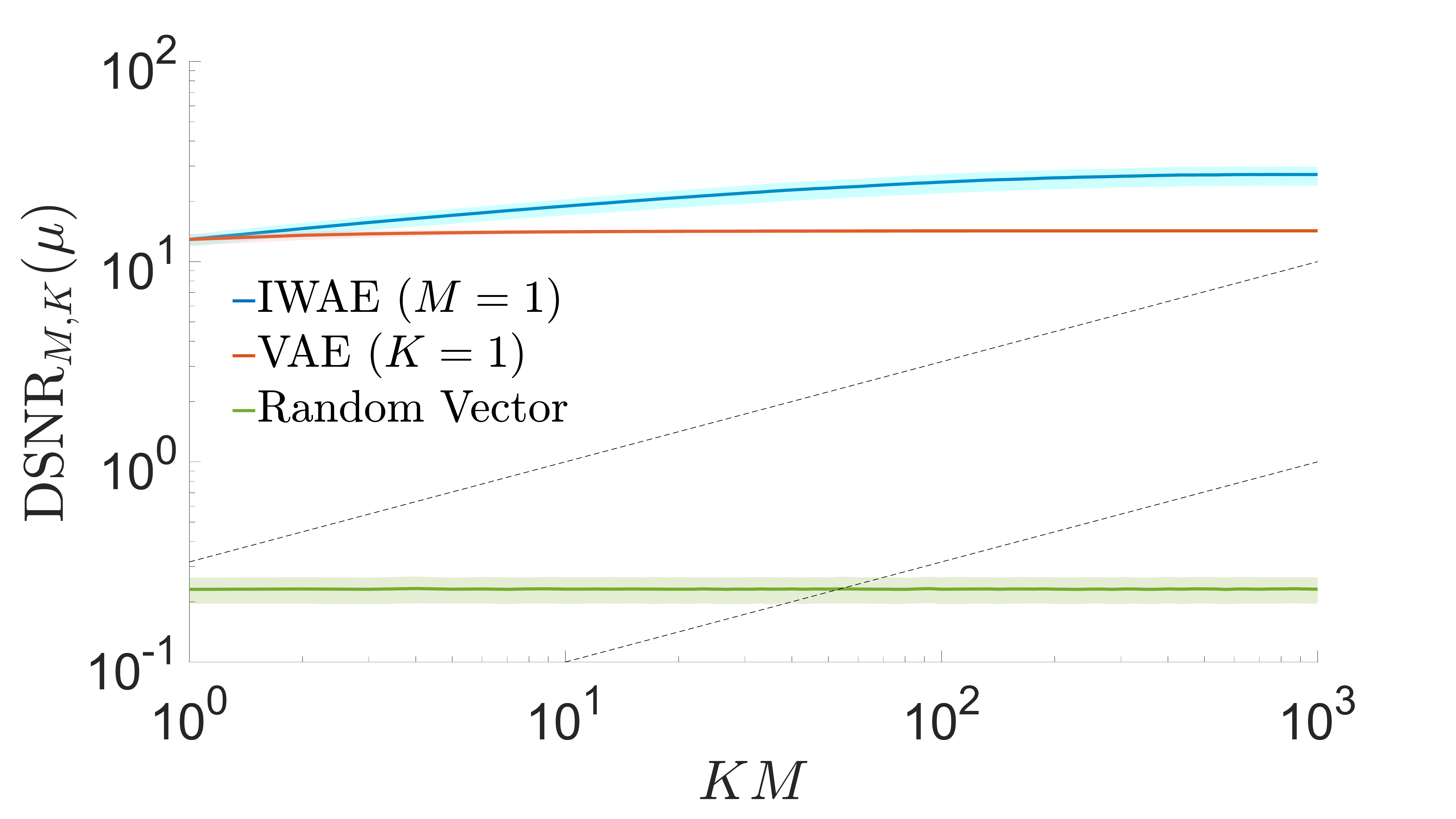}
		\caption{Convergence of \textsc{dsnr} for generative network\label{fig:snr/hv_snr_dir_mu_end}}
	\end{subfigure}
	\caption{Convergence of directional signal-to-noise ratio of gradient estimates where the
		true gradient is taken as $\E \left[\Delta_{1,1000}\right]$ as per
		Figure~\ref{fig:snr/extra_end}.
		\label{fig:snr/hv_extra_end}}
\end{figure}

\newpage
% !Tex root=tb_icml_2018.tex

\section{Convergence of Deep Generative Model for Alternative Parameter Settings}
\label{sec:app:exp-algs}

Figure~\ref{fig-app:mnistexpt/convergence} shows the convergence of the introduced algorithms under different
settings to those shown in Figure~\ref{fig:mnistexpt/convergence}. Namely we consider $M=4, K=16$ for~\gls{PIWAE} and~\gls{MIWAE} 
and $\beta = 0.05$ for~\gls{CIWAE}.  These settings all represent tighter bounds than those of the main paper.
Similar behavior is seen in terms of the~\gls{IWAE}-64 metric for all algorithms.  \gls{PIWAE} produced
similar mean behavior for all metrics, though the variance was noticeably increased for $\log \hat{p}(x)$.
For~\gls{CIWAE} and~\gls{MIWAE}, we see that the parameter settings represent an explicit trade-off between
the generative network and the inference network:  $\log \hat{p}(x)$ was noticeably increased for both, matching
that of~\gls{IWAE}, while $-\textsc{KL}(Q_{\phi}(z \given x) || P_{\theta}(z \given x))$ was reduced.
Critically, we see here that, as observed for~\gls{PIWAE} in the main paper,~\gls{MIWAE} and~\gls{CIWAE} are able to
match the generative model performance of~\gls{IWAE} whilst improving the KL metric, indicating that they have learned
better inference networks.

\begin{figure*}[h]
	\centering
   	\begin{subfigure}[b]{0.33\textwidth}
        \centering
        \includegraphics[width=\textwidth]{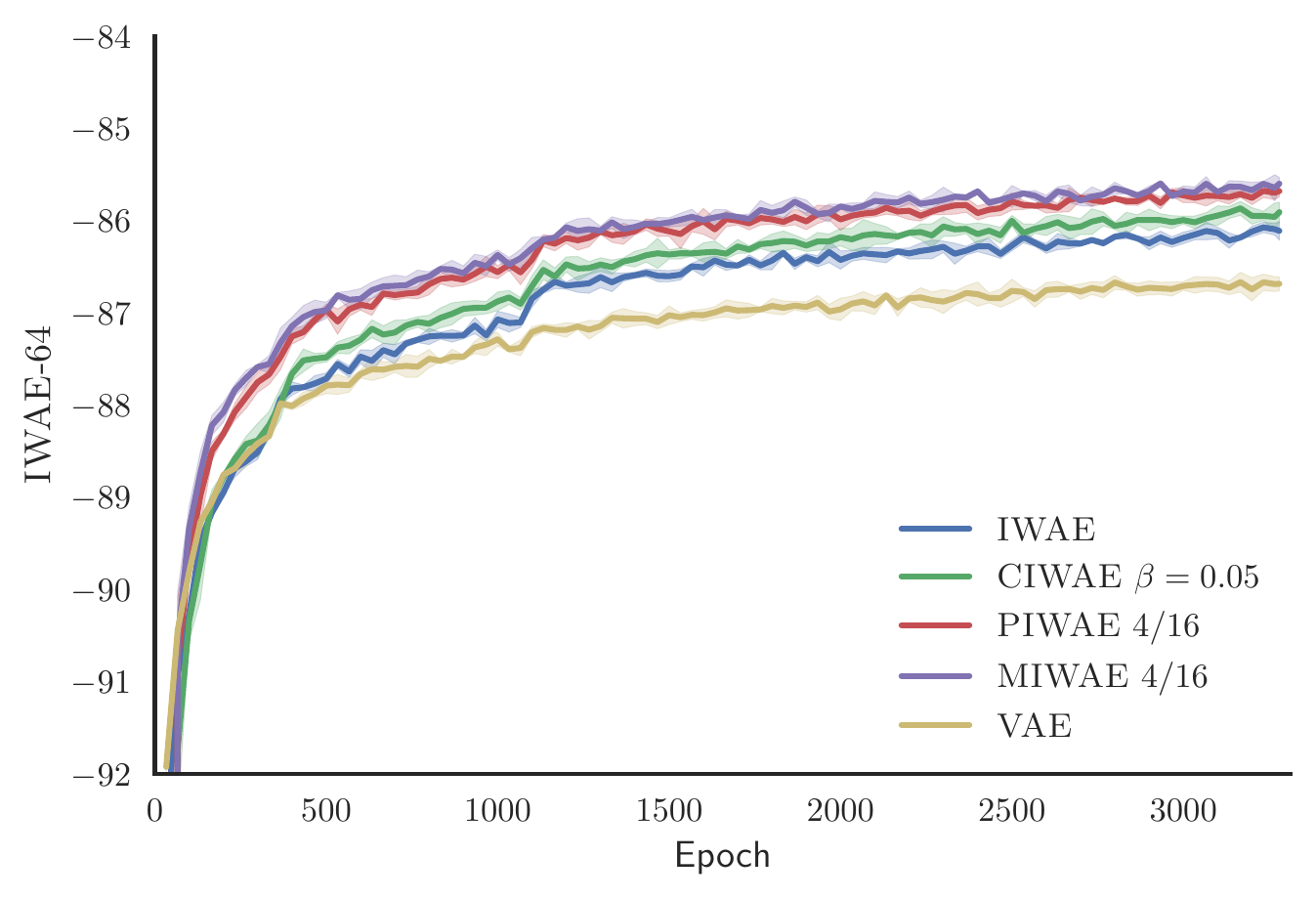}
        \caption{\textsc{IWAE}$_{64}$ \label{fig-app:mnistexpt/convergence/iwae64}}
    \end{subfigure}
	\begin{subfigure}[b]{0.33\textwidth}
		\centering
		\includegraphics[width=\textwidth]{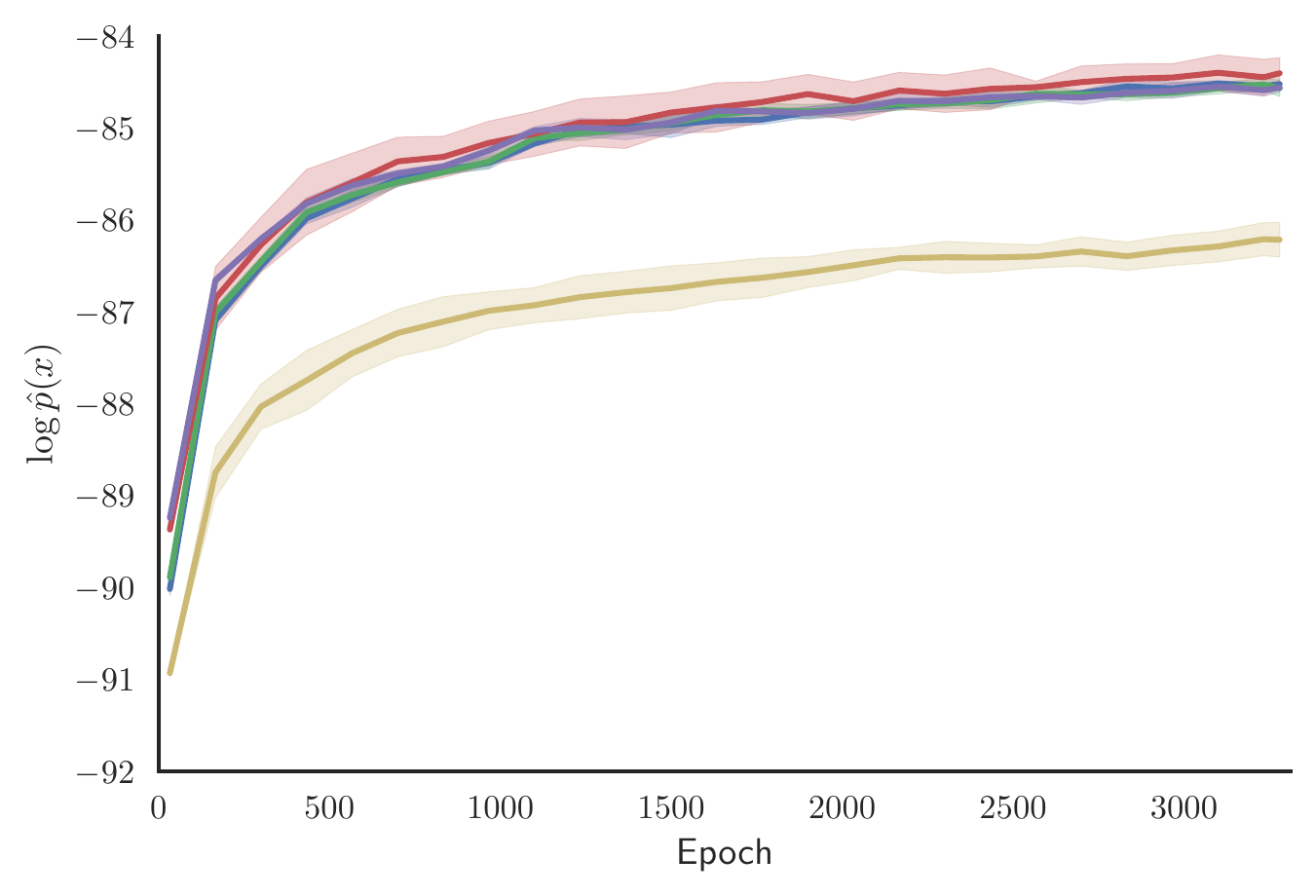}
		\caption{$\log \hat{p}(x)$ \label{fig-app:mnistexpt/convergence/logpx}}
	\end{subfigure}
	\begin{subfigure}[b]{0.33\textwidth}
		\centering
		\includegraphics[width=\textwidth]{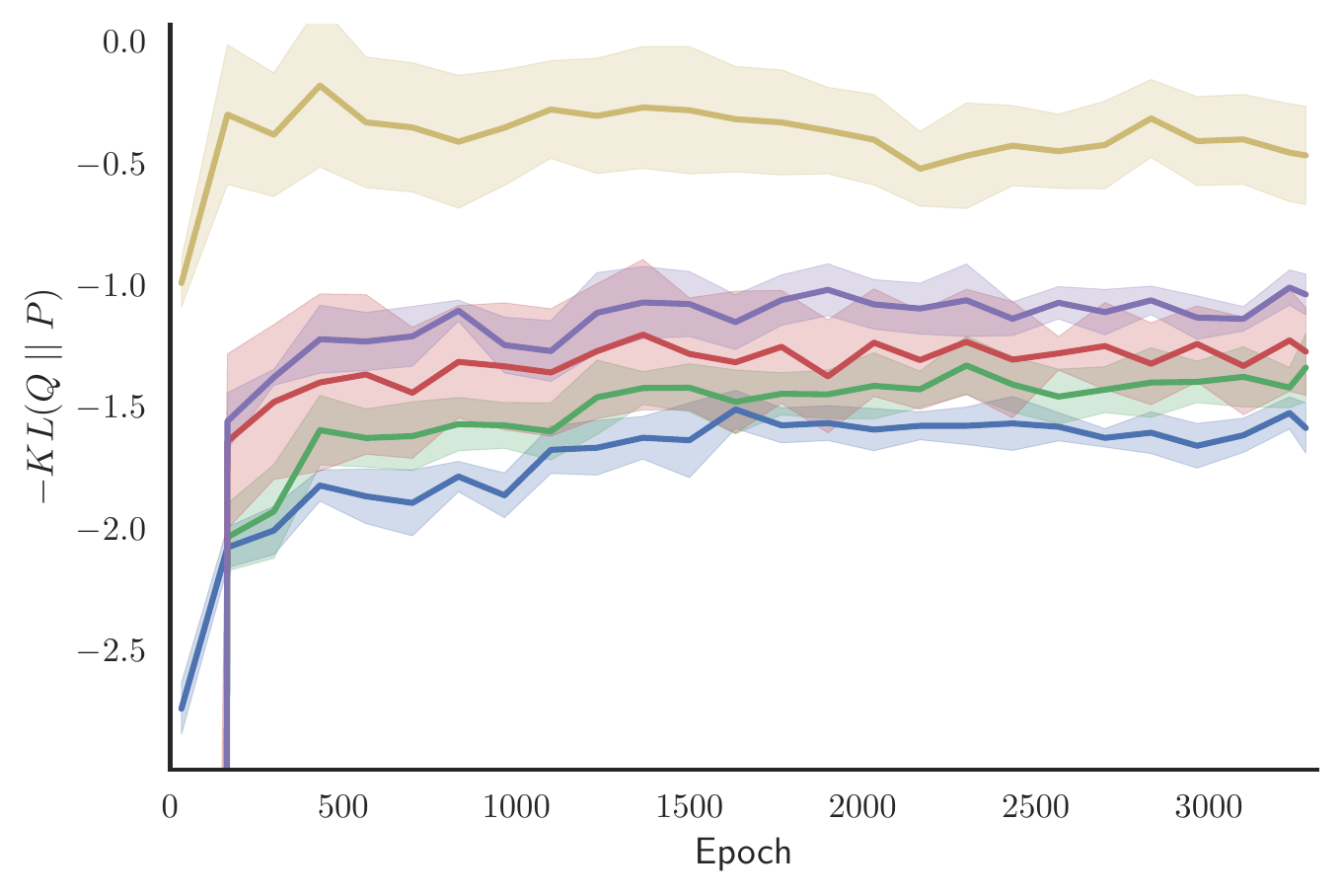}
		\caption{$-\mathrm{KL}(Q_{\phi}(z \given x) || P_{\theta}(z \given x))$ \label{fig-app:mnistexpt/convergence/kl}}
	\end{subfigure}
	\caption{Convergence of different evaluation metrics for each method.  Plotting conventions as per
		 Figure~\ref{fig:mnistexpt/convergence}.
		\vspace{-12pt}  \label{fig-app:mnistexpt/convergence}}
\end{figure*}
% !Tex root=./tb_icml_2018.tex

\section{Convergence of Toy Gaussian Problem}
\label{sec:app:toy-Gauss}

We finish by assessing the effect of the outlined changes in the quality
of the gradient estimates on the final optimization for our toy Gaussian problem.  Figure~\ref{fig:snr/hd_gaussian}
shows the convergence of running Adam~\citep{kingma2014adam} to optimize $\mu$, $A$, 
and $b$.  This suggests that the effects observed predominantly transfer to the overall
optimization problem.  Interestingly, setting $K=1$ and $M=1000$ gave the best performance
on learning not only the inference network parameters, but also the generative network
parameters.
\begin{figure*}[h]
	\includegraphics[width=\textwidth]{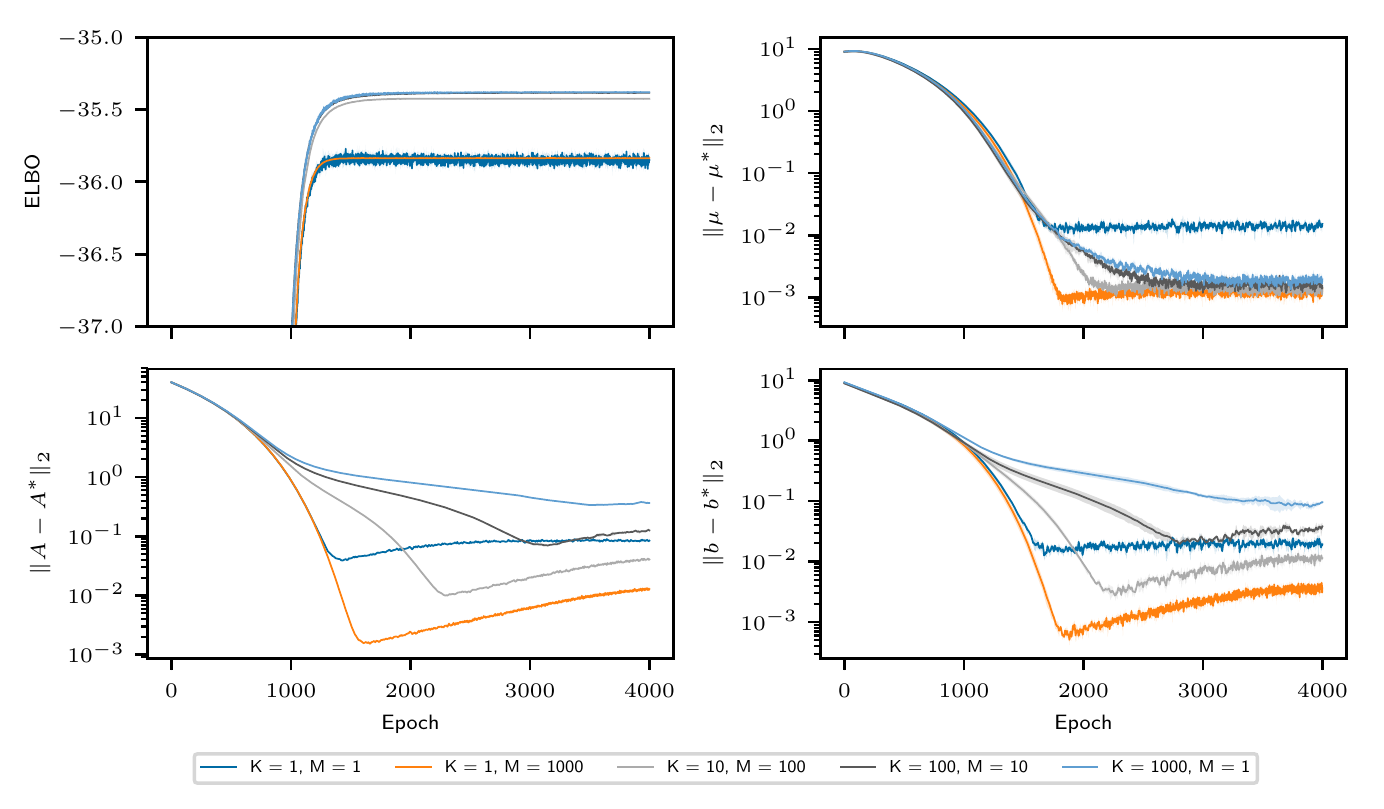}
	\caption{Convergence of optimization for different values of $K$ and $M$. 
		\emph{(Top, left)} $\ELBO_{\text{IS}}$ during training
		(note this represents a different metric for different $K$). \emph{(Top, right)} $L_2$ distance of the generative network parameters from the true maximizer. \emph{(Bottom)} $L_2$ distance of the inference network parameters from the true maximizer. Plots show means over $3$ repeats with $\pm 1$ standard deviation. Optimization is performed using the Adam algorithm with all parameters initialized by sampling from the uniform distribution on $[1.5, 2.5]$.}
	\label{fig:snr/hd_gaussian}
\end{figure*}

\section*{Acknowledgments}

TR and YWT are supported in part by the European Research Council under the European Union's Seventh Framework Programme (FP7/2007--2013) / ERC grant agreement no. 617071. 
TAL is supported by a Google studentship, project code DF6700.
MI is supported by the UK EPSRC CDT in Autonomous Intelligent Machines
and Systems.
CJM is funded by a DeepMind Scholarship.
FW is supported under DARPA PPAML through the U.S. AFRL
under Cooperative Agreement FA8750-14-2-0006, Sub Award number 61160290-111668.

\clearpage
\bibliography{refs}

%
%\section*{Acknowledgements}
%
%Tom Rainforth is supported by a BP industrial grant. Robert Cornish is supported by an NVIDIA scholarship. Frank Wood is supported under DARPA PPAML through the U.S. AFRL under Cooperative Agreement FA8750-14-2-0006, Sub Award number 61160290-111668.

\end{document}